%% file: main.tex
\documentclass[11pt]{article}

\input{misc/template}
\input{misc/macro}

\begin{document}

\title{Unbounded Density Ratio Estimation and \\ Its Application to Covariate Shift Adaptation}

\author{Ren-Rui Liu\( ^{1} \) \and Jun Fan\( ^{2} \) \and Lei Shi\( ^{3} \) \and Zheng-Chu Guo\( ^{1, \ast} \)}

\footnotetext[1]{School of Mathematical Sciences, Zhejiang University, Hangzhou 310058, China}
\footnotetext[2]{Department of Mathematics, Hong Kong Baptist University, Kowloon, Hong Kong}
\footnotetext[3]{School of Mathematical Sciences and Shanghai, Key Laboratory for Contemporary Applied Mathematics, Fudan University, Shanghai 200433, China}
\renewcommand{\thefootnote}{\fnsymbol{footnote}}
\footnotetext[1]{The corresponding author is Zheng-Chu Guo. Email address: \href{mailto:guozc@zju.edu.cn}{guozc@zju.edu.cn}.}
\renewcommand{\thefootnote}{\arabic{footnote}}

\date{}

\maketitle

\begin{abstract}
    This paper focuses on the problem of unbounded density ratio estimation---an understudied yet critical challenge in statistical learning---and its application to covariate shift adaptation. Much of the existing literature assumes that the density ratio is either uniformly bounded or unbounded but known exactly. These conditions are often violated in practice, creating a gap between theoretical guarantees and real-world applicability. In contrast, this work directly addresses unbounded density ratios and integrates them into importance weighting for effective covariate shift adaptation. We propose a three-step estimation method that leverages unlabeled data from both the source and target distributions: (1) estimating a relative density ratio; (2) applying a truncation operation to control its unboundedness; and (3) transforming the truncated estimate back into the standard density ratio. The estimated density ratio is then employed as importance weights for regression under covariate shift. We establish rigorous, non-asymptotic convergence guarantees for both the proposed density ratio estimator and the resulting regression function estimator, demonstrating optimal or near-optimal convergence rates. Our findings offer new theoretical insights into density ratio estimation and learning under covariate shift, extending classical learning theory to more practical and challenging scenarios.

    \keywords{covariate shift, learning theory, density ratio estimation, importance weighting}
\end{abstract}

\input{text/1_introduction}

\input{text/2_result}

\input{text/3_discussion}

\input{text/4_proof}

\input{text/5_acknowledgment}

\appendix
\input{text/6_lemma}

\bibliography{reference}

\end{document}

%% file: misc/template.tex
\usepackage{amsmath}
\usepackage{amssymb}
\usepackage{amsthm}
\usepackage{mathtools}
\usepackage{mathrsfs}

\usepackage{abstract}
\usepackage{appendix}
\usepackage{titlesec}

\usepackage{natbib}
\bibpunct{(}{)}{;}{a}{,}{,}

\usepackage{hyperref}
\hypersetup{
    colorlinks=true,
    linkcolor={blue},
    urlcolor={black},
    citecolor={blue},
    linktoc=all,
}

\usepackage[
    left=1in,
    right=1in,
    top=1in,
    bottom=1in,
    headheight=0pt,
    headsep=5pt
]{geometry}
\usepackage{caption}

\usepackage[shortlabels,inline]{enumitem}

\usepackage{array}
\usepackage{booktabs}

\usepackage{graphicx}

\usepackage{xparse}
\usepackage{xstring}

\makeatletter
\newtoks\version
\newtoks\institute
\renewcommand{\@maketitle}{%
    \newpage
    \null
    \vskip 2em%
    \begin{center}%
        {\LARGE\bfseries \@title \par}%
        \vskip 1.5em%
            {\large
                \lineskip .5em%
                \begin{tabular}[t]{c}%
                    \@author \\[1ex]
                \end{tabular}\par}%
        \the\institute%
        \vskip 0.5ex%
    \end{center}%
    \par
}
\makeatother

\titleformat{\section}[block]
{\normalfont\LARGE\bfseries}
{\thesection}{1em}{}
\titleformat{\subsection}[block]
{\normalfont\Large\bfseries}
{\thesubsection}{1em}{}
\titleformat{\subsubsection}[block]
{\normalfont\large\bfseries}
{\thesubsubsection}{1em}{}

\AtBeginDocument{
    \setlength{\jot}{8pt}
    \setlength{\abovedisplayskip}{8pt}
    \setlength{\belowdisplayskip}{8pt}
    \setlength{\abovedisplayshortskip}{6pt}
    \setlength{\belowdisplayshortskip}{6pt}

    \RequirePackage[flushmargin]{footmisc}
    \setlength{\footnotesep}{12pt}
}

\theoremstyle{plain}
\newtheorem{theorem}{Theorem}
\newtheorem{proposition}{Proposition}
\newtheorem{lemma}{Lemma}
\newtheorem{corollary}{Corollary}
\newtheorem{definition}{Definition}
\newtheorem{assumption}{Assumption}

\newtheorem*{remark}{\normalfont\textbf{Remark}}

\newcommand{\keywords}[1]{\vskip 2ex\par\noindent\normalfont{\bfseries Keywords:} #1}
\newcommand{\email}[1]{\href{mailto:#1}{\nolinkurl{#1}}}

%% file: misc/macro.tex
\newcommand{\ROM}[1]{\uppercase\expandafter{\romannumeral #1}}

\newcommand{\secref}[1]{Section~\ref{#1}}
\newcommand{\figref}[1]{Figure~\ref{#1}}
\newcommand{\tabref}[1]{Table~\ref{#1}}
\newcommand{\thmref}[1]{Theorem~\ref{#1}}
\newcommand{\proref}[1]{Proposition~\ref{#1}}
\newcommand{\lemref}[1]{Lemma~\ref{#1}}
\newcommand{\colref}[1]{Corollary~\ref{#1}}
\newcommand{\defref}[1]{Definition~\ref{#1}}
\newcommand{\aspref}[1]{Assumption~\ref{#1}}

\newcommand{\neweqref}[2]{\textup{(\hyperref[#1]{\textbf{#2}})}}

\newenvironment{proofof}[1]{
    \begin{proof}[\normalfont\textbf{Proof of #1}]
        }{
    \end{proof}
}

\newcommand{\dd}{\mathrm{d}}
\newcommand{\ee}{\mathrm{e}}
\newcommand{\eqspace}{\hphantom{{}={}\!}}  

\newcommand{\bbR}{\mathbb{R}}

\newcommand{\calH}{\mathcal{H}}
\newcommand{\calL}{\mathcal{L}}
\newcommand{\calN}{\mathcal{N}}
\newcommand{\calX}{\mathcal{X}}
\newcommand{\calY}{\mathcal{Y}}

\newcommand{\bsy}{\boldsymbol{y}}

\newcommand{\LE}{\left}
\newcommand{\MID}{\middle}
\newcommand{\RI}{\right}

\newcommand{\ind}[1]{\mathbf{1}_{ #1 }}

\makeatletter
\DeclarePairedDelimiter{\DVert}{\lVert}{\rVert}
\DeclarePairedDelimiter{\Dangle}{\langle}{\rangle}
\DeclarePairedDelimiter{\Dbrace}{\lbrace}{\rbrace}
\DeclarePairedDelimiter{\Dbrack}{\lbrack}{\rbrack}
\DeclarePairedDelimiter{\Dparen}{\lparen}{\rparen}
\newcommand{\norm}[1]{%
  \if@display
    \DVert*{#1}%
  \else
    \DVert{#1}%
  \fi
}
\newcommand{\inner}[1]{%
  \if@display
    \Dangle*{#1}%
  \else
    \Dangle{#1}%
  \fi
}
\newcommand{\family}[1]{%
  \if@display
    \Dbrace*{#1}%
  \else
    \Dbrace{#1}%
  \fi
}
\newcommand{\Tr}[1]{%
  \if@display
    \operatorname{Tr} \Dparen*{#1}%
  \else
    \operatorname{Tr} \Dparen{#1}%
  \fi
}
\NewDocumentCommand{\EE}{o m}{%
  \if@display
    \IfNoValueTF{#1}
      {\operatorname{\mathbb{E}} \Dbrack*{#2}}
      {\operatorname{\mathbb{E}}_{#1} \Dbrack*{#2}}%
  \else
    \IfNoValueTF{#1}
      {\operatorname{\mathbb{E}} \Dbrack{#2}}
      {\operatorname{\mathbb{E}}_{#1} \Dbrack{#2}}%
  \fi
}
\NewDocumentCommand{\PP}{o m}{%
  \if@display
    \IfNoValueTF{#1}
      {\operatorname{\mathrm{P}} \Dparen*{#2}}
      {\operatorname{\mathrm{P}}_{#1} \Dparen*{#2}}%
  \else
    \IfNoValueTF{#1}
      {\operatorname{\mathrm{P}} \Dparen{#2}}
      {\operatorname{\mathrm{P}}_{#1} \Dparen{#2}}%
  \fi
}
\makeatother

\makeatletter
\newcommand{\biggg}{\bBigg@\thr@@}
\newcommand{\Biggg}{\bBigg@{3.5}}
\makeatother

\let\oldforall\forall
\renewcommand{\forall}{\mathrel{\oldforall}}
\let\oldexists\exists
\renewcommand{\exists}{\mathrel{\oldexists}}

\DeclareMathOperator*{\argmin}{arg\,min}

\newcommand{\rhoS}{\rho^{\textnormal{S}}}
\newcommand{\rhoT}{\rho^{\textnormal{T}}}
\newcommand{\rhoR}{\rho^{\textnormal{R}}}
\newcommand{\rhoSX}{\rhoS_{\calX}}
\newcommand{\rhoTX}{\rhoT_{\calX}}
\newcommand{\rhoRX}{\rhoR_{\calX}}

\newcommand{\nU}{n_{\theta}}
\newcommand{\nL}{n_{f}}

\newcommand{\LS}{L_{\textnormal{S}}}
\newcommand{\hLS}{\widehat{L}_{\textnormal{S}}}
\newcommand{\LT}{L_{\textnormal{T}}}
\newcommand{\hLT}{\widehat{L}_{\textnormal{T}}}
\newcommand{\LTlam}{L_{\textnormal{T}, \lambda}}
\newcommand{\LR}{L_{\textnormal{R}}}
\newcommand{\hLR}{\widehat{L}_{\textnormal{R}}}
\newcommand{\LRmu}{L_{\textnormal{R}, \mu}}
\newcommand{\hLRmu}{\widehat{L}_{\textnormal{R}, \mu}}
\newcommand{\LW}{L_{\textnormal{W}}}
\newcommand{\hLW}{\widehat{L}_{\textnormal{W}}}
\newcommand{\LWlam}{L_{\textnormal{W}, \lambda}}
\newcommand{\hLWlam}{\widehat{L}_{\textnormal{W}, \lambda}}

\newcommand{\gmu}{g_{\mu}}
\newcommand{\glam}{g_{\lambda}}

\newcommand{\phimu}{\phi_{\mu}}
\newcommand{\hphi}{\widehat{\phi}}
\newcommand{\hphimu}{\hphi_{\mu}}
\newcommand{\hphimuD}{\hphi_{\mu, D}}

\newcommand{\htheta}{\widehat{\theta}}

\newcommand{\hthetamuD}{\htheta_{\mu, D}}

\newcommand{\Kx}{K_{x}}
\newcommand{\Kxp}{K_{x'}}

\newcommand{\frho}{f_{\rho}}
\newcommand{\flam}{f_{\lambda}}
\newcommand{\hflam}{\widehat{f}_{\lambda}}

\newcommand{\urho}{u_{\rho}}

\newcommand{\hSWa}{\widehat{S}_{\textnormal{W}}^{\ast}}

\newcommand{\ranH}[1]{[\calH]^{#1}}

%% file: text/1_introduction.tex
\section{Introduction}
\label{sec: introduction}

Supervised learning builds predictive models using labeled training data drawn from a source domain. Classical statistical learning theory often assumes that this training data is representative of the target domain, i.e., the environment where the model will be deployed \citep{Vapnik1998StatisticalLT}. However, this assumption rarely holds in real-world applications, where the data distribution frequently shifts between the source domain and the target domain. Temporal evolution, sampling biases, and contextual variations frequently induce mismatches between the source and target data distributions, thereby challenging the generalization ability of learned models. This paper focuses specifically on \emph{covariate shift}, a common distribution shift scenario in which the input (covariate) distributions differ between the source and target domains, while the conditional distribution of the output given the input remains invariant. Such shifts arise naturally in practice; for example, medical datasets collected from different hospitals may exhibit varying demographic compositions (covariate distributions), while the diagnostic relationship between symptoms and disease (conditional distribution) remains unchanged \citep{Finlayson2021ClinicianDS}.

To mitigate the impact of covariate shift, \emph{importance weighting} \citep{Shimodaira2000ImprovingPI} has become a widely adopted strategy. It compensates for distributional discrepancy by reweighting source data points according to the \emph{density ratio}, formally defined as the Radon--Nikodym derivative of the target distribution with respect to the source distribution. Extensive research has been devoted to estimating these density ratios \citep{Sugiyama2012DensityRE, Gizewski2022RegularizationUD, Feng2024DeepNQ, Xu2025EstimatingUD, Zheng2026ErrorAD, Gizewski2026ImpactSK}, and many theoretical analyses have examined the behavior of importance-weighted learning procedures \citep{Gogolashvili2023WhenIW, Fan2025SpectralAU, Liu2025SpectralAM, Ma2023OptimallyTC, Gizewski2022RegularizationUD, Gizewski2026ImpactSK}. Nevertheless, a large portion of existing work relies on the assumption that the density ratio is either uniformly bounded, or unbounded but known exactly. Both conditions frequently fail in practical settings, creating a significant gap between theoretical guarantees and real-world applications. In this work, we aim to estimate unbounded density ratios and incorporate them into importance weighting to achieve effective covariate shift adaptation.

We formulate our analysis within a regression setting employing squared loss. Consider a labeled training dataset \( \family{ (x_{i}, y_{i}) }_{i=1}^{\nL} \) sampled independently from an unknown source distribution \( \rhoS_{\calX \times \calY} \), where \( x_{i} \in \calX \) denotes the input from a compact metric space and \( y_{i} \in \calY \subseteq \bbR \) represents the output. Our objective is to learn a predictor that performs well on a target distribution. For a predictor \( f \), its target expected risk is defined as
\begin{equation}
    \label{eq: target expected risk}
    \EE[(x, y) \sim \rhoT_{\calX \times \calY}]{(y - f(x))^{2}},
\end{equation}
where \( \rhoT_{\calX \times \calY} \) denotes the joint target distribution over inputs and outputs. It is well-known that the minimizer of this risk over all measurable functions is the regression function
\[
    \frho(x) = \int y \, \dd \rho_{\calY \mid \calX}(y \mid x),
\]
where \( \rho_{\calY \mid \calX} \) is the conditional distribution of \( y \) given \( x \), and \( \rhoT_{\calX \times \calY} = \rho_{\calY \mid \calX} \cdot \rhoTX \). Hence, estimating \( \frho \) becomes our primary goal. Since the target expected risk is inaccessible due to the unknown distribution \( \rhoT_{\calX \times \calY} \), we employ the empirical risk as a surrogate:
\begin{equation}
    \label{eq: raw empirical risk}
    \frac{1}{\nL} \sum_{i=1}^{\nL} (y_{i} - f(x_{i}))^{2}.
\end{equation}
In the standard setting where \( \rhoS_{\calX \times \calY} = \rhoT_{\calX \times \calY} \), this empirical risk \eqref{eq: raw empirical risk} is an unbiased estimator of the target expected risk \eqref{eq: target expected risk}, and minimizing it typically yields a consistent estimator of \( \frho \). Under \emph{covariate shift}, however, the conditional distribution \( \rho_{\calY \mid \calX} \) remains invariant, but the marginal input distributions differ. That is,
\[
    \rhoS_{\calX \times \calY} = \rho_{\calY \mid \calX} \cdot \rhoSX,
    \quad \rhoT_{\calX \times \calY} = \rho_{\calY \mid \calX} \cdot \rhoTX
\]
with \( \rhoSX \ne \rhoTX \). Consequently, the empirical risk \eqref{eq: raw empirical risk} becomes a biased estimator, and its minimizer may generalize poorly to \( \rhoT_{\calX \times \calY} \). To correct this bias, we adopt the \emph{importance weighting} strategy \citep{Shimodaira2000ImprovingPI}, which reweights each source data point to align the source and target distributions:
\begin{equation}
    \label{eq: modified empirical risk}
    \frac{1}{\nL} \sum_{i=1}^{\nL} \theta(x_{i}) (y_{i} - f(x_{i}))^{2}.
\end{equation}
Here, \( \theta = \dd \rhoTX / \dd \rhoSX \) is the Radon--Nikodym derivative (commonly referred to as the \emph{density ratio}) of the target with respect to the source input distribution. This modified empirical risk \eqref{eq: modified empirical risk} is again an unbiased estimator of the target expected risk \eqref{eq: target expected risk}.

In practice,  \( \theta \) is typically unknown and must be estimated from available data. Suppose that, in addition to the labeled training dataset \( \family{ (x_{i}, y_{i}) }_{i=1}^{\nL} \), we have access to unlabeled samples \( \family{ x_{j} }_{j=1}^{\nU} \) and \( \family{ x'_{j} }_{j=1}^{\nU} \) drawn from \( \rhoSX \) and \( \rhoTX \), respectively. Using these data, we construct an estimator \( \htheta \) for the density ratio (see \secref{sec: result} for details), leading to the importance-weighted empirical risk
\begin{equation}
    \label{eq: importance-weighted empirical risk}
    \frac{1}{\nL} \sum_{i=1}^{\nL} \htheta(x_{i}) (y_{i} - f(x_{i}))^{2}.
\end{equation}
Note that density ratios can be unbounded, making their estimation challenging. For example, if \( \calX = [0, 1] \), with \( \rhoSX \) uniform and \( \rhoTX \) having density \( -\log(x) \), then \( \theta(x) = -\log(x) \to \infty \) as \( x \to 0 \).

We minimize the importance-weighted empirical risk \eqref{eq: importance-weighted empirical risk} to obtain an estimator of the regression function \( \frho \). A popular choice is kernel ridge regression (KRR). Let \( K \colon \calX \times \calX \to \bbR \) be a continuous, symmetric, positive semi-definite Mercer kernel, which generates a reproducing kernel Hilbert space (RKHS) \( \calH \) with the reproducing property:
\[
    f(x) = \inner{ f, K(\cdot, x) }_{\calH},
    \quad \forall f \in \calH.
\]
We denote the uniform bound of the kernel over the compact input space \( \calX \) as \( \sup_{x \in \calX} K(x, x) = \kappa^{2} < \infty \). The KRR estimator is then defined as
\[
    \hflam^{\text{krr}} = \argmin_{f \in \calH} \frac{1}{\nL} \sum_{i=1}^{\nL} \htheta(x_{i}) (y_{i} - f(x_{i}))^{2} + \lambda \norm{ f }_{\calH}^{2}.
\]
As shown in prior work (e.g., \citealp{Vito2005RiskBR}), this problem admits the closed-form solution
\begin{equation}
    \label{eq: KRR}
    \hflam^{\text{krr}} = (\hLW + \lambda I)^{-1} \, \hSWa \, \bsy,
\end{equation}
where \( \hLW \) is the importance-weighted empirical integral operator:
\[
    \hLW f = \frac{1}{\nL} \sum_{i=1}^{\nL} \htheta(x_{i}) f(x_{i}) \, K(\cdot, x_{i}),
\]
and \( \hSWa \) is the adjoint of the importance-weighted sampling operator:
\[
    \hSWa \, \bsy = \frac{1}{\nL} \sum_{i=1}^{\nL} \htheta(x_{i}) y_{i} \, K(\cdot, x_{i}),
    \quad \bsy = (y_{1}, \dots, y_{\nL})^{\top}.
\]

The KRR estimator \eqref{eq: KRR} belongs to a broader family of regularized learning algorithms known as \emph{spectral algorithms}, which are formally defined in \eqref{eq: spectral algorithm}. These algorithms, originally developed for solving ill-posed linear inverse problems \citep{Engl2015RegularizationIP}, have been adapted to the regression setting by exploiting the connections between learning theory and inverse problems \citep{Vito2005LearningFE, Gerfo2008SpectralAS, Bauer2007RegularizationAL}. Spectral algorithms construct estimators via carefully designed \emph{filter functions}. These functions perform regularization by selectively amplifying the dominant eigencomponents of the integral operator while suppressing the less influential ones.
\begin{definition}[Filter functions]
    \label{def: filter}
    A family of functions \( \glam \colon [0, \kappa^{2}] \to [0, \infty) \), parameterized by \( \lambda > 0 \), constitutes filter functions if:
    \begin{itemize}
        \item There exists \( E \ge 0 \) such that for all \( c \in [0, 1] \):
              \begin{equation}
                  \label{eq: glam}
                  \sup_{t \in [0, \kappa^{2}]} t^{c} \glam(t) \le E \lambda^{c - 1}.
              \end{equation}
        \item There exist \( \tau \ge 1 \) and \( F \ge 0 \) such that for all \( c \in [0, \tau] \):
              \begin{equation}
                  \label{eq: 1 - t glam}
                  \sup_{t \in [0, \kappa^{2}]} t^{c} |1 - t \glam(t)| \le F \lambda^{c}.
              \end{equation}
    \end{itemize}
\end{definition}
Condition \eqref{eq: glam} ensures the regularized inverse remains bounded, guaranteeing numerical stability. Condition \eqref{eq: 1 - t glam} controls the approximation error by requiring the residual \( |1 - t \glam(t)| \) to decay at a rate governed by \( \lambda \). The parameter \( \tau \), known as the qualification of the regularization method, determines the maximum degree of source smoothness that the algorithm can effectively handle. For a given filter function \( \glam \), the importance-weighted spectral algorithm associated with some \( \htheta \) is defined as
\begin{equation}
    \label{eq: spectral algorithm}
    \hflam = \glam(\hLW) \, \hSWa \, \bsy,
    \quad \bsy = (y_{1}, \dots, y_{\nL})^{\top},
\end{equation}
where \( \glam \) acts on the eigenvalues of \( \hLW \), treating it as a positive definite operator on \( \calH \). We note that the unboundedness of the density ratio \( \htheta \) may lead to unbounded \( \hLW \). However, under appropriate restrictions on the density ratio (e.g., \aspref{asp: moment theta} and \aspref{asp: source phi}), the upper bound for the eigenvalues of \( \hLW \) converges in probability to \( \kappa^{2} \) as the sample sizes \( \nU, \nL \to \infty \). Therefore, we restrict our analysis to sampling scenarios where \( \glam(\hLW) \) is well-defined, and establish our convergence results with high probability.

The spectral algorithm framework encompasses various regularization methods, including:
\begin{itemize}
    \item Kernel ridge regression:
          \( \glam^{\text{krr}}(t) = (t + \lambda)^{-1} \), with qualification \( \tau = 1 \) and constants \( E = F = 1 \); here \( \lambda \) serves as the regularization parameter.

    \item Early-stopped gradient flow:
          \( \glam^{\text{gf}}(t) = t^{-1}(1 - \ee^{-t/\lambda}) \), which achieves arbitrary qualification \( \tau \ge 1 \) with \( E = 1, F = (\tau / \ee)^{\tau} \); the stopping time corresponds to \( 1/\lambda \).

    \item Spectral cutoff:
          \( \glam^{\text{cut}}(t) = t^{-1} \ind{t \ge \lambda} \), with arbitrary \( \tau \ge 1 \) and \( E = F = 1 \); the cutoff threshold is \( \lambda \).
\end{itemize}

This paper addresses the challenges of covariate shift by proposing an integrated framework that combines density ratio estimation with importance-weighted spectral algorithms. Our main contributions are summarized as follows:
\begin{enumerate}
    \item We develop a three-step procedure to estimate density ratios that may be unbounded. This approach first estimates the relative density ratio \citep{Yamada2013RelativeDE}, then applies truncation to handle unboundedness, and finally transforms it back to the standard density ratio. The estimated density ratio is then employed as importance weights for regression under covariate shift.

    \item We establish convergence guarantees for both the density ratio estimator and the final regression function estimator. Our analysis explicitly characterizes how the accuracy of the estimated density ratio governs the overall performance of covariate shift adaptation. A key assumption is that the unlabeled sample size, used for density ratio estimation, grows polynomially with the labeled sample size used for regression. Under this condition, the regression function estimator achieves near-optimal convergence rates within an interval of smoothness parameters whose width is determined by the polynomial order; a higher order expands this optimality range. These results provide new insights into the interplay between density ratio estimation accuracy and learning performance under covariate shift.
\end{enumerate}

The remainder of this paper is organized as follows: \secref{sec:  result} presents the necessary assumptions, describes our method for estimating unbounded density ratios, and states the main theoretical results; \secref{sec: discussion} provides a literature review and comparative analysis with existing works; \secref{sec: proof} contains the proofs of the main theorems, with auxiliary lemmas deferred to \hyperref[sec: lemma]{Appendix}.

%% file: text/2_result.tex
\section{Main Results}
\label{sec: result}

In this section, we describe our method for estimating an unbounded density ratio. We then incorporate this estimated density ratio into an importance weighting scheme to learn the regression function. We establish convergence guarantees for both the density ratio estimator and the regression function estimator.

\subsection{Unbounded Density Ratio Estimation}

In the following, we estimate the unbounded density ratio \( \theta \) using unlabeled data \( \family{ x_{j} }_{j=1}^{\nU} \) from the source distribution \( \rhoSX \) and \( \family{ x'_{j} }_{j=1}^{\nU} \) from the target distribution \( \rhoTX \). We begin by presenting key assumptions under which the density ratio estimator is consistent and well-behaved. First, we impose a moment condition on the density ratio \( \theta \) with respect to the source distribution \( \rhoSX \).
\begin{assumption}[Moment condition of \( \theta \)]
    \label{asp: moment theta}
    Let \( \theta = \dd \rhoTX / \dd \rhoSX \) denote the density ratio. There exists \( M \in (2, \infty] \) such that for all \( 2 < m \le M \),
    \[
        \int_{\calX} \theta^{m}(x) \, \dd \rhoSX(x) \le \varXi_{m} < \infty
    \]
    holds for some \( \varXi_{m} > 0 \).
\end{assumption}
\aspref{asp: moment theta} encompasses most existing work on density ratio estimation and importance weighting analysis. This includes cases where the density ratio is uniformly bounded \citep{Sugiyama2012DensityRE, Gizewski2022RegularizationUD, Myleiko2025RecoveringRD, Gizewski2026ImpactSK}, sub-exponential \citep{Xu2025EstimatingUD}, or satisfies the Bernstein-type condition \citep{Gogolashvili2023WhenIW, Fan2025SpectralAU, Liu2025SpectralAM}, which correspond to \( M = \infty \) in our assumption. It also covers cases where the density ratio has a finite moment of order higher than 2 \citep{Feng2024DeepNQ}, corresponding to \( 2 < M < \infty \).

Despite the moment condition, directly estimating an unbounded density ratio within a reproducing kernel Hilbert space remains problematic, since RKHSs consist exclusively of bounded functions. To address this limitation, we employ the concept of the \emph{relative density ratio} \citep{Yamada2013RelativeDE}. The core idea is to introduce a mixture of the source and target distributions. Specifically, for a mixing coefficient \( \alpha \in (0, 1) \), we define the mixture distribution \( \rhoRX \) as
\[
    \rhoRX = (1 - \alpha) \rhoSX + \alpha \rhoTX.
\]
The relative density ratio \( \phi \) is then defined as
\begin{equation}
    \label{eq: phi}
    \phi(x)
    = \frac{\dd \rhoTX(x)}{\dd \rhoRX(x)}
    = \frac{\theta(x)}{\alpha \theta(x) + (1 - \alpha)}.
\end{equation}
A key advantage of this formulation is that \( \phi \) is bounded in \( [0, 1 / \alpha] \), making it a more suitable candidate for estimation within an RKHS. Furthermore, we can recover the standard density ratio:
\begin{equation}
    \label{eq: recover theta}
    \theta(x) = \frac{(1 - \alpha) \phi(x)}{1 - \alpha \phi(x)}.
\end{equation}
As illustrated in \figref{fig: relative}, the relative density ratio typically exhibits smoother behavior compared to the standard density ratio, effectively mitigating sharp peaks and heavy tails.

\begin{figure}
    \centering
    \includegraphics[width=2.5in, height=2.5in]{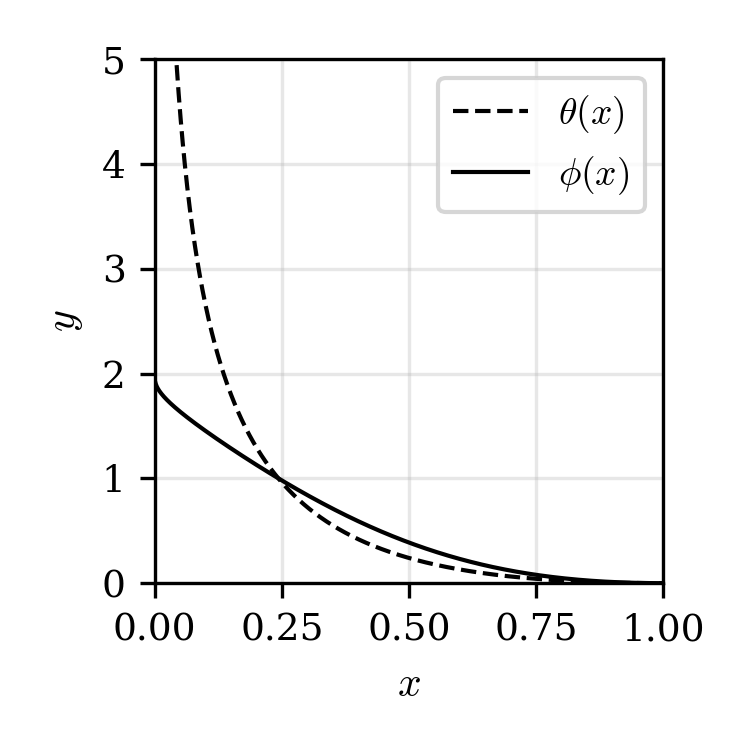}
    \includegraphics[width=2.5in, height=2.5in]{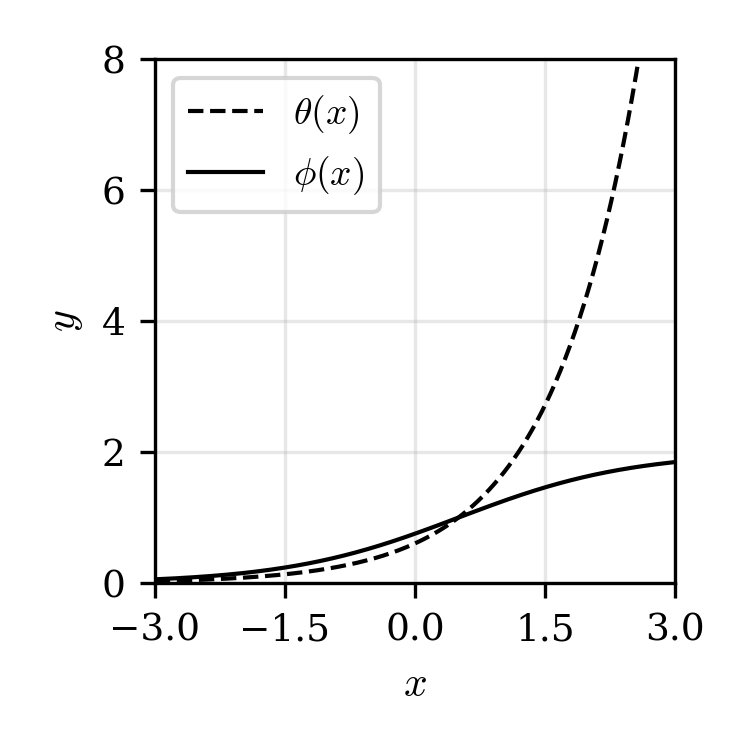}
    \caption{A comparison between the standard density ratio \( \theta \) and the relative density ratio \( \phi \). The mixing coefficient is set to \( \alpha = 1 / 2 \). Left: the source distribution \( \rhoSX \) is uniform on \( [0, 1] \), while the target distribution \( \rhoTX \) has a density proportional to \( (\log x)^{2} \). Right: the source and target distributions are \( \calN(0, 1) \) and \( \calN(1, 1) \), respectively.}
    \label{fig: relative}
\end{figure}

Our strategy is to estimate the unbounded density ratio \( \theta \) indirectly through a bounded relative density ratio \( \phi \) in the RKHS \( \calH \). To this end, we first define the necessary operators and then develop the estimation procedure. First, we introduce the integral operator \( \LR \) with respect to the mixture distribution \( \rhoRX \):
\[
    \LR \, f = \int_{\calX} f(x) \, K(\cdot, x) \, \dd \rhoRX(x).
\]
The relative density ratio \( \phi \) satisfies the following key relation:
\begin{equation}
    \LR \, \phi
    = \int_{\calX} \phi(x) \, K(\cdot, x) \, \dd \rhoRX(x)
    = \int_{\calX} K(\cdot, x) \, \dd \rhoTX(x)
    = \LT \, \ind{\calX},
\end{equation}
where \( \ind{\calX} \) denotes the constant unit function, and \( \LT \) denotes the integral operator with respect to \( \rhoTX \):
\[
    \LT \, f = \int_{\calX} f(x) \, K(\cdot, x) \, \dd \rhoTX(x).
\]
Given samples \( \family{ x_{j} }_{j=1}^{\nU} \) and \( \family{ x'_{j} }_{j=1}^{\nU} \), we construct empirical versions of these operators:
\begin{align*}
    \hLR \, f & = \frac{1 - \alpha}{\nU} \sum_{j=1}^{\nU} f(x_{j}) \, K(\cdot, x_{j}) + \frac{\alpha}{\nU} \sum_{j=1}^{\nU} f(x'_{j}) \, K(\cdot, x'_{j}), \\
    \hLT \, f & = \frac{1}{\nU} \sum_{j=1}^{\nU} f(x'_{j}) \, K(\cdot, x'_{j}).
\end{align*}
Using these empirical operators and a filter function \( \gmu \) from \defref{def: filter} with regularization parameter \( \mu > 0 \), we estimate \( \phi \) via
\begin{equation}
    \label{eq: hphimu}
    \hphimu = \gmu(\hLR) \, \hLT \, \ind{\calX}.
\end{equation}
This estimator can be interpreted as performing kernel mean matching \citep{Gretton2008CovariateSK, Gizewski2022RegularizationUD, Gizewski2026ImpactSK} to estimate the density ratio between \( \rhoSX \) and \( \rhoRX \). Since the true relative density ratio \( \phi \) is nonnegative, we refine our estimate by taking the positive part; to handle potential unboundedness, we apply truncation with threshold \( D > 0 \):
\begin{equation}
    \label{eq: hphimuD}
    \hphimuD(x) = \min \left\{ \max \family{ \hphimu(x), 0}, \frac{D}{\alpha D + (1 - \alpha)} \right\}.
\end{equation}
Note that \( D / (\alpha D + (1 - \alpha)) < 1 / \alpha \). Finally, we recover the standard density ratio using the inverse transformation \eqref{eq: recover theta}:
\begin{equation}
    \label{eq: hthetamuD}
    \hthetamuD(x) = \frac{(1 - \alpha) \hphimuD(x)}{1 - \alpha \hphimuD(x)}.
\end{equation}
The resulting estimator \( \hthetamuD \) is bounded in \( [0, D] \). The truncation threshold \( D \) will be chosen as a function of the sample size \( \nU \) to balance the error induced by regularization and by truncation.

To analyze the convergence behavior of our estimator, we need to characterize the smoothness of the relative density ratio \( \phi \). This is formalized through the following source condition.
\begin{assumption}[Source condition of \( \phi \)]
    \label{asp: source phi}
    Let \( \tau \) be the qualification parameter from \defref{def: filter}, and \( \phi = \dd \rhoTX / \dd \rhoRX \) be the relative density ratio. There exists \( \iota \in [1/2, \tau] \) such that
    \[
        \phi = \LR^{\iota} \, \varpi.
    \]
    holds for some \( \varpi \in \calL^{2}(\calX, \rhoRX) \).
\end{assumption}
This source condition is standard in the learning theory literature \citep{Smale2003EstimatingAE, Cucker2007LearningTA, Caponnetto2007OptimalRR}. To understand its implications, recall that for any nondegenerate finite measure \( \rho_{\calX} \) on \( \calX \), the associated integral operator \( L_{\rho_{\calX}} \) satisfies an important embedding property: according to \citet[Theorem~4.12]{Cucker2007LearningTA}, the operator \( L_{\rho_{\calX}}^{1/2} \) maps \( \calL^{2}(\calX, \rho_{\calX}) \) into the RKHS \( \calH \), with
\begin{equation}
    \label{eq: rho-norm / H-norm}
    \norm{ L_{\rho_{\calX}}^{1/2} \, f }_{\calH} = \norm{ f }_{\rho_{\calX}},
    \quad \forall f \in \calL^{2}(\calX, \rho_{\calX}),
\end{equation}
where \( \norm{ \cdot }_{\rho_{\calX}} \) denotes the norm in \( \calL^{2}(\calX, \rho_{\calX}) \). The lower bound \( \iota \ge 1/2 \) in \aspref{asp: source phi} thus ensures that \( \phi \) belongs to \( \calH \). Furthermore, using the definition of the mixture distribution \( \rhoRX \), we can bound the norm of \( \varpi \) as follows:
\begin{align*}
    \norm{ \varpi }_{\rhoRX}^{2}
     & = \int_{\calX} \varpi^{2}(x) \, \dd \rhoRX(x)
    = \int_{\calX} \varpi^{2}(x) \, ((1 - \alpha) \dd \rhoSX(x) + \alpha \dd \rhoTX(x))  \\
     & = (1 - \alpha) \norm{ \varpi }_{\rhoSX}^{2} + \alpha \norm{ \varpi }_{\rhoTX}^{2}
    \le \max \family{ \norm{ \varpi }_{\rhoSX}^{2}, \norm{ \varpi }_{\rhoTX}^{2} }.
\end{align*}

With these preliminaries established, we now present our first main result, which characterizes the convergence behavior of the relative density ratio estimator \( \hphimu \) and shows that its convergence rate is optimal in the minimax sense \citep{Caponnetto2007OptimalRR}.
\begin{theorem}
    \label{thm: relative density ratio}
    Suppose that \aspref{asp: source phi} holds with \( \iota \in [1/2, \tau] \), and set the regularization parameter as \( \mu = \nU^{-\varsigma} \) with \( \varsigma = 1 / (2 \iota + 1) \). Then, for any \( \delta \in (0, 1) \), with probability at least \( 1 - \delta \), when the sample size \( \nU \) is sufficiently large such that
    \begin{equation}
        \label{eq: sample size condition 1}
        \begin{aligned}
            \nU \ge 85 \kappa^{2} \LE( 1 + \log \frac{6 \kappa^{2} (\norm{ \LS } + \norm{ \LT } + 1)}{\min \family{ \norm{ \LS }, \norm{ \LT } } \cdot \delta} + \frac{2}{1 - \varsigma} \RI)^{\frac{2}{1 - \varsigma}},
        \end{aligned}
    \end{equation}
    with \( \norm{ \cdot } \) denoting the operator norm \( \norm{ \cdot }_{\calH \to \calH} \) on the RKHS \( \calH \), the relative density ratio estimator defined in \eqref{eq: hphimu} satisfies
    \[
        \norm{ \phi - \hphimu }_{\rhoRX} \le \varDelta_{\phi} \nU^{-\frac{\iota}{2 \iota + 1}} \log \frac{18}{\delta},
    \]
    where \( \varDelta_{\phi} \) is a constant given in \eqref{eq: Delta_phi}, being independent of \( \alpha \) (the mixing coefficient), \( \nU \), or \( \delta \).
\end{theorem}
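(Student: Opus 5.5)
The plan is the standard spectral-algorithm error decomposition for the inverse problem $\LR\phi = \LT\ind{\calX}$. I will work in $\calH$ with the identity $\norm{f}_{\rhoRX} = \norm{\LR^{1/2} f}_{\calH}$ for $f \in \calH$, which follows from \eqref{eq: rho-norm / H-norm}. Introduce the population-regularized target $\phimu = \gmu(\LR)\LR\phi$ and split
\[
\phi - \hphimu = (\phi - \phimu) + \bigl(\phimu - \gmu(\hLR)\hLR\phi\bigr) + \gmu(\hLR)\bigl(\hLR\phi - \hLT\ind{\calX}\bigr).
\]
The first term is the approximation error. Since $\phi = \LR^{\iota}\varpi$, we have $\phi - \phimu = (I - \LR\gmu(\LR))\LR^{\iota}\varpi$. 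Applying \eqref{eq: 1 - t glam} with exponent $\iota + 1/2 \le \tau + 1/2$ gives $\norm{\LR^{1/2}(\phi-\phimu)}_{\calH} \le F\mu^{\iota}\norm{\varpi}_{\rhoRX}$. If $\iota > \tau - 1/2$, I instead use the exponent $\tau$ and the embedding bound $\norm{\LR}\le\kappa^2$. With $\mu = \nU^{-1/(2\iota+1)}$ this is $O(\nU^{-\iota/(2\iota+1)})$.

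The third term is the sample error. The key observation is that $\hLR\phi - \hLT\ind{\calX}$ is an average of i.i.d.\ mean-zero terms, because $\LR\phi = \LT\ind{\calX}$:
\[
\hLR\phi - \hLT\ind{\calX} = \frac1{\nU}\sum_j (1-\alpha)\phi(x_j)K(\cdot,x_j) + \frac1{\nU}\sum_j \bigl(\alpha\phi(x'_j)-1\bigr)K(\cdot,x'_j),
\]
and both summands are bounded by $\kappa(1+\alpha\norm{\phi}_\infty)\le 2\kappa$, since $0\le\phi\le 1/\alpha$. A Bernstein inequality for Hilbert-valued variables, in the effective-dimension-free form (we have no capacity assumption), bounds $\norm{(\LR+\mu I)^{-1/2}(\hLR\phi-\hLT\ind{\calX})}_{\calH}$ by roughly $(1/(\nU\sqrt\mu) + \sqrt{\mathcal N_R(\mu)/\nU})\log(6/\delta)$. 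Using only the crude bound $\mathcal N_R(\mu)\le\kappa^2/\mu$, this gives $(\nU\mu)^{-1/2}\log(6/\delta)$, which with the chosen $\mu$ is again $\nU^{-\iota/(2\iota+1)}$. Crucially, these constants do not depend on $\alpha$.

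To pass from population to empirical operators, I use the standard operator concentration: with probability $1-\delta/3$, $\norm{(\LR+\mu I)^{1/2}(\hLR+\mu I)^{-1/2}}\le\sqrt2$ once $\nU\mu$ exceeds a logarithmic threshold. The sample-size condition \eqref{eq: sample size condition 1} is exactly what guarantees this. I expect the $\norm{\LS},\norm{\LT}$ terms to enter through a lower bound $\norm{\LR}\ge\min\{\norm{\LS},\norm{\LT}\}$, which is needed to control $\log$ of the effective-dimension proxy. Then
\[
\norm{\LR^{1/2}\gmu(\hLR)v} \le \sqrt2\,\norm{(\hLR+\mu)^{1/2}\gmu(\hLR)(\hLR+\mu)^{1/2}}\,\norm{(\LR+\mu)^{-1/2}v} \le 2\sqrt2 E\,\norm{(\LR+\mu)^{-1/2}v},
\]
by \eqref{eq: glam} with $c\in\{0,1\}$.

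The middle term is the main obstacle. It equals $(I-\gmu(\hLR)\hLR)\phi - (I-\gmu(\LR)\LR)\phi$, and for $\iota>1$ one must transfer the source power $\LR^{\iota}$ to $\hLR^{\iota}$. I would write $\phi = \LR^{\iota}\varpi$ and split $\LR^{\iota-1/2} = \hLR^{\iota-1/2} + (\LR^{\iota-1/2}-\hLR^{\iota-1/2})$. For $\iota-1/2\le1$, I would use operator monotonicity, $\norm{A^{s}-B^{s}}\le\norm{A-B}^{s}$. For larger exponents I would use the Lipschitz bound $\norm{A^{s}-B^{s}}\le s\kappa^{2(s-1)}\norm{A-B}$, combined with $\norm{\LR-\hLR}\lesssim\nU^{-1/2}\log(6/\delta)$ from Hilbert–Schmidt Bernstein. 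The residual $\mu^{1/2}\nU^{-1/2}$ (or $\mu^{\iota-1/2}\nU^{-(\iota-1/2)/2}$) terms are dominated by $\mu^{\iota}$ for the chosen $\varsigma$, possibly after enlarging constants.

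Finally, I would take a union bound over the three concentration events, each at level $\delta/3$ with internal $\log(6/\delta)$ factors, to produce $\log(18/\delta)$. Collecting constants into $\varDelta_\phi$ depending only on $\kappa,E,F,\tau,\iota,\norm{\varpi}_{\rhoRX}$ then yields the stated bound.
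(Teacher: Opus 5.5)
Your skeleton matches the paper's. You use a filter-residual bound for $\norm{\phi-\phimu}_{\rhoRX}$ and the comparison $P_2=\norm{\LRmu^{1/2}\hLRmu^{-1/2}}\le\sqrt{2}$ from the operator Bernstein inequality. For the residual term you split $\LR^{\iota-1/2}=\hLR^{\iota-1/2}+(\LR^{\iota-1/2}-\hLR^{\iota-1/2})$ and use \lemref{lem: A^c - B^c} with $\norm{\LR-\hLR}\lesssim\nU^{-1/2}$, which is exactly \proref{pro: P4}.

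The one structural difference is where $\phimu$ sits. You center the sample term at the true $\phi$. Then $\hLR\,\phi-\hLT\,\ind{\calX}$ is an exactly mean-zero average over the i.i.d.\ pairs $(x_j,x'_j)$, and the bias only reappears as a second copy of $\norm{\phi-\phimu}_{\rhoRX}$ inside your middle term. The paper keeps $\phimu$ in both places. Its $P_3$ therefore needs a three-way split with the bias piece $\norm{\LRmu^{-1/2}\LR(\phimu-\phi)}_{\calH}\le P_1$ and separate source/target concentration, and its $P_4$ carries an extra factor $\norm{\gmu(\LR)\LR}\le E$. Your arrangement is slightly cleaner and gives the same rate.

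The step that fails is your uniform bound on the summands, and it is exactly where the $\alpha$-independence in the statement is decided. Write $Z_j=(1-\alpha)\phi(x_j)K(\cdot,x_j)+(\alpha\phi(x'_j)-1)K(\cdot,x'_j)$. The second part is indeed bounded by $\kappa$. For the first part, $0\le\phi\le 1/\alpha$ only gives $\norm{(1-\alpha)\phi(x_j)K(\cdot,x_j)}_{\calH}\le\kappa(1-\alpha)/\alpha$, not $2\kappa$. Since the essential supremum of $\phi$ equals $1/\alpha$ whenever $\theta$ is unbounded, this really is of order $1/\alpha$. So your claim that these constants do not depend on $\alpha$ does not follow, and you do not reach a constant of the form \eqref{eq: Delta_phi}.

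The fix is to bound $\phi$ through the source condition, as the paper does for $\phimu$. Since $\iota\ge 1/2$,
$\sup_x|\phi(x)|\le\kappa\norm{\phi}_{\calH}\le\kappa\norm{\LR^{\iota-1/2}}\,\norm{\LR^{1/2}\varpi}_{\calH}\le\kappa^{2\iota}\norm{\varpi}_{\rhoRX}$.
Hence $\norm{\LRmu^{-1/2}Z_j}_{\calH}\le(\kappa^{2\iota+1}\norm{\varpi}_{\rhoRX}+\kappa)\mu^{-1/2}$, and \lemref{lem: concentration vector} then gives a constant in which $\alpha$ enters only through $\norm{\varpi}_{\rhoRX}$. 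The same inequality shows $\norm{\varpi}_{\rhoRX}\ge 1/(\alpha\kappa^{2\iota})$ when $\theta$ is unbounded, so $\alpha$ is relocated rather than removed. That relocated form is exactly what the theorem asserts.

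Smaller points:
\begin{itemize}
\item In the approximation step, the exponent needed in \eqref{eq: 1 - t glam} is $\iota$, not $\iota+1/2$. The identity $\norm{\LR^{1/2}h}_{\calH}=\norm{h}_{\rhoRX}$ absorbs the half power, so no case distinction at $\tau-1/2$ is needed. Using $\iota+1/2$ would require $\varpi\in\calH$ and would yield $\mu^{\iota+1/2}$.
\item Passing from $\norm{\LR^{1/2}\gmu(\hLR)v}_{\calH}$ to $\norm{\LRmu^{-1/2}v}_{\calH}$ uses $P_2$ twice, so the factor is $4E$, not $2\sqrt{2}E$.
\item The residual for $\iota\le 3/2$ is $\mu^{1/2}\nU^{-(\iota-1/2)/2}$, not $\mu^{\iota-1/2}\nU^{-(\iota-1/2)/2}$.
\item $\norm{\LR}\ge\min\{\norm{\LS},\norm{\LT}\}$ is false in general. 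Only $\norm{\LR}\ge\max\{(1-\alpha)\norm{\LS},\alpha\norm{\LT}\}\ge\frac{1}{2}\min\{\norm{\LS},\norm{\LT}\}$ holds. This only shifts a constant inside the logarithm of \eqref{eq: sample size condition 1}. The paper avoids the issue by lower-bounding $\norm{\LRmu^{-1/2}\LS\LRmu^{-1/2}}$ and its target analogue separately.
\end{itemize}
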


Our final density ratio estimator \( \hthetamuD \) is obtained from \( \hphimu \) via relations \eqref{eq: hphimuD} and \eqref{eq: hthetamuD}. Its convergence properties are quantified by the following theorem.
\begin{theorem}
    \label{thm: density ratio}
    Suppose that \aspref{asp: moment theta} holds with \( M \in (2, \infty] \) and \aspref{asp: source phi} holds with \( \iota \in [1/2, \tau] \). For a fixed \( m \in (2, M] \), set the regularization parameter as \( \mu = \nU^{-\varsigma} \) and the truncation threshold as \( D = \nU^{\nu} \), where
    \[
        \varsigma = \frac{1}{2 \iota + 1},
        \quad \nu = \frac{1}{m} \cdot \frac{2 \iota}{2 \iota + 1}.
    \]
    Then, for any \( \delta \in (0, 1) \), with probability at least \( 1 - \delta \), when the sample size \( \nU \) satisfies condition \eqref{eq: sample size condition 1}, the density ratio estimator given in \eqref{eq: hthetamuD} achieves the bound
    \[
        \norm{ \theta - \hthetamuD }_{\rhoSX} \le \LE( (1 - \alpha)^{-3/2} \varDelta_{\phi} + \varXi_{m}^{1/2} \RI) \nU^{- \LE( \frac{\iota}{2 \iota + 1} - 2 \nu \RI)} \log \frac{18}{\delta}.
    \]
\end{theorem}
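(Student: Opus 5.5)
We will derive Theorem~\ref{thm: density ratio} from Theorem~\ref{thm: relative density ratio} by a pointwise analysis of the map \( \psi(t) = (1 - \alpha) t / (1 - \alpha t) \), which sends \( \phi \) to \( \theta \) by \eqref{eq: recover theta} and \( \hphimuD \) to \( \hthetamuD \) by \eqref{eq: hthetamuD}. Set \( c_{D} = D / (\alpha D + (1 - \alpha)) = \psi^{-1}(D) \). The plan is to split \( \calX \) into the region \( A = \family{ x : \theta(x) \le D } \), which is the same as \( \family{ \phi(x) \le c_{D} } \), and its complement \( A^{c} \). We then bound \( \norm{ (\theta - \hthetamuD) \ind{A} }_{\rhoSX} \) and \( \norm{ (\theta - \hthetamuD) \ind{A^{c}} }_{\rhoSX} \) separately. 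Throughout we work on the event of probability at least \( 1 - \delta \) supplied by Theorem~\ref{thm: relative density ratio}.

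\emph{On \( A \).} Here \( \phi(x) \in [0, c_{D}] \). Clipping to \( [0, c_{D}] \) is a projection onto an interval containing \( \phi(x) \), so \( |\phi - \hphimuD| \le |\phi - \hphimu| \) pointwise. On \( [0, c_{D}] \) we have \( \psi'(t) = (1 - \alpha)/(1 - \alpha t)^{2} \), and \( 1 - \alpha c_{D} = (1 - \alpha)/(\alpha D + 1 - \alpha) \). Hence the Lipschitz constant is
\[
    \frac{(\alpha D + 1 - \alpha)^{2}}{1 - \alpha} \le \frac{(D + 1)^{2}}{1 - \alpha} \lesssim \frac{D^{2}}{1 - \alpha}.
\]
It remains to pass from \( \rhoSX \) to \( \rhoRX \). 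Since \( \rhoRX \ge (1 - \alpha) \rhoSX \), we get \( \norm{ g }_{\rhoSX} \le (1 - \alpha)^{-1/2} \norm{ g }_{\rhoRX} \). Combining the two gives
\[
    \norm{ (\theta - \hthetamuD) \ind{A} }_{\rhoSX} \lesssim (1 - \alpha)^{-3/2} D^{2} \norm{ \phi - \hphimu }_{\rhoRX}.
\]
Plugging in Theorem~\ref{thm: relative density ratio} and \( D^{2} = \nU^{2\nu} \) yields the term \( (1 - \alpha)^{-3/2} \varDelta_{\phi} \nU^{-(\iota/(2\iota+1) - 2\nu)} \log(18/\delta) \). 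The constant \( (D + 1)^{2} \le 4 D^{2} \) must be absorbed, or handled by a sharper bound \( \alpha D + 1 - \alpha \le D \) when \( D \ge 1 \), which holds for \( \nU \ge 1 \). This sharper bound gives exactly \( D^{2}/(1 - \alpha) \).

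\emph{On \( A^{c} \).} Here \( \theta > D \ge \hthetamuD \ge 0 \), so \( |\theta - \hthetamuD| \le \theta \). We then use a Markov-type estimate via Assumption~\ref{asp: moment theta}:
\[
    \int_{A^{c}} \theta^{2} \, \dd \rhoSX \le D^{2 - m} \int \theta^{m} \, \dd \rhoSX \le \varXi_{m} D^{2 - m}.
\]
This gives \( \norm{ (\theta - \hthetamuD) \ind{A^{c}} }_{\rhoSX} \le \varXi_{m}^{1/2} \nU^{-\nu(m - 2)/2} \). We now check the exponent. By the choice of \( \nu \), \( \nu m = 2\iota/(2\iota + 1) \), so \( \nu(m - 2)/2 = \iota/(2\iota + 1) - \nu \ge \iota/(2\iota + 1) - 2\nu \). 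Since \( \log(18/\delta) \ge 1 \), this term is dominated by \( \varXi_{m}^{1/2} \nU^{-(\iota/(2\iota+1) - 2\nu)} \log(18/\delta) \). For \( m = M = \infty \) the term vanishes under the convention \( \nu = 0 \).

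Adding the two pieces by the triangle inequality gives the claim. The main obstacle is the first region: controlling the derivative of \( \psi \) near \( 1/\alpha \) sharply enough to get exactly the factor \( D^{2}(1 - \alpha)^{-1} \), and converting norms with the right power of \( 1 - \alpha \). Both are handled by the explicit formula for \( 1 - \alpha c_{D} \) and by the domination \( \rhoRX \ge (1 - \alpha) \rhoSX \).
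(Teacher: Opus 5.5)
Your proposal is correct and follows the paper's proof essentially step for step. It uses the same split of \( \calX \) at \( \{ \theta \le D \} \), the same clipping-plus-Lipschitz bound for \( t \mapsto (1-\alpha)t/(1-\alpha t) \) with the \( (1-\alpha)^{-1} \) change of measure from \( \rhoSX \) to \( \rhoRX \), and the same Markov-type moment bound \( D^{-(m-2)}\varXi_{m} \) on the complement. You are more explicit than the paper in two places: using \( D = \nU^{\nu} \ge 1 \) to get \( \alpha D + 1 - \alpha \le D \), and explaining why the truncation term \( \nU^{-(\iota/(2\iota+1)-\nu)} \) is dominated by the stated rate.

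The one slip is your aside on \( m = \infty \). There \( \nu = 0 \) forces \( D = 1 \), and the truncation error on \( \{ \theta > 1 \} \) is a fixed positive quantity independent of \( \nU \) unless \( \theta = 1 \) almost everywhere. So the bound should only be read for finite \( m \).
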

As the moment parameter \( m \) increases, the convergence rate established in \thmref{thm: density ratio} improves and approaches the order \( O(\nU^{-\iota / (2 \iota + 1)}) \). When all moments of the density ratio are finite (i.e., \( M = \infty \) in \aspref{asp: moment theta}), this rate can be made arbitrarily close to the limiting value.

\subsection{Covariate Shift Adaptation}

In the previous section, we derived an estimator \( \hthetamuD \) for the unbounded density ratio \( \theta \). We now incorporate this estimator into importance weighting to achieve effective covariate shift adaptation. From this point forward, we fix the weighting function as \( \htheta = \hthetamuD \), where the regularization parameter \( \mu \) and the truncation threshold \( D \) are chosen optimally as specified in \thmref{thm: density ratio}. As discussed earlier, given labeled training data \( \family{ (x_{i}, y_{i}) }_{i=1}^{\nL} \) drawn from the source distribution \( \rhoS_{\calX \times \calY} \) and a filter function \( \glam \) from \defref{def: filter} with regularization parameter \( \lambda > 0 \), we estimate the regression function \( \frho \) through the importance-weighted spectral algorithm \eqref{eq: spectral algorithm} with \( \htheta = \hthetamuD \). With a slight abuse of notation, we reuse the expression from \eqref{eq: spectral algorithm}: 
\begin{equation}
    \label{eq: hflam}
    \hflam = \glam(\hLW) \, \hSWa \, \bsy,
    \quad \bsy = (y_{1}, \dots, y_{\nL})^{\top},
\end{equation}
where the importance-weighted empirical integral operator \( \hLW \) and the adjoint sampling operator \( \hSWa \) are defined as
\[
    \hLW f = \frac{1}{\nL} \sum_{i=1}^{\nL} \htheta(x_{i}) f(x_{i}) \, K(\cdot, x_{i}),
    \quad \hSWa \, \bsy = \frac{1}{\nL} \sum_{i=1}^{\nL} \htheta(x_{i}) y_{i} \, K(\cdot, x_{i}).
\]

Analogous to \aspref{asp: source phi}, we impose the following source condition to characterize the smoothness of the regression function \( \frho \).
\begin{assumption}[Source condition of \( \frho \)]
    \label{asp: source frho}
    Let \( \tau \) be the qualification parameter from \defref{def: filter}. There exists \( r \in [1/2, \tau] \) such that
    \[
        \frho = \LT^{r} \, \urho
    \]
    holds for some \( \urho \in \calL^{2}(\calX, \rhoTX) \).
\end{assumption}
We also introduce a moment condition to bound the noise in the training data.
\begin{assumption}[Moment condition of the noise]
    \label{asp: moment noise}
    There exist positive constants \( G_{0} \) and \( \sigma_{0} \) such that
    \[
        \int_{\calX} |y - \frho(x)|^{\ell} \, \dd \rho_{\calY \mid \calX}(y \mid x) \le \frac{1}{2} \ell! G_{0}^{\ell-2} \sigma_{0}^{2},
        \quad \ell = 2, 3, \dots
    \]
    holds for any \( x \in \calX \).
\end{assumption}
This is a standard assumption in statistical learning theory to ensure fast decay of the noise tail probabilities \citep{Caponnetto2007OptimalRR, Fischer2020SobolevNL}. It holds, for example, when the noise is uniformly bounded, Gaussian, sub-Gaussian, or sub-exponential.

We now present our next theorem, which characterizes the convergence  behavior of the regression function estimator \( \hflam \).
\begin{theorem}
    \label{thm: importance weighting}
    For any \( \delta \in (0, 1) \), assume that the conditions of \thmref{thm: density ratio} are satisfied with \( M \in (2, \infty] \), \( m \in (2, M] \), and \( \iota \in [1/2, \tau] \), ensuring that the upper bound for \( \norm{ \theta - \hthetamuD }_{\rhoSX} \) holds with probability at least \( 1 - \delta / 2 \). Furthermore, suppose that \aspref{asp: source frho} holds with \( r \in [1/2, \tau] \), \aspref{asp: moment noise} holds with \( G_{0}, \sigma_{0} > 0 \), and set the regularization parameter as \( \lambda = \nL^{-s} \) with
    \[
        0 < s < \frac{1}{2 r + 1}.
    \]
    In addition to \eqref{eq: sample size condition 1}, assume that the sample sizes \( \nU \) and \( \nL \) satisfy the following constraints:
    \begin{equation}
        \label{eq: sample size condition 2}
        \LE\{
        \begin{aligned}
             & \nU^{\frac{\iota}{2 \iota + 1} \cdot \min \family{ 1, \frac{2}{2 r - 1} } - 2 \nu} \ge \frac{2 \kappa^{2}}{\min \family{ \norm{ \LT }, 1 }} \LE( (1 - \alpha)^{-3/2} \varDelta_{\phi} + \varXi_{m}^{1/2} \RI) \log \frac{36}{\delta} \cdot \nL^{s}, \\
             & \nL^{\frac{1}{2 r + 1} - s} \ge 14 \kappa^{2} \LE( \log \frac{24 \kappa^{2} (\norm{ \LT } + 2)}{\norm{ \LT } \delta} + 1 + \frac{1}{2 r} \RI) \nU^{2 \nu},
        \end{aligned}
        \RI.
    \end{equation}
    where \( \alpha \in (0, 1) \) is the mixing coefficient and \( \norm{ \cdot } \) denotes the operator norm \( \norm{ \cdot }_{\calH \to \calH} \) on the RKHS \( \calH \). Then, with probability at least \( 1 - \delta \), the regression function estimator \( \hflam \) defined in \eqref{eq: hflam} achieves the following bounds:
    \[
        \norm{ \frho - \hflam }_{\rhoTX} \le \varDelta_{f} \LE( (1 - \alpha)^{-3/2} + \varXi_{m}^{1/2} + 1 \RI) \nL^{-r s} \log \frac{36}{\delta},
    \]
    and
    \[
        \norm{ \frho - \hflam }_{\calH} \le \varDelta_{f} \LE( (1 - \alpha)^{-3/2} + \varXi_{m}^{1/2} + 1 \RI) \nL^{-\LE( r - \frac{1}{2} \RI) s} \log \frac{36}{\delta}.
    \]
    Here, \( \varDelta_{f} \) is a constant given in \eqref{eq: Delta_f}, being independent of \( \alpha \), \( m \), \( \nU \), \( \nL \), or \( \delta \).
\end{theorem}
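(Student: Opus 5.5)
We compare \( \hflam \) with the oracle weighted operators. Set \( \LW = \LT \); its empirical counterpart with exact weights is \( \frac{1}{\nL}\sum_i \theta(x_i) f(x_i) K(\cdot,x_i) \). Since \( \theta \) may be unbounded, we work instead with the truncated-weight quantities built from \( \htheta=\hthetamuD\in[0,D] \). On the high-probability event of \thmref{thm: density ratio} (probability \( 1-\delta/2 \)), write \( \LWlam := \LT+\lambda I \) and use the standard spectral-algorithm decomposition
\[
\frho-\hflam = \bigl(I-\glam(\hLW)\hLW\bigr)\frho + \glam(\hLW)\bigl(\hLW\frho-\hSWa\bsy\bigr).
\]
The first term is the approximation term and the second the sample-plus-weight error. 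Both will be measured in the norm \( \norm{\LT^{c}(\cdot)}_{\calH} \) with \( c\in\{0,1/2\} \), using \eqref{eq: rho-norm / H-norm} for the \( \rhoTX \)-norm.

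\textbf{Step 1 (operator perturbation).} We bound \( \norm{\LWlam^{-1/2}(\LT-\hLW)\LWlam^{-1/2}} \le 1/2 \), so that \( \norm{\LWlam^{1/2}(\hLW+\lambda)^{-1/2}}\le\sqrt2 \) and its inverse counterpart is controlled. Split \( \LT-\hLW = (\LT-\LS^{\htheta}) + (\LS^{\htheta}-\hLW) \), where \( \LS^{\htheta}f=\int \htheta f K\,\dd\rhoSX \).
\begin{itemize}
\item The deterministic part is bounded by \( \kappa^2\norm{\theta-\htheta}_{\rhoSX}/\lambda \) via Cauchy--Schwarz, since \( \norm{f K_x}\le\kappa^2 \) pointwise. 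Combined with \thmref{thm: density ratio} and \( \lambda=\nL^{-s} \), the first line of \eqref{eq: sample size condition 2} makes this at most \( 1/4 \); the factor \( \min\{\norm{\LT},1\} \) handles the \( \LWlam^{-1} \) normalization.
\item The stochastic part is a sum of i.i.d.\ Hilbert–Schmidt operators bounded by \( D\kappa^2 \) with variance proxy \( D\kappa^2\mathcal N(\lambda) \). A Bernstein inequality, with the trivial bound \( \mathcal N(\lambda)\le\kappa^2/\lambda \) (no capacity assumption), gives a term of order \( D\kappa^2\log(1/\delta)/(\nL\lambda) \) plus its square root. Since \( D=\nU^{\nu} \) enters as \( \nU^{2\nu} \), the second line of \eqref{eq: sample size condition 2} makes this at most \( 1/4 \).
\end{itemize}
Independence of the labeled sample from \( \htheta \) is used here, by conditioning on the unlabeled data.

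\textbf{Step 2 (approximation term).} With \( \frho=\LT^r\urho \), write \( \LT^r=(\hLW+\lambda)^r\cdot \)(bounded) when \( r\le1 \), or expand \( \LT^r-\hLW^r \) via the Lipschitz estimate for \( r>1 \). Then apply \eqref{eq: 1 - t glam} with \( c=r \) and \( c=r-1/2 \). For \( r>1 \) the difference \( \LT-\hLW \) appears without the \( \lambda \)-normalization and costs \( \norm{\theta-\htheta}_{\rhoSX}+O(D/\sqrt{\nL}) \). This is where the exponent \( \frac{\iota}{2\iota+1}\min\{1,\frac{2}{2r-1}\} \) in the first line of \eqref{eq: sample size condition 2} arises: we need \( \norm{\theta-\htheta}_{\rhoSX}\lesssim\lambda^{r-1/2} \) when \( r\ge 3/2 \), and \( \lesssim\lambda \) otherwise. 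The result is \( O(\lambda^{r-c}) \).

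\textbf{Step 3 (noise and weight error).} Split
\[
\hLW\frho-\hSWa\bsy = \frac1{\nL}\sum_i\htheta(x_i)(\frho(x_i)-y_i)K_{x_i}.
\]
Its expectation given the \( x_i \) is zero by covariate shift, so only the fluctuation remains. Bound \( \LWlam^{-1/2}(\cdot) \) by a vector Bernstein inequality using \aspref{asp: moment noise} together with \( \htheta\le D \) and \( \EE{\htheta^2}\lesssim \EE[\rhoTX]{\theta}+\norm{\theta-\htheta}\cdot \)(bounded) \( \lesssim 1+\varXi_m^{1/2} \). This yields a bound \( \lesssim \bigl(\sqrt{\kappa^2/(\lambda \nL)}+D/(\nL\sqrt\lambda)\bigr)\log(1/\delta) \). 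Then apply \( \glam(\hLW) \) with \eqref{eq: glam} at \( c=1/2,1 \) and the Step 1 transfer, which gives \( \lambda^{-c} \)-type factors. Because \( s<1/(2r+1) \), we have \( (\lambda\nL)^{-1/2}\le\lambda^{r} \) up to the factor \( \nU^{\nu} \), absorbed by the second sample-size condition.

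Finally, collect constants into \( \varDelta_f \). The factors \( (1-\alpha)^{-3/2}+\varXi_m^{1/2} \) come from \thmref{thm: density ratio}, and a union bound over the events gives the \( \log(36/\delta) \). The main obstacle is Step 1/Step 2 for large \( r \): the density-ratio error is only in \( \calL^2(\rhoSX) \), not uniform, while the truncation \( D \) blows up. Balancing these against \( \lambda \) is what forces the delicate polynomial coupling between \( \nU \) and \( \nL \).
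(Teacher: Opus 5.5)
Your argument is sound in outline, but it is organized differently from the paper's.

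\textbf{How the two routes differ.} The paper inserts $\flam=\glam(\LT)\LT\frho$ and bounds $Q_1+Q_2Q_3(Q_4+Q_5)$:
\begin{itemize}
\item The perturbation is handled in two stages. $Q_2$ compares $\LTlam$ with $\LWlam$, where $\LW$ is the population operator with weights $\htheta$, using a Neumann series and \lemref{lem: cordes}. $Q_3$ compares $\LWlam$ with $\hLWlam$ using \lemref{lem: concentration operator}.
\item The sample term $Q_4$ is centered at $\flam$, so it carries the bias $\LW(\flam-\frho)$. This is controlled through $\norm{\LW^{1/2}-\LT^{1/2}}\le\norm{\LW-\LT}^{1/2}$.
\item The noise variance is bounded using $\htheta\le D=\nU^{\nu}$.
\item $Q_5$ applies \lemref{lem: A^c - B^c} to the unnormalized $\norm{\LT-\hLW}$ for every $r$.
\end{itemize}
You center at $\frho$ directly. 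The sample term is then exactly conditionally mean zero, by covariate shift and the independence of the labeled sample from $\htheta$, so neither $Q_1$ nor the bias term arises. You bound the variance through $\norm{\htheta}_{\rhoSX}\le\norm{\theta}_{\rhoSX}+\norm{\theta-\htheta}_{\rhoSX}$ rather than through $D$, which removes $\nU^{\nu}$ from the leading $(\lambda\nL)^{-1/2}$ term. For $r\le1$ your approximation term needs only Cordes plus the transfer bound.

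This is leaner and slightly sharper in the noise term. It does not relax the coupling conditions, because $D$ still enters the operator concentration and $\norm{\LW-\hLW}$. The explicit constant you obtain will also differ from \eqref{eq: Delta_f}.

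\textbf{Two repairs are needed.}

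\emph{Step 1 constants.} Under \eqref{eq: sample size condition 2}, the deterministic part is only $\kappa^{2}\lambda^{-1}\norm{\theta-\htheta}_{\rhoSX}\le\min\{\norm{\LT},1\}/2$, not $1/4$. The fluctuation with the constant $14$ is also about $1/2$. Summing them therefore leaves no margin below $1$, and the Neumann series argument does not close. Chain them instead:
$\norm{\LTlam^{1/2}\hLWlam^{-1/2}}\le\norm{\LTlam^{1/2}\LWlam^{-1/2}}\,\norm{\LWlam^{1/2}\hLWlam^{-1/2}}\le2$.
Normalize the fluctuation by $\LWlam$. This is also where the factor $\min\{\norm{\LT},1\}$ is used: it gives $\norm{\LW}\ge\norm{\LT}/2$ inside the logarithm.

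\emph{Step 2 exponents.} Since $\urho$ lies only in $\calL^{2}(\calX,\rhoTX)$, you must first peel off $\LT^{1/2}\urho\in\calH$ and work with $\LT^{r-1/2}$, not $\LT^{r}$. Then:
\begin{itemize}
\item use Cordes for $r\le1$;
\item use the H\"older bound $\norm{A^{a}-B^{a}}\le\norm{A-B}^{a}$ for $r\in(1,3/2]$;
\item use the Lipschitz bound only for $r>3/2$, keeping its factor $(\kappa^{2}\nU^{\nu})^{r-3/2}$ from $\norm{\hLW}\le\kappa^{2}\nU^{\nu}$.
\end{itemize}
The resulting rate in $\norm{\LT^{c}(\cdot)}_{\calH}$ is $\lambda^{r-1/2+c}$, not $\lambda^{r-c}$. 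Your stated thresholds already match this: $\norm{\theta-\htheta}_{\rhoSX}\lesssim\lambda$ versus $\lesssim\lambda^{r-1/2}$, switching at $r=3/2$.
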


Notably, \thmref{thm: importance weighting} imposes complicated sample size conditions on \( \nU \) and \( \nL \). These conditions arise from technical requirements needed to establish the high-probability bounds in the proof of the theorem. To interpret these requirements in a more concrete setting, we present the following corollary, which considers the scenario where the unlabeled sample size \( \nU \) scales polynomially with the labeled sample size \( \nL \).
\begin{corollary}
    \label{col: polynomial}
    Assume that the conditions of \thmref{thm: importance weighting} hold with \( M = \infty \), \( \iota \in [1/2, \tau] \), \( r \in [1/2, \tau] \), and \( G_{0}, \sigma_{0} > 0 \). Furthermore, suppose that there exists \( \beta \ge 1 \) such that \( \nU = \nL^{\beta} \). For arbitrarily small \( \varepsilon > 0 \), fix a sufficiently large moment parameter \( m \) such that
    \[
        2 \beta \nu = 2 \beta \LE( \frac{1}{m} \cdot \frac{2 \iota}{2 \iota + 1} \RI) < \varepsilon.
    \]
    In addition to \eqref{eq: sample size condition 1}, assume that the labeled sample size \( \nL \) is sufficiently large such that
    \begin{align*}
        \nL
        \ge \max \Bigg\{ & \LE( \frac{2 \kappa^{2}}{\min \family{ \norm{ \LT }, 1 }} \LE( (1 - \alpha)^{-3/2} \varDelta_{\phi} + \varXi_{m}^{1/2} \RI) \log \frac{36}{\delta} \RI)^{\LE. 1 \MID/ \LE( \beta \frac{\iota}{2 \iota + 1} \cdot \min \family{ 1, \frac{2}{2 r - 1} } - s - \varepsilon \RI) \RI.}, \\
                         & \LE( 14 \kappa^{2} \LE( \log \frac{24 \kappa^{2} (\norm{ \LT } + 2)}{\norm{ \LT } \delta} + 1 + \frac{1}{2 r} \RI) \RI)^{\LE. 1 \MID/ \LE( \frac{1}{2 r + 1} - s - \varepsilon \RI) \RI.} \Bigg\}.
    \end{align*}
    Then, for any \( \delta \in (0, 1) \), with probability at least \( 1 - \delta \), the following statements hold.
    \begin{enumerate}
        \item When \( 1 \le \beta < 1 + 1 / (2 \iota) \), set the regularization parameter as \( \lambda = \nL^{-s} \) with
              \[
                  s = \LE\{
                  \begin{alignedat}{2}
                       & \beta \frac{\iota}{2 \iota + 1} - \varepsilon,
                       &                                                                                           & \quad \frac{1}{2} \le r < \frac{1}{2} + \LE( \frac{1 + 1 / (2 \iota)}{\beta} - 1 \RI);                                                        \\
                       & \frac{1}{2 r + 1} - \varepsilon,
                       &                                                                                           & \quad \frac{1}{2} + \LE( \frac{1 + 1 / (2 \iota)}{\beta} - 1 \RI) \le r \le \frac{1}{2} + \LE( \frac{1 + 1 / (2 \iota)}{\beta} - 1 \RI)^{-1}; \\
                       & \beta \frac{\iota}{2 \iota + 1} \cdot \min \family{ 1, \frac{2}{2 r - 1} } - \varepsilon,
                       &                                                                                           & \quad r > \frac{1}{2} + \LE( \frac{1 + 1 / (2 \iota)}{\beta} - 1 \RI)^{-1}.
                  \end{alignedat}
                  \RI.
              \]
              The error bounds in \thmref{thm: importance weighting} then simplify to
              \[
                  \norm{ \frho - \hflam }_{\rhoTX} = O \LE( \nL^{-r s} \log \frac{36}{\delta} \RI),
                  \quad \norm{ \frho - \hflam }_{\calH} = O \LE( \nL^{-\LE( r - \frac{1}{2} \RI) s} \log \frac{36}{\delta} \RI).
              \]

        \item For the case \( \beta \ge 1 + 1 / (2 \iota) \), set
              \[
                  s = \frac{1}{2 r + 1} - \varepsilon,
                  \quad \forall r \ge \frac{1}{2}.
              \]
              We then obtain the following rates:
              \[
                  \norm{ \frho - \hflam }_{\rhoTX} = O \LE( \nL^{-\frac{r}{2 r + 1} + r \varepsilon} \log \frac{36}{\delta} \RI),
                  \quad \norm{ \frho - \hflam }_{\calH} = O \LE( \nL^{-\frac{r - 1/2}{2 r + 1} + \LE( r - \frac{1}{2} \RI) \varepsilon} \log \frac{36}{\delta} \RI).
              \]
    \end{enumerate}
\end{corollary}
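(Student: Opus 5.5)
The corollary follows from \thmref{thm: importance weighting} once we substitute \( \nU = \nL^{\beta} \). Two things must be checked: the sample size conditions \eqref{eq: sample size condition 2} hold under the stated lower bound on \( \nL \), and each prescribed \( s \) is admissible, i.e.\ \( 0 < s < 1/(2r+1) \) with both constraints satisfied. The rates then follow directly by plugging \( s \) into \( \nL^{-rs} \) and \( \nL^{-(r-1/2)s} \).

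\textbf{Step 1: reduce \eqref{eq: sample size condition 2} to exponent inequalities in \( \nL \).}
Write \( a = \frac{\iota}{2\iota+1} \) and \( \gamma = \min\{1, 2/(2r-1)\} \) (with \( \gamma = 1 \) when \( r = 1/2 \)). Substituting \( \nU = \nL^{\beta} \), the two constraints read
\[
\nL^{\beta a \gamma - 2\beta\nu - s} \ge C_1,
\qquad
\nL^{\frac{1}{2r+1} - s - 2\beta\nu} \ge C_2 ,
\]
where \( C_1, C_2 \) are the constants appearing in the hypothesis on \( \nL \). Since \( 2\beta\nu < \varepsilon \) and \( \nL \ge 1 \), each left-hand side is at least \( \nL^{\beta a\gamma - s - \varepsilon} \), respectively \( \nL^{\frac{1}{2r+1} - s - \varepsilon} \). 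Hence both constraints follow from the assumed lower bound on \( \nL \), provided the exponents are positive:
\[
s < \beta a \gamma - \varepsilon
\quad\text{and}\quad
s < \tfrac{1}{2r+1} - \varepsilon .
\]
Taking \( s \) equal to the minimum of these two quantities, up to a harmless \( \varepsilon \) adjustment (replace \( \varepsilon \) by \( 2\varepsilon \) so the inequalities are strict), makes \thmref{thm: importance weighting} applicable.

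\textbf{Step 2: identify the minimum case by case.}
Set \( b = \frac{1+1/(2\iota)}{\beta} - 1 \), so that \( \beta a = \frac{1}{2+2b} \cdot\frac{1}{1} \), equivalently \( \beta a = \frac{1}{2(1+b)} \).
\begin{itemize}
\item \emph{Case \( \beta \ge 1 + 1/(2\iota) \).} Then \( \beta a \ge \tfrac12 \). For \( r \le 3/2 \) we have \( \gamma = 1 \), and \( \beta a \ge \tfrac12 \ge \frac{1}{2r+1} \). For \( r > 3/2 \) we have \( \beta a \gamma \ge \frac{1}{2r-1} > \frac{1}{2r+1} \). So the minimum is always \( \frac{1}{2r+1} \), which gives item 2 with rates \( \nL^{-r/(2r+1) + r\varepsilon} \) and \( \nL^{-(r-1/2)/(2r+1) + (r-1/2)\varepsilon} \).
\item \emph{Case \( \beta < 1 + 1/(2\iota) \).} Here \( b > 0 \), and we compare \( \beta a\gamma \) with \( \frac{1}{2r+1} \) in three regimes.
  \begin{itemize}
  \item For small \( r \): \( \gamma = 1 \), and \( \beta a < \frac{1}{2r+1} \) is equivalent to \( 2r + 1 < 2(1+b) \), i.e.\ \( r < \tfrac12 + b \). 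This gives the first line of \( s \).
  \item For large \( r \): \( \beta a \cdot \frac{2}{2r-1} < \frac{1}{2r+1} \) is the binding comparison, and it determines the upper threshold \( \tfrac12 + b^{-1} \), giving the third line. Between the two thresholds the minimum is \( \frac{1}{2r+1} \), giving the second line.
  \item One must also check that \( \tfrac12 + b \le \tfrac12 + b^{-1} \), i.e.\ \( b \le 1 \), which holds because \( \beta \ge 1 \) implies \( b \le 1/(2\iota) \le 1 \). When \( b^{-1} < 1 \), the \( \gamma = 2/(2r-1) \) branch is relevant only for \( r > 3/2 \), so the piecewise description should be reconciled with where \( \gamma \) switches.
  \end{itemize}
\end{itemize}

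\textbf{Main obstacle.} The delicate part is this last bookkeeping: matching the regime boundaries \( \tfrac12 + b \) and \( \tfrac12 + b^{-1} \) exactly to the algebra of \( \min\{1, 2/(2r-1)\} \). I would verify it by solving \( \beta a \cdot \min\{1, 2/(2r-1)\} = \frac{1}{2r+1} \) explicitly in \( r \) and checking that the two crossing points are the stated thresholds. Positivity of \( s \) is automatic once \( \varepsilon \) is small. The remaining ingredients, namely the probability level and the constants being absorbed into the \( O(\cdot) \) with \( \varXi_m \) fixed, are inherited directly from \thmref{thm: importance weighting}.
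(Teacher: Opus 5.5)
Your proposal is correct and follows the same route as the paper: substitute \( \nU = \nL^{\beta} \), use \( 2\beta\nu < \varepsilon \) to reduce \eqref{eq: sample size condition 2} to a lower bound on \( \nL \) plus the caps \( s \le \beta\frac{\iota}{2\iota+1}\min\{1, \frac{2}{2r-1}\} - \varepsilon \) and \( s \le \frac{1}{2r+1} - \varepsilon \), and take \( s \) at the smaller cap; your remark is also accurate that the stated exponents \( 1/(\cdots - s - \varepsilon) \) degenerate at exactly these values of \( s \), so one must use the slack \( \varepsilon - 2\beta\nu > 0 \) instead, a point the paper's proof does not address. The bookkeeping you left open closes immediately: \( 0 < b \le 1 \) gives \( \tfrac12 + b \le \tfrac32 \le \tfrac12 + b^{-1} \), so the case \( b^{-1} < 1 \) never arises, and for \( r > 3/2 \) the comparison reduces to \( (1+b)(2r-1) > 2r+1 \iff r > \tfrac12 + b^{-1} \), which is exactly the stated threshold.
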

We remark that in many real-world scenarios, unlabeled data from both the source and target distributions are cheap and abundant. For instance, in natural language processing, vast amounts of unlabeled text are readily available on the web, whereas high-quality labeled datasets require significant manual annotation effort. It is therefore reasonable to consider regimes where the unlabeled sample size \( \nU \) is much larger than the labeled sample size \( \nL \).

According to \citet{Caponnetto2007OptimalRR}, for a source condition parameter \( r \ge 1/2 \), the minimax optimal convergence rate (in the \( \calL^{2} \)-norm) for any learning algorithm is \( O(\nL^{-r / (2 r + 1)}) \). If the true density ratio were known a priori, this near-optimal rate can be attained for all \( r \ge 1 / 2 \) \citep{Gogolashvili2023WhenIW, Fan2025SpectralAU, Liu2025SpectralAM}. In our setting, however, the density ratio must be estimated from data. Consequently, the same near-optimal range \( r \ge 1 / 2 \) is achievable when the unlabeled sample size \( \nU \) scales polynomially with the labeled sample size \( \nL \), i.e., when \( \nU = \nL^{\beta} \) with \( \beta \ge 1 + 1 / (2 \iota) \); this condition is reasonable, as unlabeled data is typically much easier to acquire than labeled data in practical settings. For the case \( 1 \le \beta < 1 + 1 / (2 \iota) \), the near-optimal range narrows. 
We note that estimating an unbounded density ratio is a more challenging problem than covariate shift adaptation, requiring a substantially larger unlabeled sample to guarantee the same convergence rate.

We conclude this section by noting that our theoretical analysis relies on several integral operators defined with respect to different distributions. For clarity, we compile the definitions of these population and empirical operators in \tabref{tab: L} for reference.

\begin{table}
    \centering
    \small
    \begin{tabular}{ll|ll}
        \noalign{\hrule height 1.0pt}
         & Definition
         &
         & Definition                                                                                                 \\
        \noalign{\hrule height 0.5pt}
        \( \LS\)
         & \( \displaystyle \LS \, f = \int_{\calX} f(x) \, K(\cdot, x) \, \dd \rhoSX(x) \)
         & \( \hLS \)
         & \( \displaystyle \hLS \, f = \frac{1}{\nU} \sum_{j=1}^{\nU} f(x_{j}) \, K(\cdot, x_{j}) \rule{0pt}{2em} \) \\[1.5em]
        \( \LT \)
         & \( \displaystyle \LT \, f = \int_{\calX} f(x) \, K(\cdot, x) \, \dd \rhoTX(x) \)
         & \( \hLT \)
         & \( \displaystyle \hLT \, f = \frac{1}{\nU} \sum_{j=1}^{\nU} f(x'_{j}) \, K(\cdot, x'_{j}) \)               \\[1.5em]
        \( \LR \)
         & \( \displaystyle \LR \, f = (1 - \alpha) \LS \, f + \alpha \LT \, f \)
         & \( \hLR \)
         & \( \displaystyle \hLR \, f = (1 - \alpha) \hLS \, f + \alpha \hLT \, f \)                                  \\[0.5em]
        \( \LW \)
         & \( \displaystyle \LW \, f = \int_{\calX} \htheta(x) f(x) \, K(\cdot, x) \, \dd \rhoSX(x) \)
         & \( \hLW \)
         & \( \displaystyle \hLW \, f = \frac{1}{\nL} \sum_{i=1}^{\nL} \htheta(x_{i}) f(x_{i}) \, K(\cdot, x_{i}) \)  \\[1em]
        \noalign{\hrule height 1.0pt}
    \end{tabular}
    \caption{Definitions of integral operators and their empirical counterparts. The unlabeled samples \( \family{ x_{j} }_{j=1}^{\nU} \) and \( \family{ x'_{j} }_{j=1}^{\nU} \) are drawn from \( \rhoSX \) and \( \rhoTX \), respectively, while the labeled sample \( \{ x_i \}_{i=1}^{\nL} \) is drawn from \( \rhoSX \). The density ratio estimator \( \htheta = \hthetamuD \) is obtained via \eqref{eq: hthetamuD} with the optimal regularization parameter \( \mu \) and truncation threshold \( D \) specified in \thmref{thm: density ratio}.}
    \label{tab: L}
\end{table}

%% file: text/3_discussion.tex
\section{Related Works and Discussions}
\label{sec: discussion}

Covariate shift poses a fundamental challenge in machine learning. It occurs when input distributions differ between source and target data while the conditional distribution remains unchanged. To address this, the importance weighting method introduced by \citet{Shimodaira2000ImprovingPI} reweights source samples using the density ratio of target to source distributions. Although theoretically justified, this approach's efficacy relies critically on accurate estimation of the density ratio. Such estimation becomes especially difficult when the ratio is unbounded, leading to high variance in the importance-weighted estimator.

\subsection{Density Ratio Estimation}

Classical density ratio estimation methods typically rely on boundedness assumptions, either explicitly in their theoretical analysis or implicitly through regularization schemes. This category includes methods that match moment statistics between source and target distributions, such as kernel mean matching \citep{Huang2006CorrectingSS, Gretton2008CovariateSK, Gizewski2022RegularizationUD, Myleiko2025RecoveringRD, Gizewski2026ImpactSK}, as well as approaches that directly fit the density ratio by minimizing statistical divergences, including KLIEP (which minimizes the Kullback-Leibler divergence) \citep{Sugiyama2008DirectIE} and LSIF (which minimizes the squared loss, a special case of Bregman divergence) \citep{Kanamori2009LeastAD}. 

We note that another family of estimation methods, namely classifier-based approaches \citep{Qin1998InferencesCS, Gutmann2010NoiseCE, Menon2016LinkingLD}, is capable of estimating unbounded density ratios. These methods assign labels \( z = -1 \) and \( z = 1 \) to samples \( \family{ x_{j} }_{j=1}^{\nU} \) from the source distribution \( \rhoSX \) and \( \family{ x'_{j} }_{j=1}^{\nU} \) from the target distribution \( \rhoTX \), respectively. A classifier is then trained to distinguish between the two sample sets, and its estimated posterior probability \( \widehat{\operatorname{P}}(z = 1 \mid x) \) is used to construct the density ratio estimator:
\[
    \htheta(x) = \frac{\widehat{\operatorname{P}}(z = 1 \mid x)}{1 - \widehat{\operatorname{P}}(z = 1 \mid x)}.
\]
Since \( \widehat{\operatorname{P}}(z = 1 \mid x) \) can approach 1, the resulting estimator \( \htheta \) is not inherently bounded. However, the theoretical analysis for these methods typically assumes the true density ratio follows a parametric form, such as an exponential model \citep{Kanamori2010TheoreticalAD, Nagumo2025RobustSE}, which constitutes a strong prior assumption.

Recently, there has been growing interest in handling unbounded density ratios. \citet{Feng2024DeepNQ} propose a method that truncates the unbounded density ratio and employs the LSIF method \citep{Kanamori2009LeastAD} on neural networks to estimate the truncated ratio. Their theoretical analysis guarantees convergence to the true density ratio when the truncation threshold grows with the unlabeled sample size. However, their analysis relies on the H\"{o}lder continuity of the truncated density ratio. Since the truncation threshold depends on the sample size, the corresponding H\"{o}lder index also becomes sample-dependent. \citet{Xu2025EstimatingUD} relax this assumption by introducing the local H\"{o}lder class. This class requires functions to satisfy classical H\"{o}lder smoothness on any compact set (via a coordinate transformation), with smoothness bounds controlled by the scale of that compact set. This framework enables the characterization of density ratios with both unbounded domains and unbounded ranges. Finally, \citet{Zheng2026ErrorAD} estimates the logarithm of the density ratio to handle unboundedness. Their analysis assumes the true ratio is strictly bounded away from zero. We remark that these works derive error bounds in expectation rather than high-probability bounds.

Our approach diverges from these methods by leveraging the relative density ratio framework of \citet{Yamada2013RelativeDE}, which naturally produces bounded estimates that can be subsequently transformed to recover unbounded density ratios. The key insight is that while the standard density ratio \( \theta \) may be unbounded, the relative density ratio \( \phi = \theta / (\alpha \theta + (1 - \alpha)) \) is always bounded in \( [0, 1 / \alpha] \), making it amenable to RKHS estimation. As the moment parameter \( m \) increases, our theoretical analysis yields a convergence rate approaching \( O(\nU^{-\iota / (2 \iota + 1)}) \), where \( \iota \ge 1/2 \) characterizes the smoothness of the relative density ratio \( \phi \). Existing studies on relative density ratios primarily focus on analyzing their convergence properties (e.g., \citealp{Alejandro2024OnlineNL}) or employing them as a robust alternative to standard counterparts in importance-weighted learning (e.g., \citealp{Liu2013ChangeDT}). To the best of our knowledge, no research has yet explored transforming such ratios back to standard ones for estimating unbounded density ratios.

\subsection{Covariate Shift Adaptation}

In the context of covariate shift adaptation with importance-weighted kernel methods, previous analyses primarily rely on idealized assumptions regarding density ratios. \citet{Gogolashvili2023WhenIW} establish convergence rates for kernel ridge regression under the assumption of known density ratios satisfying Bernstein-type moment conditions, achieving near-optimal rates for well-specified models. This analysis is extended by \citet{Fan2025SpectralAU} to general spectral algorithms. Further broadening this framework, \citet{Liu2025SpectralAM} incorporate misspecified settings where the regression function may lie outside the RKHS, though still assuming known density ratios. Parallel to these developments, \citet{Ma2023OptimallyTC} analyze kernel ridge regression with known density ratios that are either uniformly bounded or possess finite second moments, but require the restrictive assumption that the eigenfunctions of the integral operators are uniformly bounded---a condition often difficult to verify in practice. This work is subsequently generalized by \citet{Feng2023TowardsUA} to accommodate broader loss functions beyond squared loss. Departing from the paradigm of known ratios, \citet{Gizewski2022RegularizationUD} address density ratio estimation under the assumption that the ratio belongs to an RKHS (implying uniform boundedness) and incorporate spectral algorithms for the final regression function estimator. Their follow-up work \citep{Gizewski2026ImpactSK} introduces a source condition on the kernel to enable a finer-grained analysis, and plugs the estimated density ratio as importance weights into the regression. \citet{Myleiko2025RecoveringRD} investigate a misspecified setting, estimating density ratio outside the RKHS, but their analysis retains the assumption that the density ratio is uniformly bounded.
 
Our work extends this literature by providing a unified analysis that incorporates both density ratio estimation and importance-weighted spectral algorithms, with particular emphasis on handling unbounded ratios. In our analysis, to guarantee fast convergence towards the regression function, the density ratio estimation step requires a substantially larger sample size than the subsequent regression step. Specifically, we assume the relationship \( \nU = \nL^{\beta} \) for some \( \beta \ge 1 \), where \( \nU \) denotes the unlabeled sample size used for density ratio estimation and \( \nL \) is the labeled sample size for covariate shift adaptation. When \( \beta \ge 1 + 1 / (2 \iota) \), we obtain a near-optimal convergence rate for all \( r \ge 1 / 2 \), where the parameter \( r \) characterizes the smoothness of the regression function. In contrast, when \( 1 \le \beta < 1 + 1 / (2 \iota) \), the near-optimal rate is only attainable within a finite range of \( r \). This finding provides new insight into the inherent difficulty of the unbounded density ratio estimation problem.

\subsection{Future Work}

Finally, we conclude by discussing several limitations of our work, which also suggest fruitful directions for future research. First, our analysis assumes the true regression function resides within the RKHS, corresponding to the smoothness condition \( r \ge 1/2 \). The misspecified setting, where \( r < 1/2 \), remains an open question and warrants further investigation. Second, while our theoretical framework relies on foundational RKHS theory, the kernel learning literature provides more refined tools for characterizing RKHS capacity, such as the effective dimension \citep{Caponnetto2007OptimalRR} or the embedding index \citep{Fischer2020SobolevNL}. These tools capture the nuanced interplay between RKHSs, spaces of uniformly bounded functions, and \( \calL^{2} \) spaces. Incorporating such advanced capacity notions could potentially yield sharper convergence rates. Third, in our density ratio estimation procedure, the combined effects of regularization, truncation, and the nonlinear transformation from a relative density ratio to the standard ratio all introduce bias into the final estimates. This bias may hinder the overall convergence of the estimator. Developing a debiasing strategy represents a promising direction for future work.

%% file: text/4_proof.tex
\section{Proofs of Main Results}
\label{sec: proof}

This section presents the proofs of \thmref{thm: relative density ratio}, \thmref{thm: density ratio}, \thmref{thm: importance weighting}, and \colref{col: polynomial}. Our proof strategy comprises three main steps: first, we perform an error decomposition of the target quantity; second, we establish individual bounds for each decomposed component through auxiliary propositions; and third, we combine these bounds to complete the overall argument. Auxiliary lemmas supporting these propositions are deferred to \hyperref[sec: lemma]{Appendix}.

\subsection{Proof of \thmref{thm: relative density ratio} and \thmref{thm: density ratio}}
\label{sec: proof densiti ratio}

To bound the convergence rate of the density ratio estimator (as measured by \( \norm{ \theta - \hthetamuD }_{\rhoSX} \)), we first analyze the convergence rate of the relative density ratio estimator (as measured by \( \norm{ \phi - \hphimu }_{\rhoRX} \)). We define the following auxiliary function:
\[
    \phimu = \gmu(\LR) \, \LR \, \phi.
\]
Then, we decompose \( \norm{ \phi - \hphimu }_{\rhoRX} \) as
\begin{equation}
    \label{eq: (phi) decompse 1}
    \norm{ \phi - \hphimu }_{\rhoRX} \le \norm{ \phi - \phimu }_{\rhoRX} + \norm{ \phimu - \hphimu }_{\rhoRX}.
\end{equation}
The second term \( \norm{ \phimu - \hphimu }_{\rhoRX} \) in \eqref{eq: (phi) decompse 1} can be further decomposed as follows:
\begin{equation}
    \label{eq: (phi) decompse 2}
    \begin{aligned}
        \norm{ \phimu - \hphimu }_{\rhoRX}
         & = \norm{ \LR^{1/2} \, (\phimu - \hphimu) }_{\calH}
        \le \norm{ \LR^{1/2} \, \LRmu^{-1/2} } \cdot \norm{ \LRmu^{1/2} \, \hLRmu^{-1/2} } \cdot \norm{ \hLRmu^{1/2} \, (\phimu - \hphimu) }_{\calH} \\
         & \le \norm{ \LRmu^{1/2} \, \hLRmu^{-1/2} } \cdot \norm{ \hLRmu^{1/2} \, (\phimu - \hphimu) }_{\calH},
    \end{aligned}
\end{equation}
where \( \norm{ \cdot } \) denotes the operator norm \( \norm{ \cdot }_{\calH \to \calH} \) on the RKHS \( \calH \), and the regularized operators are defined as \( \LRmu = \LR + \mu I \) and \( \hLRmu = \hLR + \mu I \). Recalling that \( \hphimu = \gmu(\hLR) \, \hLT \, \ind{\calX} \), we further decompose the last factor in \eqref{eq: (phi) decompse 2}:
\begin{equation}
    \label{eq: (phi) decompse 3}
    \begin{aligned}
        \norm{ \hLRmu^{1/2} \, (\phimu - \hphimu) }_{\calH}
         & = \norm{ \hLRmu^{1/2} \, (\phimu - \gmu(\hLR) \, \hLT \, \ind{\calX}) }_{\calH}                                                                                  \\
         & = \norm{ \hLRmu^{1/2} \, \LE( (I - \gmu(\hLR) \, \hLR + \gmu(\hLR) \, \hLR) \, \phimu - \gmu(\hLR) \, \hLT \, \ind{\calX} \RI) }_{\calH}                         \\
         & \le \norm{ \hLRmu^{1/2} \, \gmu(\hLR) \, (\hLR \, \phimu - \hLT \, \ind{\calX}) }_{\calH} + \norm{ \hLRmu^{1/2} \, (I - \gmu(\hLR) \, \hLR) \, \phimu }_{\calH}.
    \end{aligned}
\end{equation}
By combining \eqref{eq: (phi) decompse 1}, \eqref{eq: (phi) decompse 2}, and \eqref{eq: (phi) decompse 3}, we obtain
\[
    \norm{ \phimu - \hphimu }_{\rhoRX} \le P_{1} + P_{2} (P_{3} + P_{4}),
\]
where
\begin{equation}
    \label{eq: decompose P}
    \begin{alignedat}{2}
        P_{1} & = \norm{ \phi - \phimu }_{\rhoRX},
              & \quad P_{2}                                                                              & = \norm{ \LRmu^{1/2} \, \hLRmu^{-1/2} },                               \\
        P_{3} & = \norm{ \hLRmu^{1/2} \, \gmu(\hLR) \, (\hLR \, \phimu - \hLT \, \ind{\calX}) }_{\calH},
              & P_{4}                                                                                    & = \norm{ \hLRmu^{1/2} \, (I - \gmu(\hLR) \, \hLR) \, \phimu }_{\calH}.
    \end{alignedat}
\end{equation}

In the following, we bound \( P_{1} \), \( P_{2} \), \( P_{3} \), and \( P_{4} \) via separate propositions. We begin with \proref{pro: P1}, which establishes the bound for \( P_{1} \).
\begin{proposition}
    \label{pro: P1}
    Suppose that \aspref{asp: source phi} holds with \( \iota \in [1/2, \tau] \). Then the following bound holds:
    \[
        P_{1} = \norm{ \phi - \phimu }_{\rhoRX} \le F \norm{ \varpi }_{\rhoRX} \mu^{\iota}.
    \]
\end{proposition}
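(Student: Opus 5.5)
The plan is to write the approximation error entirely as a spectral expression in \( \LR \) and then apply the filter condition \eqref{eq: 1 - t glam}. By definition \( \phimu = \gmu(\LR) \, \LR \, \phi \), so
\[
    \phi - \phimu = (I - \gmu(\LR) \, \LR) \, \phi = (I - \gmu(\LR) \, \LR) \, \LR^{\iota} \, \varpi,
\]
where the second equality uses \aspref{asp: source phi}. The \( \rhoRX \)-norm of an element of \( \calH \) equals the \( \calH \)-norm of \( \LR^{1/2} \) applied to it, by \eqref{eq: rho-norm / H-norm} together with the identification of \( \LR \) on \( \calH \) with the integral operator on \( \calL^{2}(\calX, \rhoRX) \). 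The cleanest route is to work directly in \( \calL^{2}(\calX, \rhoRX) \). There, \( \LR \) is viewed as the positive self-adjoint compact integral operator \( L \) with spectrum in \( [0, \kappa^{2}] \), and \( \phi \) and \( \phimu \) are both elements of \( \calH \subseteq \calL^{2} \). Since \( \gmu(\LR) \, \LR \) on \( \calH \) acts on \( \phi \) consistently with \( \gmu(L) \, L \) on \( \calL^{2} \) (both are defined through the same eigen-expansion, using the standard singular value decomposition of the inclusion \( \calH \hookrightarrow \calL^{2}(\calX, \rhoRX) \)), we get \( \phi - \phimu = (I - \gmu(L) \, L) \, L^{\iota} \, \varpi \) in \( \calL^{2} \).

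Next, by spectral calculus,
\[
    \norm{ \phi - \phimu }_{\rhoRX} \le \sup_{t \in [0, \kappa^{2}]} t^{\iota} \, |1 - t \gmu(t)| \cdot \norm{ \varpi }_{\rhoRX}.
\]
Since \( \iota \in [1/2, \tau] \), condition \eqref{eq: 1 - t glam} with \( c = \iota \) and parameter \( \mu \) bounds the supremum by \( F \mu^{\iota} \). This gives \( P_{1} \le F \norm{ \varpi }_{\rhoRX} \mu^{\iota} \).

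The only delicate point is the consistency step: justifying that the functional calculus of \( \LR \) on \( \calH \), applied to \( \phi \in \calH \), coincides with that of the \( \calL^{2} \) integral operator. I would handle this via the eigen-expansion \( L = \sum_{k} \sigma_{k} \, e_{k} \otimes e_{k} \) in \( \calL^{2} \), noting that \( \{ \sqrt{\sigma_{k}} \, e_{k} \} \) is an orthonormal basis of the closure of the range in \( \calH \), on which \( \LR \) has the same eigenvalues. Alternatively, one can stay in \( \calH \) and bound \( \norm{ \LR^{1/2} (I - \gmu(\LR) \, \LR) \, \LR^{\iota - 1/2} \, (\LR^{1/2} \varpi) }_{\calH} \), using \( \norm{ \LR^{1/2} \varpi }_{\calH} = \norm{ \varpi }_{\rhoRX} \) and the same filter bound with \( c = \iota \). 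The remaining steps are routine.
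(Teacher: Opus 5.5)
Your proof is correct and is essentially the paper's argument. The paper takes your ``alternative'' route: it writes $P_{1} = \norm{ \LR^{1/2} (I - \gmu(\LR) \, \LR) \, \LR^{\iota} \, \varpi }_{\calH} \le \norm{ \LR^{\iota} (I - \LR \, \gmu(\LR)) } \cdot \norm{ \LR^{1/2} \, \varpi }_{\calH}$ and then applies \eqref{eq: 1 - t glam} with $c = \iota$. Staying in $\calH$ this way sidesteps the consistency check between the $\calH$ and $\calL^{2}(\calX, \rhoRX)$ functional calculi that your primary $\calL^{2}$ route requires; your eigen-expansion argument handles that check correctly.
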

\begin{proof}
    By the definition of $ \phimu $, the quantity $ P_{1} $ can be expressed as
    \[
        P_{1} = \norm{ \phi - \phimu }_{\rhoRX}
        = \norm{ \LR^{1/2} \, (\phi - \phimu) }_{\calH}
        = \norm{ \LR^{1/2} \, (I - \gmu(\LR) \, \LR) \, \phi }_{\calH}.
    \]
    Under \aspref{asp: source phi}, we have \( \phimu = \LR^{\iota} \, \varpi \), which yields
    \begin{align*}
         & \eqspace \norm{ \LR^{1/2} \, (I - \gmu(\LR) \, \LR) \, \phi }_{\calH}                                                                                                                     \\
         & = \norm{ \LR^{1/2} \, (I - \gmu(\LR) \, \LR) \, \LR^{\iota} \, \varpi }_{\calH}
        \le \norm{ \LR^{\iota} \, (I - \LR \, \gmu(\LR)) } \cdot \norm{ \LR^{1/2} \, \varpi }_{\calH}                                                                                                \\
         & \overset{ \text{(a)} }{ \le } \sup_{t \in [0, \kappa^{2}]} t^{\iota} |1 - t \gmu(t)| \cdot \norm{ \varpi }_{\rhoRX} \overset{ \text{(b)} }{ \le } F \norm{ \varpi }_{\rhoRX} \mu^{\iota}.
    \end{align*}
    Step (a) uses the boundedness \( \norm{ \LR } \le \kappa^{2} \), which follows from \( \sup_{x \in \calX} K(x, x) = \kappa^{2} \); and step (b) applies the filter function property in \eqref{eq: 1 - t glam}. This completes the proof.
\end{proof}

\proref{pro: P2} provides an upper bound for \( P_{2} \) in \eqref{eq: decompose P}.
\begin{proposition}
    \label{pro: P2}
    For any \( \delta \in (0, 1) \), if the regularization parameter is chosen as \( \mu = \nU^{-\varsigma} \) with \( 0 < \varsigma < 1 \), and the sample size \( \nU \) is sufficiently large such that
    \begin{align*}
        \nU \ge 85 \kappa^{2} \LE( 1 + \log \frac{2 \kappa^{2} (\norm{ \LS } + \norm{ \LT } + 1)}{\min \family{ \norm{ \LS }, \norm{ \LT } } \cdot \delta} + \frac{2}{1 - \varsigma} \RI)^{\frac{2}{1 - \varsigma}},
    \end{align*}
    where \( \alpha \in (0, 1) \) is the mixing coefficient, then with probability at least \( 1 - \delta \), we have
    \[
        P_{2} = \norm{ \LRmu^{1/2} \, \hLRmu^{-1/2} } \le \sqrt{2}.
    \]
\end{proposition}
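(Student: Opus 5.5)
The plan is the standard operator-perturbation argument. First reduce $P_{2}$ to a concentration statement. Writing $\hLRmu = \LRmu - (\LR - \hLR)$, we have
\[
    \LRmu^{-1/2} \, \hLRmu \, \LRmu^{-1/2} = I - \LRmu^{-1/2} \, (\LR - \hLR) \, \LRmu^{-1/2}.
\]
Hence, if $\norm{ \LRmu^{-1/2} \, (\LR - \hLR) \, \LRmu^{-1/2} } \le 1/2$, then $\LRmu^{-1/2} \, \hLRmu \, \LRmu^{-1/2} \succeq \frac{1}{2} I$. Consequently
\[
    P_{2}^{2} = \norm{ \LRmu^{1/2} \, \hLRmu^{-1} \, \LRmu^{1/2} } \le 2 ,
\]
which gives $P_{2} \le \sqrt{2}$. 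The whole task is therefore to show that this normalized deviation is at most $1/2$ with probability at least $1 - \delta$.

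Next, exploit the mixture structure. Since $\hLR - \LR = (1-\alpha)(\hLS - \LS) + \alpha(\hLT - \LT)$, we can write $\hLR - \LR$ as an average of $2\nU$ independent, mean-zero, self-adjoint random operators:
\[
    \xi_{j} = (1-\alpha)\bigl(\LRmu^{-1/2} K_{x_j}\otimes K_{x_j}\LRmu^{-1/2} - \text{mean}\bigr),
    \qquad
    \xi'_{j} = \alpha\bigl(\LRmu^{-1/2} K_{x'_j}\otimes K_{x'_j}\LRmu^{-1/2} - \text{mean}\bigr).
\]
We then apply a matrix Bernstein inequality with intrinsic dimension, in the form used by Tropp or Minsker and presumably available among the appendix lemmas. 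Each summand is bounded in operator norm by $\kappa^{2}/\mu$, since $\norm{\LRmu^{-1/2}K_x}_{\calH}^{2}\le \kappa^{2}/\mu$. The variance proxy is dominated by
\[
    \frac{\kappa^{2}}{\mu}\,\LR \LRmu^{-1},
\]
whose norm is at most $\kappa^{2}/\mu$. Its intrinsic dimension is bounded by $\Tr{\LR\LRmu^{-1}}/\norm{\LR\LRmu^{-1}} \le \kappa^{2}(\norm{\LR}+\mu)/(\mu\norm{\LR})$. To obtain the denominator in the stated condition, bound $\norm{\LR}\ge \min\{\norm{\LS},\norm{\LT}\}$ from below. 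This holds because $\LR$ is a convex combination of $\LS$ and $\LT$, and testing $\LR$ against the top eigenvector of whichever of the two has larger norm gives a lower bound. Bound $\norm{\LR}+\mu$ from above by $\norm{\LS}+\norm{\LT}+1$. The resulting tail bound is
\[
    \norm{\cdot}\le \frac{2\beta\kappa^{2}}{3\mu\nU}+\sqrt{\frac{2\beta\kappa^{2}}{\mu\nU}},
    \qquad
    \beta=\log\frac{4\kappa^{2}(\norm{\LS}+\norm{\LT}+1)}{\mu\min\{\norm{\LS},\norm{\LT}\}\delta}.
\]

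Finally, plug in $\mu=\nU^{-\varsigma}$. The condition becomes $\beta\kappa^{2}\nU^{\varsigma-1}\lesssim 1$, where $\beta$ contains the extra term $\varsigma\log\nU$ coming from $\log(1/\mu)$. The main obstacle is verifying that the stated explicit threshold, with the constant $85$, the term $2/(1-\varsigma)$ and the exponent $2/(1-\varsigma)$, implies this inequality. The approach is to use $\log\nU \le \frac{2}{1-\varsigma}\,\nU^{(1-\varsigma)/2}$, or a similar elementary inequality of the form $\log x\le c\,x^{a}/a$. This reduces the condition to
\[
    \nU^{(1-\varsigma)/2}\ge C\kappa\bigl(1+\log(\cdots/\delta)+\tfrac{2}{1-\varsigma}\bigr),
\]
which is exactly the stated hypothesis after squaring and raising to the power $1/(1-\varsigma)$. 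The numerical constant $85$ absorbs the Bernstein constants needed to make the right-hand side at most $1/2$. Any mismatch in the constant inside the logarithm (compare the factors $2$ and $6$ between the proposition and the theorem) is handled by the union bound when the pieces are combined later.
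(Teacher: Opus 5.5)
Your outline is sound, but it organizes the concentration differently from the paper, and two steps need repair.

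\textbf{Comparison of routes.} The paper splits $\LRmu^{-1/2}(\LR-\hLR)\LRmu^{-1/2}$ by the triangle inequality into its $\LS$- and $\LT$-parts. It applies the i.i.d.\ Bernstein lemma to each sample separately, with proxies $\kappa^{2}\mu^{-1}\LRmu^{-1/2}\LS\LRmu^{-1/2}$ and its $\LT$-analogue. Their norms and traces carry factors $1/(1-\alpha)$ and $1/\alpha$, which cancel against the mixture weights only after crude steps such as $\log\frac{2}{1-\alpha}\le\frac{2}{1-\alpha}$. You instead concentrate the mixture in one shot with the $\alpha$-free proxy $\frac{\kappa^{2}}{\mu}\LR\LRmu^{-1}$, which is cleaner and gives a better constant. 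To use the appendix lemma, which is stated for i.i.d.\ summands, group $(x_{j},x'_{j})$ into i.i.d.\ pairs; this works because both samples have size $\nU$. The paper's split would also work verbatim for unequal sample sizes. The price of your route is that the intrinsic dimension now needs a lower bound on $\norm{\LR}$, and that is where an error enters.

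\textbf{The lower bound on $\norm{\LR}$ is false.} The claim $\norm{\LR}\ge\min\family{\norm{\LS},\norm{\LT}}$ fails in general. Take $\calX=\family{1,2}$, $K(x,x')=\ind{x=x'}$, $\rhoSX(\family{1})=0.9$, $\rhoTX(\family{1})=0.1$ and $\alpha=1/2$. Then $\norm{\LS}=\norm{\LT}=0.9$, but $\LR=\frac{1}{2}I$. Testing against a top eigenvector only gives $\norm{\LR}\ge\max\family{(1-\alpha)\norm{\LS},\alpha\norm{\LT}}$. Minimizing over $\alpha$ gives the correct $\alpha$-free bound $\norm{\LR}\ge\frac{\norm{\LS}\norm{\LT}}{\norm{\LS}+\norm{\LT}}\ge\frac{1}{2}\min\family{\norm{\LS},\norm{\LT}}$. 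This only turns the $4$ in your $\beta$ into $8$. Since $\log 4\ge 1$, you still get $\beta\le(\log 4)\,B\,\nU^{(1-\varsigma)/2}$ with $B=1+\log\frac{2\kappa^{2}(\norm{\LS}+\norm{\LT}+1)}{\min\family{\norm{\LS},\norm{\LT}}\delta}+\frac{2}{1-\varsigma}$. So the fix is harmless, but it is needed.

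\textbf{The final step does not check out.} With $x=\kappa^{2}\beta\nU^{\varsigma-1}\le 1.39\,\kappa^{2}B\,\nU^{-(1-\varsigma)/2}$, what you need is $\nU^{(1-\varsigma)/2}\ge c\,\kappa^{2}B$. Note the $\kappa^{2}$, not $\kappa$. This means $\nU\ge(c\,\kappa^{2}B)^{2/(1-\varsigma)}$, and squaring and exponentiating does not turn it into the displayed $85\kappa^{2}B^{2/(1-\varsigma)}$. When $85\kappa^{2}>1$, the displayed form only yields $\kappa^{2}B\,\nU^{-(1-\varsigma)/2}\le\kappa^{2}(85\kappa^{2})^{-(1-\varsigma)/2}$. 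That tends to $\kappa^{2}$ as $\varsigma\to 1$ and is not small.

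The paper's own arithmetic, with the fractions $16/1000$ and $109/1000$, in fact uses $\nU^{(1-\varsigma)/2}\ge 85\kappa^{2}B$. So the hypothesis should be read as $\nU\ge(85\kappa^{2}B)^{2/(1-\varsigma)}$, a misplaced parenthesis in the statement. Say this explicitly rather than asserting an exact match. Under that reading $x\le 1.39/85$, hence $\frac{2}{3}x+\sqrt{2x}<\frac{1}{2}$, and your argument goes through.
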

\begin{proof}
    Before bounding \( P_{2} \), we first derive an upper bound for \( \norm{ \LRmu^{-1/2} \, (\LR - \hLR) \, \LRmu^{-1/2} } \). Define the integral operator \( \LS \) with respect to the source distribution \( \rhoSX \) and its empirical counterpart \( \hLS \) as
    \[
        \LS \, f = \int_{\calX} f(x) \, K(\cdot, x) \, \dd \rhoSX(x),
        \quad \hLS \, f = \frac{1}{\nU} \sum_{j=1}^{\nU} f(x_{j}) \, K(\cdot, x_{j}).
    \]
    The operators \( \LR \) and \( \hLR \) satisfy the decompositions
    \[
        \LR = (1 - \alpha) \LS + \alpha \LT,
        \quad \hLR = (1 - \alpha) \hLS + \alpha \hLT,
    \]
    which implies
    \[
        \norm{ \LRmu^{-1/2} \, (\LR - \hLR) \, \LRmu^{-1/2} }
        \le (1 - \alpha) \norm{ \LRmu^{-1/2} \, (\LS - \hLS) \, \LRmu^{-1/2} } + \alpha \norm{ \LRmu^{-1/2} \, (\LT - \hLT) \, \LRmu^{-1/2} }.
    \]

    In the following, we apply \lemref{lem: concentration operator} to bound these two terms. Take the first term \( \norm{ \LRmu^{-1/2} \, (\LS - \hLS) \, \LRmu^{-1/2} } \) as an example:
    \begin{enumerate}
        \item Define the random variable \( A^{\text{(1)}}(x) = \LRmu^{-1/2} \, (\LS - \Kx \otimes \Kx) \, \LRmu^{-1/2} \) for \( x \sim \rhoSX \), with \( A^{\text{(1)}}_{j} = A^{\text{(1)}}(x_{j}) \), where the operator \( \Kx \otimes \Kx \) is defined by
              \begin{equation}
                  \label{eq: Kx otimes Kx}
                  \Kx \otimes \Kx \, f = f(x) \, K(\cdot, x).
              \end{equation}
              This definition yields the identity
              \[
                  \LRmu^{-1/2} \, (\LS - \hLS) \, \LRmu^{-1/2} = \frac{1}{\nU} \sum_{j=1}^{\nU} A^{\text{(1)}}_{j}.
              \]

        \item Since \( \norm{ \Kx \otimes \Kx } \le \kappa^{2} \) and \( \norm{ \LRmu^{-1/2} } \le \mu^{-1/2} \), we obtain
              \[
                  \norm{ \LRmu^{-1/2} \, \Kx \otimes \Kx \, \LRmu^{-1/2} } \le \norm{ \LRmu^{-1/2} } \cdot \norm{ \Kx \otimes \Kx } \cdot \norm{ \LRmu^{-1/2} } \le \kappa^{2} \mu^{-1}.
              \]
              This bound holds uniformly for all \( x \in \calX \), which implies
              \begin{equation}
                  \label{eq: (P2) A^(1)}
                  \norm{ A^{\text{(1)}}(x) } \le 2 \kappa^{2} \mu^{-1},
                  \quad \forall x \in \calX.
              \end{equation}

        \item For two positive operators \( A \) and \( B \), we write \( A \preceq B \) if \( B - A \) is positive. Then, the following bound holds:
              \begin{equation}
                  \label{eq: (P2) E (A^(1))^2}
                  \begin{aligned}
                       & \eqspace \EE[x \sim \rhoSX]{ \LE( A^{\text{(1)}}(x) \RI)^{2} }                                       \\
                       & \preceq \EE[x \sim \rhoSX]{ \LE( \LRmu^{-1/2} \, \Kx \otimes \Kx \, \LRmu^{-1/2} \RI)^{2} }
                      \preceq \kappa^{2} \mu^{-1} \cdot \EE[x \sim \rhoSX]{ \LRmu^{-1/2} \, \Kx \otimes \Kx \, \LRmu^{-1/2} } \\
                       & = \kappa^{2} \mu^{-1} \cdot \LRmu^{-1/2} \, \LS \, \LRmu^{-1/2}.
                  \end{aligned}
              \end{equation}
              Since \( (1 - \alpha) \LS \preceq \LRmu \) by the definition \( \LRmu = (1 - \alpha) \LS + \alpha \LT + \mu I \), it follows that for any \( f \in \calH \),
              \[
                  \inner{ (1 - \alpha) \LS \, (\LRmu^{-1/2} \, f), \LRmu^{-1/2} \, f }_{\calH}
                  \le \inner{ \LRmu \, (\LRmu^{-1/2} \, f), \LRmu^{-1/2} \, f }_{\calH}.
              \]
              Rewriting the inner products, we obtain
              \[
                  \inner{ (1 - \alpha) \LRmu^{-1/2} \, \LS \, \LRmu^{-1/2} \, f, f }_{\calH}
                  \le \inner{ \LRmu^{-1/2} \, \LRmu \, \LRmu^{-1/2} \, f, f }_{\calH}
                  = \inner{ f, f }_{\calH},
              \]
              which implies \( (1 - \alpha) \LRmu^{-1/2} \, \LS \, \LRmu^{-1/2} \preceq I \). Therefore, we have the upper bound
              \begin{equation}
                  \label{eq: (P2) LRmu^-1/2 LS LRmu^-1/2 upper}
                  \norm{ \LRmu^{-1/2} \, \LS \, \LRmu^{-1/2} } \le \frac{1}{1 - \alpha}.
              \end{equation}

        \item For the lower bound, observe that
              \[
                  \LRmu
                  = (1 - \alpha) \LS + \alpha \LT + \mu I
                  \preceq (1 - \alpha) \LS + (\alpha \norm{ \LT } + \mu) I,
              \]
              where the inequality uses \( \LT \preceq \norm{ \LT } \cdot I \). This leads to
              \[
                  \LRmu^{-1} \succeq ((1 - \alpha) \LS + (\alpha \norm{ \LT } + \mu) I)^{-1}.
              \]
              For any \( f \in \calH \), it follows that
              \[
                  \inner{ \LRmu^{-1} \, (\LS^{1/2} \, f), \LS^{1/2} \, f }_{\calH}
                  \ge \inner{ ((1 - \alpha) \LS + (\alpha \norm{ \LT } + \mu) I)^{-1} \, (\LS^{1/2} \, f), \LS^{1/2} \, f }_{\calH}.
              \]
              Expressing this in terms of norms, we have
              \begin{align*}
                   & \eqspace \norm{ \LRmu^{-1/2} \, \LS^{1/2} \, f }_{\calH}^{2}                                                                                                                   \\
                   & = \inner{ \LRmu^{-1/2} \, \LS^{1/2} \, f, \LRmu^{-1/2} \, \LS^{1/2} \, f }_{\calH}                                                                                             \\
                   & \ge \inner{ ((1 - \alpha) \LS + (\alpha \norm{ \LT } + \mu) I)^{-1/2} \, \LS^{1/2} \, f, ((1 - \alpha) \LS + (\alpha \norm{ \LT } + \mu) I)^{-1/2} \, \LS^{1/2} \, f }_{\calH} \\
                   & = \norm{ (\alpha \LS + ((1 - \alpha) \norm{ \LT } + \mu) I)^{-1/2} \, \LS^{1/2} \, f }_{\calH}^{2}.
              \end{align*}
              Since this inequality holds for all \( f \in \calH \), we conclude that
              \begin{equation}
                  \label{eq: (P2) LRmu^-1/2 LS LRmu^-1/2 lower}
                  \begin{aligned}
                       & \eqspace \norm{ \LRmu^{-1/2} \, \LS \, \LRmu^{-1/2} }                                \\
                       & = \norm{ \LRmu^{-1/2} \, \LS^{1/2} }^{2}
                      \ge \norm{ ((1 - \alpha) \LS + (\alpha \norm{ \LT } + \mu) I)^{-1/2} \, \LS^{1/2} }^{2} \\
                       & = \norm{ ((1 - \alpha) \LS + (\alpha \norm{ \LT } + \mu) I)^{-1} \, \LS }
                      = \frac{\norm{ \LS }}{(1 - \alpha) \norm{ \LS } + \alpha \norm{ \LT } + \mu}.
                  \end{aligned}
              \end{equation}

        \item Finally, we have \( (1 - \alpha) \LS \preceq \LR \), which implies
              \[
                  (1 - \alpha) \LRmu^{-1/2} \, \LS \, \LRmu^{-1/2} \preceq \LRmu^{-1/2} \, \LR \, \LRmu^{-1/2}.
              \]
              Since the trace satisfies \( \Tr{ \LR } \le \kappa^{2} \) (see, e.g., \citealp{Caponnetto2007OptimalRR}), it follows that
              \begin{equation}
                  \label{eq: (P2) LRmu^-1/2 LS LRmu^-1/2 trace}
                  \begin{aligned}
                      \Tr{ \LRmu^{-1/2} \, \LS \, \LRmu^{-1/2} }
                       & \le \frac{1}{1 - \alpha} \Tr{ \LRmu^{-1/2} \, \LR \, \LRmu^{-1/2} }
                      = \frac{1}{1 - \alpha} \Tr{ \LRmu^{-1} \, \LR }                        \\
                       & \le \frac{1}{1- \alpha} \mu^{-1} \Tr{ \LR }
                      \le \frac{1}{1 - \alpha} \kappa^{2} \mu^{-1}.
                  \end{aligned}
              \end{equation}
    \end{enumerate}
    Combining the bounds from \eqref{eq: (P2) A^(1)}, \eqref{eq: (P2) E (A^(1))^2}, \eqref{eq: (P2) LRmu^-1/2 LS LRmu^-1/2 upper}, \eqref{eq: (P2) LRmu^-1/2 LS LRmu^-1/2 lower}, and \eqref{eq: (P2) LRmu^-1/2 LS LRmu^-1/2 trace}, and applying \lemref{lem: concentration operator}, we obtain with probability at least \( 1 - \delta / 2 \) that
    \[
        \norm{ \LRmu^{-1/2} \, (\LS - \hLS) \, \LRmu^{-1/2} }
        = \norm{ \frac{1}{\nU} \sum_{j=1}^{\nU} A^{\text{(1)}}_{j} }
        \le \frac{4 \kappa^{2} \mu^{-1}}{3 \nU} h^{\text{(1)}} + \sqrt{\frac{(2 / (1 - \alpha)) \kappa^{2} \mu^{-1}}{\nU} h^{\text{(1)}}},
    \]
    where \( h^{\text{(1)}} \) is defined as
    \[
        h^{\text{(1)}} = \log \LE( \frac{4}{1 - \alpha} \kappa^{2} \mu^{-1} \MID/ \LE( \frac{\norm{ \LS }}{(1 - \alpha) \norm{ \LS } + \alpha \norm{ \LT } + \mu} \delta \RI) \RI).
    \]
    Now, set \( \mu = \nU^{-\varsigma} \) for some \( 0 < \varsigma < 1 \). Using the inequalities
    \[
        \log \frac{2}{1 - \alpha} \le \frac{2}{1 - \alpha},
        \quad \log \nU \le \frac{2}{1 - \varsigma} \nU^{\frac{1 - \varsigma}{2}},
    \]
    we bound \( h^{\text{(1)}} \) as
    \[
        h^{\text{(1)}} \le \frac{2}{1 - \alpha} \LE( 1 + \log \frac{2 \kappa^{2} (\norm{ \LS } + \norm{ \LT } + 1)}{\norm{ \LS } \delta} + \frac{2}{1 - \varsigma} \RI) \nU^{\frac{1 - \varsigma}{2}}.
    \]
    By analogous reasoning, we also have with probability at least \( 1 - \delta / 2 \) that
    \[
        \norm{ \LRmu^{-1/2} \, (\LT - \hLT) \, \LRmu^{-1/2} }
        \le \frac{4 \kappa^{2} \mu^{-1}}{3 \nU} h^{\text{(2)}} + \sqrt{\frac{(2 / \alpha) \kappa^{2} \mu^{-1}}{\nU} h^{\text{(2)}}},
    \]
    where
    \[
        h^{\text{(2)}} \le \frac{2}{\alpha} \LE( 1 + \log \frac{2 \kappa^{2} (\norm{ \LS } + \norm{ \LT } + 1)}{\norm{ \LT } \delta} + \frac{2}{1 - \varsigma} \RI) \nU^{\frac{1 - \varsigma}{2}}.
    \]
    By letting
    \[
        \nU \ge 85 \kappa^{2} \LE( 1 + \log \frac{2 \kappa^{2} (\norm{ \LS } + \norm{ \LT } + 1)}{\min \family{ \norm{ \LS }, \norm{ \LT } } \cdot \delta} + \frac{2}{1 - \varsigma} \RI)^{\frac{2}{1 - \varsigma}},
    \]
    we ensure that with probability at least \( 1 - \delta \),
    \begin{align*}
         & \eqspace \norm{ \LRmu^{-1/2} \, (\LR - \hLR) \, \LRmu^{-1/2} }                                                                                                                                                                                                  \\
         & \le (1 - \alpha) \norm{ \LRmu^{-1/2} \, (\LS - \hLS) \, \LRmu^{-1/2} } + \alpha \norm{ \LRmu^{-1/2} \, (\LT - \hLT) \, \LRmu^{-1/2} }                                                                                                                           \\
         & \le (1 - \alpha) \LE( \frac{2}{1 - \alpha} \cdot \frac{16}{1000} + \frac{2}{1 - \alpha} \cdot \frac{109}{1000} \RI) + \alpha \LE( \frac{2}{1 - \alpha} \cdot \frac{16}{1000} + \frac{2}{\alpha} \cdot \frac{109}{1000} \RI) \\
         & = \frac{1}{2}.
    \end{align*}
    Using the Neumann series expansion, we obtain
    \begin{align*}
        P_{2}^{2} & = \norm{ \LRmu^{1/2} \, \hLRmu^{-1/2} }^{2}
        = \norm{ \LRmu^{1/2} \, \hLRmu^{-1} \, \LRmu^{1/2} }
        = \norm{ \LE( \LRmu^{-1/2} \, \hLRmu \, \LRmu^{-1/2} \RI)^{-1} }                        \\
                  & = \norm{ \LE( I - \LRmu^{-1/2} \, (\LR - \hLR) \, \LRmu^{-1/2} \RI)^{-1} }
        \le \sum_{\ell=0}^{\infty} \norm{ \LRmu^{-1/2} \, (\LR - \hLR) \, \LRmu^{-1/2} }^{\ell} \\
                  & \le 2,
    \end{align*}
    It follows that \( P_{2} \le \sqrt{2} \) with probability at least \( 1 - \delta \).
\end{proof}

We can bound \( P_{3} \) in \eqref{eq: decompose P} using \proref{pro: P3}.
\begin{proposition}
    \label{pro: P3}
    Suppose that \aspref{asp: source phi} holds with \( \iota \in [1/2, \tau] \), and condition on the event that the bound for \( P_{2} \) given in \proref{pro: P2} holds. For any \( \delta \in (0, 1) \), if, in addition to the conditions in \proref{pro: P2}, the regularization parameter \( \mu = \nU^{-\varsigma} \) satisfies \( \varsigma \le 1 / (2 \iota + 1) \), then
    \begin{align*}
        P_{3}
        = \norm{ \hLRmu^{1/2} \, \gmu(\hLR) \, (\hLR \, \phimu - \hLT \, \ind{\calX}) }_{\calH}
        \le 20 \sqrt{2} E \LE( (\kappa^{2 \iota + 1} E + F) \norm{ \varpi }_{\rhoRX} + \kappa \RI) \mu^{\iota} \log \frac{6}{\delta}
    \end{align*}
    with probability at least \( 1 - \delta \).
\end{proposition}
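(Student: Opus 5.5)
We split \( P_{3} \) into an operator factor and a vector concentration term. Inserting \( \hLRmu^{-1/2} \hLRmu^{1/2} \) and then \( \LRmu^{1/2} \LRmu^{-1/2} \), we write
\[
    P_{3} \le \norm{ \hLRmu^{1/2} \, \gmu(\hLR) \, \hLRmu^{1/2} } \cdot \norm{ \hLRmu^{-1/2} \, \LRmu^{1/2} } \cdot \norm{ \LRmu^{-1/2} \, (\hLR \, \phimu - \hLT \, \ind{\calX}) }_{\calH}.
\]
The first factor is bounded by \( \sup_{t} (t + \mu) \gmu(t) \le 2E \) using \eqref{eq: glam} with \( c = 0 \) and \( c = 1 \). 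The second factor is \( P_{2} \le \sqrt{2} \), since \( \norm{ \hLRmu^{-1/2} \LRmu^{1/2} } = \norm{ \LRmu^{1/2} \hLRmu^{-1/2} } \); this holds on the conditioning event of \proref{pro: P2}. Everything therefore reduces to bounding the vector \( \LRmu^{-1/2} (\hLR \phimu - \hLT \ind{\calX}) \).

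For that vector we first centre it. Since \( \LR \phi = \LT \ind{\calX} \), we have \( \LR \phimu - \LT \ind{\calX} = \LR (\phimu - \phi) \). We then write
\[
    \hLR \phimu - \hLT \ind{\calX} = \bigl[ (\hLR - \LR) \phimu - (\hLT - \LT) \ind{\calX} \bigr] + \LR (\phimu - \phi).
\]
The deterministic part satisfies \( \norm{ \LRmu^{-1/2} \LR (\phimu - \phi) }_{\calH} \le \norm{ \LR^{1/2} (\phimu - \phi) }_{\calH} = P_{1} \le F \norm{ \varpi }_{\rhoRX} \mu^{\iota} \) by \proref{pro: P1}.

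The random part is a sum of i.i.d. centred \( \calH \)-valued variables over the two samples. Writing \( \hLR \phimu - \hLT \ind{\calX} = \frac{1}{\nU} \sum_{j} \bigl[ (1 - \alpha) \phimu(x_{j}) K_{x_{j}} + (\alpha \phimu(x'_{j}) - 1) K_{x'_{j}} \bigr] \), we apply a Pinelis/Bernstein-type vector concentration inequality (the vector analogue of \lemref{lem: concentration operator}, presumably in the Appendix) to \( \xi_{j} = \LRmu^{-1/2} \bigl[ (1 - \alpha) \phimu(x_{j}) K_{x_{j}} + (\alpha \phimu(x'_{j}) - 1) K_{x'_{j}} \bigr] \).

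The two ingredients for that inequality are as follows.
\begin{itemize}
    \item Uniform bound. We have \( \norm{ \phimu }_{\infty} \le \kappa \norm{ \phimu }_{\calH} \) and \( \norm{ \phimu }_{\calH} = \norm{ \gmu(\LR) \LR^{1+\iota} \varpi }_{\calH} \le \norm{ \gmu(\LR) \LR^{1/2+\iota} } \norm{ \varpi }_{\rhoRX} \le E \kappa^{2\iota} \norm{ \varpi }_{\rhoRX} \) (using \( \iota \ge 1/2 \) and \( t^{1/2+\iota} \gmu(t) \le \kappa^{2\iota - 1} t \gmu(t) \)). Together with \( \norm{ \LRmu^{-1/2} } \le \mu^{-1/2} \), this gives \( \norm{ \xi_{j} }_{\calH} \lesssim \kappa (\kappa^{2\iota+1} E \norm{ \varpi }_{\rhoRX} + 1) \mu^{-1/2} \).
    \item Variance. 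We have \( \EE{ \norm{ \xi }_{\calH}^{2} } \lesssim (\kappa^{2\iota+1} E \norm{ \varpi }_{\rhoRX} + 1)^{2} \, \Tr{ \LRmu^{-1} \LR } \le (\cdots)^{2} \kappa^{2} \mu^{-1} \), using that the two samples enter \( \LR \) with weights \( 1 - \alpha \) and \( \alpha \). The factors \( 1/(1-\alpha) \) and \( 1/\alpha \) cancel exactly as in \eqref{eq: (P2) LRmu^-1/2 LS LRmu^-1/2 trace}.
\end{itemize}
With probability at least \( 1 - \delta \), this yields a deviation of order \( \bigl( \mu^{-1/2} \nU^{-1} + \mu^{-1/2} \nU^{-1/2} \bigr) (\cdots) \log(6/\delta) \).

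It remains to convert these rates into \( \mu^{\iota} \), which is where the condition \( \varsigma \le 1/(2\iota+1) \) is used. With \( \mu = \nU^{-\varsigma} \), that condition gives \( \mu^{-1/2} \nU^{-1/2} = \nU^{(\varsigma - 1)/2} \le \nU^{-\varsigma\iota} = \mu^{\iota} \), and the term \( \mu^{-1/2} \nU^{-1} \) is even smaller. Collecting the constants \( 2E \cdot \sqrt{2} \) with the concentration constants gives the stated form \( 20\sqrt{2} E \bigl( (\kappa^{2\iota+1} E + F) \norm{ \varpi }_{\rhoRX} + \kappa \bigr) \mu^{\iota} \log(6/\delta) \).

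The main obstacle is the variance computation for the two-sample sum. Naively the target sample contributes \( \Tr{ \LRmu^{-1} \LT } \), which is only bounded by \( \alpha^{-1} \Tr{ \LRmu^{-1} \LR } \) and would bring an \( \alpha \)-dependence into the bound. This must be avoided, since \( \varDelta_{\phi} \) is claimed to be independent of \( \alpha \). The fix is to keep the weight \( \alpha \) attached to the \( \phimu(x'_{j}) \) term, so that it cancels the \( 1/\alpha \). For the constant \( -1 \) term, which carries no \( \alpha \) weight, we would handle it jointly with the \( \alpha\phimu \) term, using \( \alpha \phimu \le 1 \) in the bounded regime, or else absorb it into the \( \kappa \) term of the constant.
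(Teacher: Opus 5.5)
Your proposal is correct and follows essentially the paper's proof. It uses the same three-factor split (with $2E$ and $P_{2}\le\sqrt{2}$), the same centring via $\LR\phi=\LT\ind{\calX}$ with the deterministic piece bounded by $P_{1}$, and the bounded-case vector concentration inequality (the Remark after \lemref{lem: concentration vector}) followed by $\mu^{-1/2}\nU^{-1/2}\le\mu^{\iota}$. The only variation is that you pool the pairs $(x_{j},x'_{j})$ into one i.i.d.\ sum, which needs the two unlabeled samples to be independent but avoids the union bound. The paper instead applies the lemma separately to $(\hLS-\LS)\phimu$, $(\hLT-\LT)\phimu$ and $(\hLT-\LT)\ind{\calX}$.

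The variance ``obstacle'' you flag does not arise. Since $\norm{\LRmu^{-1/2}K(\cdot,x)}_{\calH}\le\kappa\mu^{-1/2}$ pointwise (equivalently $\Tr{\LRmu^{-1}\LT}\le\mu^{-1}\Tr{\LT}\le\kappa^{2}\mu^{-1}$), the uniform bound $\norm{\xi_{j}}_{\calH}\le(\kappa^{2\iota+1}E\norm{\varpi}_{\rhoRX}+\kappa)\mu^{-1/2}$ suffices on its own, with no $1/\alpha$ anywhere. That bound requires $\norm{\phimu}_{\calH}\le E\kappa^{2\iota-1}\norm{\varpi}_{\rhoRX}$, which is what your own justification gives. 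Writing $E\kappa^{2\iota}$ instead, as you did, puts a spurious extra factor of $\kappa$ into the constant.
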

\begin{proof}
    We begin by decomposing \( P_{3} \) as follows:
    \begin{equation}
        \label{eq: (P3) decompose 1}
        \begin{aligned}
            P_{3}
             & = \norm{ \hLRmu^{1/2} \, \gmu(\hLR) \, (\hLR \, \phimu - \hLT \, \ind{\calX}) }_{\calH}                                                                                            \\
             & \le \norm{ \hLRmu^{1/2} \, \gmu(\hLR) \, \hLRmu^{1/2} } \cdot \norm{ \hLRmu^{-1/2} \, \LRmu^{1/2} } \cdot \norm{ \LRmu^{-1/2} \, (\hLR \, \phimu - \hLT \, \ind{\calX}) }_{\calH}.
        \end{aligned}
    \end{equation}
    By the filter function property in \eqref{eq: glam}, we bound the first factor:
    \[
        \norm{ \hLRmu^{1/2} \, \gmu(\hLR) \, \hLRmu^{1/2} }
        = \norm{ \hLRmu \, \gmu(\hLR) }
        \le \sup_{t \in [0, \kappa^{2}]} (t + \mu) \gmu(t)
        \le E + \mu \cdot E \mu^{-1} = 2 E.
    \]
    The second factor is bounded by \( P_{2} \) from \proref{pro: P2}:
    \[
        \norm{ \hLRmu^{-1/2} \, \LRmu^{1/2} } = P_{2} \le \sqrt{2}.
    \]
    It remains to bound the third factor, \( \norm{ \LRmu^{-1/2} \, (\hLR \, \phimu - \hLT \, \ind{\calX}) }_{\calH} \). We decompose it further:
    \begin{equation}
        \label{eq: (P3) decompose 2}
        \begin{aligned}
             & \eqspace \norm{ \LRmu^{-1/2} \, (\hLR \, \phimu - \hLT \, \ind{\calX}) }_{\calH}                                                                                                                    \\
             & \le \norm{ \LRmu^{-1/2} \, (\hLR - \LR) \, \phimu }_{\calH} + \norm{ \LRmu^{-1/2} \, (\LR \, \phimu - \LT \, \ind{\calX}) }_{\calH} + \norm{ \LRmu^{-1/2} \, (\LT - \hLT) \, \ind{\calX} }_{\calH}.
        \end{aligned}
    \end{equation}

    For the first term in \eqref{eq: (P3) decompose 2}, we split it using the definitions of \( \hLR \) and \( \LR \):
    \[
        \norm{ \LRmu^{-1/2} \, (\hLR - \LR) \, \phimu }_{\calH}
        \le (1 - \alpha) \norm{ \LRmu^{-1/2} \, (\hLS - \LS) \, \phimu }_{\calH} + \alpha \norm{ \LRmu^{-1/2} \, (\hLT - \LT) \, \phimu }_{\calH}.
    \]
    We bound these two norms separately using \lemref{lem: concentration vector}. For \( \norm{ \LRmu^{-1/2} \, (\hLS - \LS) \, \phimu }_{\calH} \), define the random variable \( \xi^{\text{(1)}}(x) = \LRmu^{-1/2} \, \Kx \otimes \Kx \, \phimu \) for \( x \sim \rhoSX \), with \( \xi^{\text{(1)}}_{j} = \xi^{\text{(1)}}(x_{j}) \). Then,
    \[
        \norm{ \LRmu^{-1/2} \, (\hLS - \LS) \, \phimu }_{\calH} = \norm{ \frac{1}{\nU} \sum_{j=1}^{\nU} \xi^{\text{(1)}}_{j} - \EE[x \sim \rhoSX]{ \xi^{\text{(1)}}(x) } }.
    \]
    Note that \( \xi^{\text{(1)}}(x) = \phimu(x) \cdot \LRmu^{-1/2} \, K(\cdot, x) \). To bound \( |\phimu(x)| \), we proceed as follows:
    \begin{align*}
        |\phimu(x)|
         & = \LE| \inner{ \phimu, K(\cdot, x) }_{\calH} \RI|
        \le \kappa \cdot \norm{ \phimu }_{\calH}
        = \kappa \cdot \norm{ \gmu(\LR) \, \LR \, \phi }_{\calH}
        = \kappa \cdot \norm{ \gmu(\LR) \, \LR^{\iota + 1} \, \varpi }_{\calH}                                                            \\
         & \le \kappa \cdot \norm{ \gmu(\LR) \, \LR } \cdot \norm{ \LR^{\iota - \frac{1}{2}} } \cdot \norm{ \LR^{1/2} \, \varpi }_{\calH}
        \le \kappa^{2 \iota} E \norm{ \varpi }_{\rhoRX}.
    \end{align*}
    Consequently, for any \( x \in \calX \), we have
    \[
        \norm{ \xi^{\text{(1)}}(x) }_{\calH} \le \kappa^{2 \iota} E \norm{ \varpi }_{\rhoRX} \norm{ \LRmu^{-1/2} \, K(\cdot, x) }_{\calH} \le \kappa^{2 \iota + 1} E \norm{ \varpi }_{\rhoRX} \mu^{-1/2}.
    \]
    By \lemref{lem: concentration vector}, with probability at least \( 1 - \delta / 3 \),
    \[
        \norm{ \LRmu^{-1/2} \, (\hLS - \LS) \, \phimu }_{\calH}
        \le 10 \kappa^{2 \iota + 1} E \norm{ \varpi }_{\rhoRX} \cdot \frac{\mu^{-1/2}}{\sqrt{\nU}} \log \frac{6}{\delta}.
    \]
    Analogously, defining \( \xi^{\text{(2)}}(x') = \LRmu^{-1/2} \, \Kxp \otimes \Kxp \, \phimu \) for \( x' \sim \rhoTX \), we obtain a similar bound for the term \( \norm{ \LRmu^{-1/2} \, (\hLT - \LT) \, \phimu }_{\calH} \). By a union bound, with probability at least \( 1 - 2\delta / 3 \),
    \begin{equation}
        \label{eq: (P3) component 1}
        \norm{ \LRmu^{-1/2} \, (\hLR - \LR) \, \phimu }_{\calH}
        \le 10 \kappa^{2 \iota + 1} E \norm{ \varpi }_{\rhoRX} \cdot \frac{\mu^{-1/2}}{\sqrt{\nU}} \log \frac{6}{\delta}.
    \end{equation}

    For the second term in \eqref{eq: (P3) decompose 2}, we use the identity \( \LT \, \ind{\calX} = \LR \, \phi \):
    \begin{equation}
        \label{eq: (P3) component 2}
        \begin{aligned}
            \norm{ \LRmu^{-1/2} \, (\LR \, \phimu - \LT \, \ind{\calX}) }_{\calH}
             & = \norm{ \LRmu^{-1/2} \, \LR \, (\phimu - \phi) }_{\calH}
            \le \norm{ \LRmu^{-1/2} \, \LR^{1/2} } \cdot \norm{ \LR^{1/2} \, (\phimu - \phi) }_{\calH} \\
             & \le \norm{ \LR^{1/2} \, (\phimu - \phi) }_{\calH}
            = \norm{ \phimu - \phi }_{\rhoRX}
            = P_{1}                                                                                    \\
             & \le F \norm{ \varpi }_{\rhoRX} \mu^{\iota},
        \end{aligned}
    \end{equation}
    where the last inequality follows from \proref{pro: P1}.

    For the third term in \eqref{eq: (P3) decompose 2}, define \( \xi^{\text{(3)}}(x') = \LRmu^{-1/2} \, \Kxp \otimes \Kxp \, \ind{\calX} = \LRmu^{-1/2} \, K(\cdot, x') \) for \( x' \sim \rhoTX \), with \( \xi^{\text{(3)}}_{j} = \xi^{\text{(3)}}(x'_{j}) \). Then \( \sup_{x' \in \calX} \norm{ \xi^{\text{(3)}}(x') }_{\calH} \le \kappa \mu^{-1/2} \). By \lemref{lem: concentration vector}, with probability at least \( 1 - \delta / 3 \),
    \begin{equation}
        \label{eq: (P3) component 3}
        \norm{ \LRmu^{-1/2} \, (\LT - \hLT) \, \ind{\calX} }_{\calH} = \norm{ \frac{1}{\nU} \sum_{j=1}^{\nU} \xi^{\text{(3)}}_{j} - \EE[x' \sim \rhoTX]{ \xi^{\text{(3)}}(x') } } \le 10 \kappa \cdot \frac{\mu^{-1/2}}{\sqrt{\nU}} \log \frac{6}{\delta}.
    \end{equation}

    With \( \mu = \nU^{-\varsigma} \) and \( \varsigma \le 1 / (2 \iota + 1) \), we have \( \mu^{-1/2} / \sqrt{\nU} \le \mu^{\iota} \). Combining \eqref{eq: (P3) component 1}, \eqref{eq: (P3) component 2}, and \eqref{eq: (P3) component 3} via \eqref{eq: (P3) decompose 2}, and then substituting into \eqref{eq: (P3) decompose 1}, we obtain
    \[
        P_{3} \le 2 \sqrt{2} E \cdot \norm{ \LRmu^{-1/2} \, (\hLR \, \phimu - \hLT \, \ind{\calX}) }_{\calH} \le 20 \sqrt{2} E \LE( (\kappa^{2 \iota + 1} E + F) \norm{ \varpi }_{\rhoRX} + \kappa \RI) \mu^{\iota} \log \frac{6}{\delta}.
    \]
    The union bound ensures this holds with probability at least \( 1 - \delta \).
\end{proof}

Finally, \proref{pro: P4} yields a bound for \( P_{4} \) in \eqref{eq: decompose P}.
\begin{proposition}
    \label{pro: P4}
    Suppose that \aspref{asp: source phi} holds with \( \iota \in [1/2, \tau] \), and condition on the event that the bound for \( P_{2} \) given in \proref{pro: P2} holds. For any \( \delta \in (0, 1) \), if the regularization parameter \( \mu = \nU^{-\varsigma} \) satisfies (in addition to the conditions in \proref{pro: P2})
    \[
        \varsigma \le
        \begin{cases}
            1/2,               & \iota \in ( 1, 3/2 ]; \\
            1 / (2 \iota - 1), & \iota > 3/2,
        \end{cases}
    \]
    then
    \[
        P_{4} = \norm{ \hLRmu^{1/2} \, (I - \gmu(\hLR) \, \hLR) \, \phimu }_{\calH}
        \le 2 E F \norm{ \varpi }_{\rhoRX} \LE( 10 \iota \kappa^{2 \iota - 1} \cdot \log \frac{4}{\delta} + 1 \RI) \mu^{\iota}
    \]
    with probability at least \( 1 - \delta \).
\end{proposition}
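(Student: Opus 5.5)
We would prove the bound on $P_{4}$ by writing $\phimu$ explicitly through the source condition and controlling the residual operator $I - \gmu(\hLR)\,\hLR$ with the qualification property \eqref{eq: 1 - t glam}. Under \aspref{asp: source phi}, set $w = \LR^{1/2}\varpi \in \calH$; by \eqref{eq: rho-norm / H-norm}, $\norm{w}_{\calH} = \norm{\varpi}_{\rhoRX}$. Since $\phi = \LR^{\iota}\varpi = \LR^{\iota - 1/2} w$ and all functions of $\LR$ commute, we get $\phimu = \LR^{\iota-1/2}\,\gmu(\LR)\,\LR\, w$. Writing $r_{\mu}(t) = 1 - t\gmu(t)$ and using $\norm{\gmu(\LR)\LR} \le E$ from \eqref{eq: glam} with $c=1$, we obtain
\[
P_{4} \le E \norm{\varpi}_{\rhoRX} \cdot \norm{ \hLRmu^{1/2}\, r_{\mu}(\hLR)\, \LR^{\iota - 1/2} },
\]
so it remains to bound the last operator norm by a multiple of $F\mu^{\iota}$.

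The main step is to replace $\LR^{\iota-1/2}$ by $\hLR^{\iota-1/2}$, which is where randomness enters. We split
\[
\LR^{\iota-1/2} = \hLR^{\iota-1/2} + \bigl(\LR^{\iota-1/2} - \hLR^{\iota-1/2}\bigr).
\]
For the first piece, spectral calculus gives a bound by $\sup_{t}(t+\mu)^{1/2} t^{\iota-1/2}|r_{\mu}(t)|$. Using $(t+\mu)^{1/2} \le t^{1/2} + \mu^{1/2}$ and \eqref{eq: 1 - t glam} with $c = \iota$ and $c = \iota - 1/2$ (both admissible since $\iota \le \tau$), this is at most $2F\mu^{\iota}$; this produces the ``$+1$'' term. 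For the second piece we use $\norm{\hLRmu^{1/2} r_{\mu}(\hLR)} \le \sup_t (t^{1/2}+\mu^{1/2})|r_{\mu}(t)| \le 2F\mu^{1/2}$, which leaves $\norm{\LR^{a} - \hLR^{a}}$ with $a = \iota - 1/2$.

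We treat $\norm{\LR^{a} - \hLR^{a}}$ in cases.
\begin{itemize}
\item If $a \le 1/2$, i.e.\ $\iota \le 1$, we avoid the difference altogether. Instead we insert $\hLRmu^{a}\hLRmu^{-a}\LRmu^{a}\LRmu^{-a}$ and use the Cordes inequality $\norm{\hLRmu^{-a}\LRmu^{a}} \le P_{2}^{2a}$, which is bounded on the conditioning event of \proref{pro: P2}.
\item If $a \le 1$, we use operator monotonicity of $t \mapsto t^{a}$, which gives $\norm{\LR^{a} - \hLR^{a}} \le \norm{\LR - \hLR}^{a}$.
\item If $a > 1$, we use the Lipschitz-type bound $\norm{\LR^{a} - \hLR^{a}} \le a\kappa^{2(a-1)}\norm{\LR - \hLR}$.
\end{itemize}
In the last two cases, $\norm{\LR - \hLR} \le (1-\alpha)\norm{\LS - \hLS} + \alpha\norm{\LT - \hLT}$. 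Each summand is controlled by \lemref{lem: concentration vector} applied in Hilbert--Schmidt space to $\Kx\otimes\Kx$, whose norm is at most $\kappa^{2}$. This gives $\norm{\LR - \hLR} \le 10\kappa^{2}\nU^{-1/2}\log(4/\delta)$ with probability at least $1-\delta$, after a union bound over the two samples.

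Finally we check the rate. The product $\mu^{1/2}\,\nU^{-a/2}$ (case $a \le 1$) is at most $\mu^{\iota}$ exactly when $\varsigma \le 1/2$. The product $\mu^{1/2}\,\nU^{-1/2}$ (case $a > 1$) is at most $\mu^{\iota}$ exactly when $\varsigma \le 1/(2\iota-1)$. These are precisely the hypotheses of the proposition. Using $\log(4/\delta) \ge 1$ to absorb the power $a$ of the logarithm, and $a\kappa^{2a} \le \iota\kappa^{2\iota-1}$ (up to how $\kappa$ is normalized), we arrive at the stated constant $2EF\norm{\varpi}_{\rhoRX}(10\iota\kappa^{2\iota-1}\log(4/\delta) + 1)$.

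The expected obstacle is this perturbation step. The two cases $a \le 1$ and $a > 1$ need different operator inequalities, and the powers of $\kappa$ and the log factor must be tracked so that they collapse to the single displayed constant.
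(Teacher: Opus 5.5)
Your proof is correct, and for $\iota > 1$ it is the paper's argument. You use the same reduction $P_{4} \le E \norm{\varpi}_{\rhoRX} \norm{\hLRmu^{1/2}(I - \gmu(\hLR)\hLR)\LR^{\iota-1/2}}$ and the same split through $\hLR^{\iota-1/2}$, with the bounds $2F\mu^{\iota}$ and $2F\mu^{1/2}$. You then apply \lemref{lem: A^c - B^c} and the Hilbert--Schmidt concentration $\norm{\LR - \hLR} \le 10\kappa^{2}\nU^{-1/2}\log(4/\delta)$. Your hedge about normalizing $\kappa$ is unnecessary, since $\kappa^{2(\iota-1/2)} = \kappa^{2\iota-1}$ exactly.

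The genuine difference is the range $\iota \in [1/2, 1]$. The paper applies $\norm{\LR^{\iota-1/2} - \hLR^{\iota-1/2}} \le \norm{\LR - \hLR}^{\iota-1/2}$ on all of $[1/2, 3/2]$. Its final comparison $\mu^{1/2}\nU^{-(2\iota-1)/4} \le \mu^{\iota}$ then needs $\varsigma \le 1/2$ for every $\iota > 1/2$. The proposition lists that restriction only for $\iota \in (1, 3/2]$. This is harmless in \thmref{thm: relative density ratio}, where $\varsigma = 1/(2\iota+1) \le 1/2$, and the paper never actually uses the $P_{2}$ event it conditions on.

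Your Cordes factorization through $\hLRmu^{\iota-1/2}\hLRmu^{-(\iota-1/2)}\LRmu^{\iota-1/2}\LRmu^{-(\iota-1/2)}$ is what makes that hypothesis do work. On the $P_{2}$ event it gives $P_{4} \le E\norm{\varpi}_{\rhoRX}\cdot 2F\mu^{\iota}\cdot P_{2}^{2\iota-1} \le 2^{\iota+1/2}EF\norm{\varpi}_{\rhoRX}\mu^{\iota}$. This needs no concentration step and no constraint on $\varsigma$, so for $\iota \le 1$ you prove the proposition as literally stated, which the paper's route does not.

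The only cost is cosmetic. Your constant lies below the displayed one exactly when $10\iota\kappa^{2\iota-1}\log(4/\delta) \ge 2^{\iota-1/2} - 1$, which fails only for a degenerately small $\kappa$.
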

\begin{proof}
    Starting from the definition of \( \phimu \) and applying \aspref{asp: source phi}:
    \begin{align*}
        P_{4}
         & = \norm{ \hLRmu^{1/2} \, (I - \gmu(\hLR) \, \hLR) \, \phimu }_{\calH}
        = \norm{ \hLRmu^{1/2} \, (I - \gmu(\hLR) \, \hLR) \, \gmu(\LR) \, \LR \, \phi }_{\calH}                                                                           \\
         & = \norm{ \hLRmu^{1/2} \, (I - \gmu(\hLR) \, \hLR) \, \gmu(\LR) \, \LR^{\iota + 1} \, \varpi }_{\calH}                                                          \\
         & \le \norm{ \hLRmu^{1/2} \, (I - \gmu(\hLR) \, \hLR) \, \LR^{\iota - \frac{1}{2}} } \cdot \norm{ \gmu(\LR) \, \LR } \cdot \norm{ \LR^{1/2} \, \varpi }_{\calH}.
    \end{align*}
    The second and third factors are bounded respectively by
    \[
        \quad \norm{ \gmu(\LR) \, \LR } \le \sup_{t \in [0, \kappa^{2}]} t \gmu(t)
        \le E,
        \quad \norm{ \LR^{1/2} \, \varpi }_{\calH} = \norm{ \varpi }_{\rhoRX}.
    \]
    We further decompose the first factor:
    \begin{align*}
         & \eqspace \norm{ \hLRmu^{1/2} \, (I - \gmu(\hLR) \, \hLR) \, \LR^{\iota - \frac{1}{2}} }                                                                                                                       \\
         & = \norm{ \hLRmu^{1/2} \, (I - \gmu(\hLR) \, \hLR) \, (\LR^{\iota - \frac{1}{2}} - \hLR^{\iota - \frac{1}{2}} + \hLR^{\iota - \frac{1}{2}}) }                                                                  \\
         & \le \norm{ \hLRmu^{1/2} \, (I - \gmu(\hLR) \, \hLR) } \cdot \norm{ \LR^{\iota - \frac{1}{2}} - \hLR^{\iota - \frac{1}{2}} } + \norm{ \hLRmu^{1/2} \, (I - \gmu(\hLR) \, \hLR) \, \hLR^{\iota - \frac{1}{2}} } \\
         & \le 2 F \LE( \mu^{1/2} \cdot \norm{ \LR^{\iota - \frac{1}{2}} - \hLR^{\iota - \frac{1}{2}} } + \mu^{\iota} \RI),
    \end{align*}
    where we use the bounds
    \[
        \norm{ \hLRmu^{1/2} \, (I - \gmu(\hLR) \, \hLR) } \le 2 F \mu^{1/2},
        \quad \norm{ \hLRmu^{1/2} \, (I - \gmu(\hLR) \, \hLR) \, \hLR^{\iota - \frac{1}{2}} } \le 2 F \mu^{\iota}.
    \]
    By \lemref{lem: A^c - B^c}, we bound the operator difference by
    \begin{equation}
        \label{eq: operator difference LR - hLR}
        \norm{ \LR^{\iota - \frac{1}{2}} - \hLR^{\iota - \frac{1}{2}} }
        \le \begin{cases}
            \norm{ \LR - \hLR }^{\iota - \frac{1}{2}},              & \iota \in [1/2, 3/2]; \\
            (\iota - 1/2) \kappa^{2 \iota - 3} \norm{ \LR - \hLR }, & \iota > 3/2.
        \end{cases}
    \end{equation}
    We now bound \( \norm{ \LR - \hLR } \). By the definitions of \( \LR \) and \( \hLR \),
    \[
        \norm{ \LR - \hLR } \le (1 - \alpha) \norm{ \LS - \hLS } + \alpha \norm{ \LT - \hLT }.
    \]
    We estimate each term on the right-hand side by applying \lemref{lem: concentration vector}. Define the random variable \( \xi^{\text{(1)}}(x) = \Kx \otimes \Kx \) for \( x \sim \rhoSX \), with \( \xi^{\text{(1)}}_{j} = \xi^{\text{(1)}}(x_{j}) \). Then,
    \[
        \LS - \hLS = \EE[x \sim \rhoSX]{ \xi^{\text{(1)}}(x) } - \frac{1}{\nU} \sum_{j=1}^{\nU} \xi^{\text{(1)}}_{j}.
    \]
    Viewing \( \xi^{\text{(1)}} \) as an element in the Hilbert space \( \mathscr{H} \) of Hilbert-Schmidt operators, we have the uniform bound
    \[
        \norm{ \xi^{\text{(1)}}(x) }_{\mathscr{H}} \le \kappa^{2},
        \quad \forall x \in \calX.
    \]
    By \lemref{lem: concentration vector}, we obtain with probability at least \( 1 - \delta / 2 \) that
    \[
        \norm{ \LS - \hLS } \le \norm{ \LS - \hLS }_{\mathscr{H}} \le \frac{10 \kappa^{2}}{\sqrt{\nU}} \log \frac{4}{\delta}.
    \]
    For \( \norm{ \LT - \hLT } \), we obtain a similar bound with probability at least \( 1 - \delta / 2 \). By a union bound, with probability at least \( 1 - \delta \),
    \[
        \norm{ \LR - \hLR } \le \frac{10 \kappa^{2}}{\sqrt{\nU}} \log \frac{4}{\delta}.
    \]
    By \eqref{eq: operator difference LR - hLR}, this leads to
    \[
        \norm{ \LR^{\iota - \frac{1}{2}} - \hLR^{\iota - \frac{1}{2}} }
        \le 10 \iota \kappa^{2 \iota - 1} \nU^{-\frac{\min \family{ 2 \iota, 3 } - 1}{4}} \log \frac{4}{\delta}.
    \]
    Setting \( \mu = \nU^{-\varsigma} \) with \( \varsigma \le 1/2 \) for \( \iota \in ( 1, 3/2 ] \) and \( \varsigma \le 1 / (2 \iota - 1) \) for \( \iota > 3/2 \), we ensure
    \[
        \mu^{1/2} \nU^{-\frac{\min \family{ 2 \iota, 3 } - 1}{4}} \le \mu^{\iota}.
    \]
    Applying these estimates, we obtain
    \begin{align*}
        P_{4}
        \le 2 E F \norm{ \varpi }_{\rhoRX} \LE( \mu^{1/2} \cdot \norm{ \LR^{\iota - \frac{1}{2}} - \hLR^{\iota - \frac{1}{2}} } + \mu^{\iota} \RI)
        \le 2 E F \norm{ \varpi }_{\rhoRX} \LE( 10 \iota \kappa^{2 \iota - 1} \cdot \log \frac{4}{\delta} + 1 \RI) \mu^{\iota}
    \end{align*}
    with probability at least \( 1 - \delta \).
\end{proof}

Now we are ready to prove \thmref{thm: relative density ratio}.
\begin{proofof}{\thmref{thm: relative density ratio}}
    Set the regularization parameter as \( \mu = \nU^{-\varsigma} \) with \( \varsigma = 1 / (2 \iota + 1) \). For sufficiently large sample size \( \nU \) satisfying
    \begin{align*}
        \nU \ge 85 \kappa^{2} \LE( 1 + \log \frac{6 \kappa^{2} (\norm{ \LS } + \norm{ \LT } + 1)}{\min \family{ \norm{ \LS }, \norm{ \LT } } \cdot \delta} + \frac{2}{1 - \varsigma} \RI)^{\frac{2}{1 - \varsigma}},
    \end{align*}
    the conditions of \proref{pro: P1}, \proref{pro: P2}, \proref{pro: P3}, and \proref{pro: P4} are simultaneously satisfied. By a union bound, with probability at least \( 1 - \delta \) (assigning probability \( \delta/3 \) to each of \proref{pro: P2}, \proref{pro: P3}, and \proref{pro: P4}), we have
    \begin{align*}
        \norm{ \phi - \hphimu }_{\rhoRX}
         & \le P_{1} + P_{2} (P_{3} + P_{4})                                                                                                                   \\
         & \le F \norm{ \varpi }_{\rhoRX} \mu^{\iota}                                                                                                          \\
         & \eqspace + \sqrt{2} \cdot 20 \sqrt{2} E \LE( (\kappa^{2 \iota + 1} E + F) \norm{ \varpi }_{\rhoRX} + \kappa \RI) \mu^{\iota} \log \frac{18}{\delta} \\
         & \eqspace + \sqrt{2} \cdot 2 E F \norm{ \varpi }_{\rhoRX} \LE( 10 \iota \kappa^{2 \iota - 1} \cdot \log \frac{12}{\delta} + 1 \RI) \mu^{\iota}       \\
         & \le \varDelta_{\phi} \nU^{-\frac{\iota}{2 \iota + 1}} \log \frac{18}{\delta},
    \end{align*}
    where
    \begin{equation}
        \label{eq: Delta_phi}
        \varDelta_{\phi} = F \norm{ \varpi }_{\rhoRX} + 40 E \LE( (\kappa^{2 \iota + 1} E + F) \norm{ \varpi }_{\rhoRX} + \kappa \RI) + 2 \sqrt{2} E F \norm{ \varpi }_{\rhoRX} \LE( 10 \iota \kappa^{2 \iota - 1} + 1 \RI)
    \end{equation}
    is a constant independent of \( \alpha \), \( \nU \) or \( \delta \). This completes the proof.
\end{proofof}

The proof of \thmref{thm: density ratio} follows directly from the bound for \( \norm{ \phi - \hphimu }_{\rhoRX} \). Recall that the density ratio estimator \( \hthetamuD \) relates to the truncated relative density ratio \( \hphimuD \) via \eqref{eq: hphimuD} and \eqref{eq: hthetamuD}:
\[
    \hphimuD(x) = \min \left\{ \max \family{ \hphimu(x), 0}, \frac{D}{\alpha D + (1 - \alpha)} \right\},
    \quad \hthetamuD(x) = \frac{(1 - \alpha) \hphimuD(x)}{1 - \alpha \hphimuD(x)}.
\]
\begin{proofof}{\thmref{thm: density ratio}}

    To bound \( \norm{ \theta - \hthetamuD }_{\rhoSX} \), we decompose the input space \( \calX \) into two regions. Let \( \Omega_{D} = \family{ x \in \calX: \theta(x) \le D } \), then
    \begin{equation}
        \label{eq: theta - hthetabuD}
        \begin{aligned}
             & \eqspace \norm{ \theta - \hthetamuD }_{\rhoSX}^{2}                                                                                                                                                                                                                                 \\
             & = \int_{\calX} (\theta(x) - \hthetamuD(x))^{2} \, \dd \rhoSX(x)                                                                                                                                                                                                                    \\
             & = \int_{\Omega_{D}} \LE( \frac{(1 - \alpha) \phi(x)}{1 - \alpha \phi(x)} - \frac{(1 - \alpha) \hphimuD(x)}{1 - \alpha \hphimuD(x)} \RI)^{2} \, \dd \rhoSX(x) + \int_{\calX \setminus \Omega_{D}} \LE( 1 - \frac{\hthetamuD(x)}{\theta(x)} \RI)^{2} \theta^{2}(x) \, \dd \rhoSX(x).
        \end{aligned}
    \end{equation}
    For the first term in \eqref{eq: theta - hthetabuD}, note that both \( \phi \) and \( \hphimuD \) are bounded by \( D / (\alpha D + (1 - \alpha)) \) on \( \Omega_{D} \). By the Lipschitz property of the function \( t \mapsto (1 - \alpha) t / (1 - \alpha t) \) on \( [0, D / (\alpha D + (1 - \alpha))] \), we have
    \[
        \LE( \frac{(1 - \alpha) \phi(x)}{1 - \alpha \phi(x)} - \frac{(1 - \alpha) \hphimuD(x)}{1 - \alpha \hphimuD(x)} \RI)^{2} \le \frac{(\alpha D + (1 - \alpha))^{4}}{(1 - \alpha)^{2}} (\phi(x) - \hphimuD(x))^{2},
        \quad \forall x \in \Omega_{D}.
    \]
    Moreover, the truncation improves the estimate:
    \[
        |\phi(x) - \hphimuD(x)| \le |\phi(x) - \hphimu(x)|, \quad \forall x \in \Omega_{D}.
    \]
    Therefore,
    \begin{align*}
         & \eqspace \int_{\Omega_{D}} \LE( \frac{(1 - \alpha) \phi(x)}{1 - \alpha \phi(x)} - \frac{(1 - \alpha) \hphimuD(x)}{1 - \alpha \hphimuD(x)} \RI)^{2} \, \dd \rhoSX(x) \\
         & \le \frac{(\alpha D + (1 - \alpha))^{4}}{(1 - \alpha)^{2}} \int_{\Omega_{D}} (\phi(x) - \hphimu(x))^{2} \, \dd \rhoSX(x)                                            \\
         & \le \frac{(\alpha D + (1 - \alpha))^{4}}{(1 - \alpha)^{3}} \int_{\calX} (\phi(x) - \hphimu(x))^{2} \, ((1 - \alpha) \dd \rhoSX(x) + \alpha \dd \rhoTX(x))           \\
         & \le \frac{(\alpha D + (1 - \alpha))^{4}}{(1 - \alpha)^{3}} \norm{ \phi - \hphimu }_{\rhoRX}^{2}
        \le \frac{(\alpha D + (1 - \alpha))^{4}}{(1 - \alpha)^{3}} \LE( \varDelta_{\phi} \nU^{-\frac{\iota}{2 \iota + 1}} \log \frac{18}{\delta} \RI)^{2}.
    \end{align*}
    For the second term in \eqref{eq: theta - hthetabuD}, we use the moment condition from \aspref{asp: moment theta}. For any fixed \( m \in [2, M] \),
    \begin{align*}
         & \eqspace \int_{\calX \setminus \Omega_{D}} \LE( 1 - \frac{\hthetamuD(x)}{\theta(x)} \RI)^{2} \theta^{2}(x) \, \dd \rhoSX(x) \\
         & \le \int_{\calX \setminus \Omega_{D}} \theta^{2}(x) \, \dd \rhoSX(x)
        \le D^{-(m - 2)} \int_{\calX \setminus \Omega_{D}} \theta^{m}(x) \, \dd \rhoSX(x)
        \le D^{-(m - 2)} \int_{\calX} \theta^{m}(x) \, \dd \rhoSX(x)                                                                   \\
         & \le D^{-(m - 2)} \varXi_{m}.
    \end{align*}

    Together, these estimates imply
    \[
        \norm{ \theta - \hthetamuD }_{\rhoSX}^{2}
        \le \frac{(\alpha D + (1 - \alpha))^{4}}{(1 - \alpha)^{3}} \LE( \varDelta_{\phi} \nU^{-\frac{\iota}{2 \iota + 1}} \log \frac{18}{\delta} \RI)^{2} + D^{-(m - 2)} \varXi_{m}
    \]
    with probability at least \( 1 - \delta \). Setting the truncation threshold \( D \) as
    \[
        D = \nU^{\nu}, \quad \nu = \frac{1}{m} \cdot \frac{2 \iota}{2 \iota + 1},
    \]
    we balance the two error terms, obtaining
    \[
        \norm{ \theta - \hthetamuD }_{\rhoSX} \le \LE( (1 - \alpha)^{-3/2} \varDelta_{\phi} + \varXi_{m}^{1/2} \RI) \nU^{- \LE( \frac{\iota}{2 \iota + 1} - 2 \nu \RI)} \log \frac{18}{\delta}.
    \]
    Hence, the theorem holds.
\end{proofof}

\subsection{Proof of \thmref{thm: importance weighting}}
\label{sec: proof importance weighting}

In this section, we establish bounds on the convergence rate of the regression function estimator, as measured by \( \norm{ \frho - \hflam }_{\rhoTX} \) and \( \norm{ \frho - \hflam }_{\calH} \). To unify the analysis for both norms, we employ the concept of interpolation spaces \citep{Smale2003EstimatingAE, Steinwart2012MercerTG}.
\begin{definition}[Interpolation spaces]
    \label{def: interpolation}
    Let \( \LT \) be the integral operator associated with \( \rhoTX \). For \( \gamma \in [0, 1] \), the interpolation space \( \ranH{\gamma} \) is defined as
    \[
        \ranH{\gamma} = \operatorname{ran} \LT^{\gamma/2} = \family{ \LT^{\gamma/2} \, f: f \in \calL^{2}(\calX, \rhoTX) }.
    \]
    The inner product on \( \ranH{\gamma} \) is given by
    \[
        \inner{ \LT^{\gamma/2} \, f, \LT^{\gamma/2} \, g }_{\ranH{\gamma}}
        = \inner{ f, g }_{\rhoTX}.
    \]
\end{definition}
It follows that \( \ranH{1} = \calH \) and \( \ranH{0} \subseteq \calL^{2}(\calX, \rhoTX) \). Moreover, the following relation holds (analogous to \eqref{eq: rho-norm / H-norm}):
\[
    \norm{ \LT^{\frac{1 - \gamma}{2}} \, f }_{\calH} = \norm{ f }_{\ranH{\gamma}},
    \quad \forall f \in \ranH{\gamma}.
\]
Within this framework, it suffices to bound
\[
    \norm{ \frho - \hflam }_{\ranH{\gamma}},
\]
and then specialize to \( \gamma = 0 \) or \( 1 \) to complete the proof.

Our following analysis conditions on the high-probability event from \thmref{thm: density ratio}, which ensures that for \( \iota \in [1/2, \tau] \) and fixed \( m \ge 2 \), the density ratio estimator \( \htheta = \hthetamuD \) converges to \( \theta \) with probability at least \( 1 - \delta \) (denoted as \( \delta' \) below for clarity):
\begin{equation}
    \label{eq: theta bound}
    \norm{ \theta - \htheta }_{\rhoSX} \le \LE( (1 - \alpha)^{-3/2} \varDelta_{\phi} + \varXi_{m}^{1/2} \RI) \nU^{- \LE( \frac{\iota}{2 \iota + 1} - 2 \nu \RI)} \log \frac{18}{\delta'},
\end{equation}
where \( \alpha \in (0, 1) \) is the mixing coefficient, and \( \nu = 2 \iota / (m (2 \iota + 1)) \).

Defining the auxiliary function
\[
    \flam = \glam(\LT) \, \LT \, \frho,
\]
the error decomposition proceeds as follows:
\begin{equation}
    \label{eq: (frho) decompose 1}
    \norm{ \frho - \hflam }_{\ranH{\gamma}} \le \norm{ \frho - \flam }_{\ranH{\gamma}} + \norm{ \flam - \hflam }_{\ranH{\gamma}}.
\end{equation}
The second term is further decomposed as
\begin{equation}
    \label{eq: (frho) decompose 2}
    \begin{aligned}
        \norm{ \flam - \hflam }_{\ranH{\gamma}}
         & = \norm{ \LT^{\frac{1 - \gamma}{2}} \, (\flam - \hflam) }_{\calH}                                                                                                \\
         & \le \norm{ \LT^{\frac{1 - \gamma}{2}} \, \LWlam^{-1/2} } \cdot \norm{ \LWlam^{1/2} \, \hLWlam^{-1/2} } \cdot \norm{ \hLWlam^{1/2} \, (\flam - \hflam) }_{\calH},
    \end{aligned}
\end{equation}
where we define the population and empirical importance-weighted integral operators, \( \LW \) and \( \hLW \), respectively, as
\[
    \LW \, f = \int_{\calX} \htheta(x) f(x) \, K(\cdot, x) \, \dd \rhoSX(x),
    \quad \hLW \, f = \frac{1}{\nL} \sum_{i=1}^{\nL} \htheta(x_{i}) f(x_{i}) \, K(\cdot, x_{i}),
\]
and denote \( \LWlam = \LW + \lambda I \) and \( \hLWlam = \hLW + \lambda I \). Recalling that \( \hflam = \glam(\hLW) \, \hSWa \, \bsy \), we expand the last factor to
\begin{equation}
    \label{eq: (frho) decompose 3}
    \begin{aligned}
         & \eqspace \norm{ \hLWlam^{1/2} \, (\flam - \hflam) }_{\calH}                                                                                                  \\
         & \le \norm{ \hLWlam^{1/2} \, \glam(\hLW) \, (\hLW \, \flam - \hSWa \, \bsy) }_{\calH} + \norm{ \hLWlam^{1/2} \, (I - \glam(\hLW) \, \hLW) \, \flam }_{\calH}.
    \end{aligned}
\end{equation}
Combining \eqref{eq: (frho) decompose 1}, \eqref{eq: (frho) decompose 2}, and \eqref{eq: (frho) decompose 3}, we obtain
\[
    \norm{ \frho - \hflam }_{\ranH{\gamma}} \le Q_{1} + Q_{2} Q_{3} (Q_{4} + Q_{5}),
\]
where the terms are defined as
\begin{equation}
    \label{eq: decompose Q}
    \begin{alignedat}{2}
        Q_{1} & = \norm{ \frho - \flam }_{\ranH{\gamma}},
              & \quad Q_{2}                                                             & = \norm{ \LT^{\frac{1 - \gamma}{2}} \, \LWlam^{-1/2} },                             \\
        Q_{3} & = \norm{ \LWlam^{1/2} \, \hLWlam^{-1/2} },
              & Q_{4}                                                                   & = \norm{ \hLWlam^{1/2} \, \glam(\hLW) \, (\hLW \, \flam - \hSWa \, \bsy) }_{\calH}, \\
        Q_{5} & = \norm{ \hLWlam^{1/2} \, (I - \glam(\hLW) \, \hLW) \, \flam }_{\calH}.
    \end{alignedat}
\end{equation}
In the following, we bound \( Q_{1} \), \( Q_{2} \), \( Q_{3} \), \( Q_{4} \), and \( Q_{5} \) via separate propositions. To begin, \proref{pro: Q1} establishes an upper bound for \( Q_{1} \) in \eqref{eq: decompose Q}.
\begin{proposition}
    \label{pro: Q1}
    Suppose that \aspref{asp: source frho} holds with \( r \in [1/2, \tau] \). Then, for any \( \gamma \in [0, 1] \), the following bound holds:
    \[
        Q_{1} = \norm{ \frho - \flam }_{\ranH{\gamma}} \le F \norm{ \urho }_{\rhoTX} \lambda^{r - \frac{\gamma}{2}}.
    \]
\end{proposition}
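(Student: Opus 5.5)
This mirrors the proof of \proref{pro: P1}, with the $\calH$-norm replaced by the interpolation norm. First convert the $\ranH{\gamma}$-norm into an $\calH$-norm using the identity $\norm{ \LT^{\frac{1-\gamma}{2}} \, f }_{\calH} = \norm{ f }_{\ranH{\gamma}}$. Then express the residual $\frho - \flam$ through the filter. By the definition $\flam = \glam(\LT) \, \LT \, \frho$,
\[
    Q_{1} = \norm{ \LT^{\frac{1-\gamma}{2}} \, (I - \glam(\LT) \, \LT) \, \frho }_{\calH}.
\]

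Next insert \aspref{asp: source frho} in the form $\frho = \LT^{r} \, \urho$. Since $r \ge 1/2$, write $\LT^{r} = \LT^{r - \frac{1}{2}} \, \LT^{1/2}$, so that $\LT^{1/2} \, \urho \in \calH$ with $\norm{ \LT^{1/2} \, \urho }_{\calH} = \norm{ \urho }_{\rhoTX}$ by \eqref{eq: rho-norm / H-norm}. All operators involved are functions of the same self-adjoint operator $\LT$, so they commute, and we get
\[
    Q_{1} \le \norm{ \LT^{r - \frac{\gamma}{2}} \, (I - \glam(\LT) \, \LT) } \cdot \norm{ \urho }_{\rhoTX}.
\]

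By the spectral theorem and $\norm{ \LT } \le \kappa^{2}$, the operator norm is at most $\sup_{t \in [0, \kappa^{2}]} t^{r - \gamma/2} |1 - t \glam(t)|$. The exponent $c = r - \gamma/2$ lies in $[0, \tau]$, because $r \le \tau$ and $r - \gamma/2 \ge 1/2 - 1/2 = 0$ for $\gamma \in [0,1]$. So \eqref{eq: 1 - t glam} applies and gives the bound $F \lambda^{r - \gamma/2}$.

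The only point needing care is this range check on $c$, together with making sure that $\frho - \flam$ actually lies in $\ranH{\gamma}$ so that the norm identity is legitimate. Both follow from $r \ge 1/2$ and $\gamma \le 1$, since then $\frho - \flam \in \calH \subseteq \ranH{\gamma}$. Otherwise the argument is routine spectral calculus.
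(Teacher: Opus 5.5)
Your proposal is correct and follows essentially the same route as the paper. Both rewrite the $\ranH{\gamma}$-norm as an $\calH$-norm via $\LT^{\frac{1-\gamma}{2}}$, insert $\frho = \LT^{r}\,\urho$, split off $\LT^{1/2}\,\urho$ (whose $\calH$-norm is $\norm{\urho}_{\rhoTX}$), and bound $\sup_{t \in [0,\kappa^{2}]} t^{r-\gamma/2}|1 - t\glam(t)|$ by $F\lambda^{r-\gamma/2}$ via \eqref{eq: 1 - t glam}. You also check explicitly that $r - \gamma/2 \in [0,\tau]$ and that $\frho - \flam \in \calH$, which the paper leaves implicit.
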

\begin{proof}
    Following reasoning similar to that in the proof of \proref{pro: P1}, we derive
    \begin{align*}
        Q_{1} & = \norm{ \frho - \flam }_{\ranH{\gamma}}
        = \norm{ \LT^{\frac{1 - \gamma}{2}} \, (\frho - \flam) }_{\calH}
        = \norm{ \LT^{\frac{1 - \gamma}{2}} \, (I - \glam(\LT) \, \LT) \, \LT^{r} \, \urho }_{\calH}                         \\
              & \le \norm{ \LT^{r - \frac{\gamma}{2}} \, (I - \glam(\LT) \, \LT) } \cdot \norm{ \LT^{1/2} \, \urho }_{\calH}
        \le \sup_{t \in [0, \kappa^{2}]} t^{r - \frac{\gamma}{2}} |1 - t \glam(t)| \cdot \norm{ \urho }_{\rhoTX}             \\
              & \le F \norm{ \urho }_{\rhoTX} \lambda^{r - \frac{\gamma}{2}}.
    \end{align*}
    This completes the proof.
\end{proof}

We bound the term \( Q_{2} \) in \eqref{eq: decompose Q} using the following proposition.
\begin{proposition}
    \label{pro: Q2}
    Assume that the error bound \eqref{eq: theta bound} holds for the density ratio estimator \( \htheta \). Let the regularization parameter be chosen as \( \lambda = \nL^{-s} \) for some \( s > 0 \). If the sample sizes \( \nU \) and \( \nL \) satisfy
    \[
        \nU^{\frac{\iota}{2 \iota + 1} - 2 \nu} \ge 2 \kappa^{2} \LE( (1 - \alpha)^{-3/2} \varDelta_{\phi} + \varXi_{m}^{1/2} \RI) \log \frac{18}{\delta'} \cdot \nL^{s},
    \]
    then for any \( \gamma \in [0, 1] \), we have
    \[
        Q_{2} = \norm{ \LT^{\frac{1 - \gamma}{2}} \, \LWlam^{-1/2} } \le \sqrt{2} \lambda^{-\gamma/2}.
    \]
\end{proposition}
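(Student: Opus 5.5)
We reduce \( Q_{2} \) to an operator perturbation between \( \LT \) and \( \LW \). First we split
\[
    Q_{2} = \norm{ \LT^{\frac{1 - \gamma}{2}} \, \LWlam^{-1/2} }
    \le \norm{ \LT^{\frac{1 - \gamma}{2}} \, \LTlam^{-1/2} } \cdot \norm{ \LTlam^{1/2} \, \LWlam^{-1/2} },
\]
where \( \LTlam = \LT + \lambda I \). The first factor is at most \( \sup_{t \in [0, \kappa^{2}]} t^{(1-\gamma)/2} (t + \lambda)^{-1/2} \le \lambda^{-\gamma/2} \) by spectral calculus. So it suffices to show \( \norm{ \LTlam^{1/2} \, \LWlam^{-1/2} } \le \sqrt{2} \).

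For this we mimic the Neumann-series argument of \proref{pro: P2}. Note that \( \LT f = \int \theta(x) f(x) K(\cdot, x) \, \dd \rhoSX(x) \), so \( \LT - \LW = \int (\theta - \htheta)(x) \, \Kx \otimes \Kx \, \dd \rhoSX(x) \). We write
\[
    \norm{ \LTlam^{1/2} \LWlam^{-1/2} }^{2} = \norm{ \LTlam^{1/2} \LWlam^{-1} \LTlam^{1/2} } = \norm{ \LE( I - \LTlam^{-1/2} (\LT - \LW) \LTlam^{-1/2} \RI)^{-1} }.
\]
We then bound
\[
    \norm{ \LTlam^{-1/2} (\LT - \LW) \LTlam^{-1/2} } \le \lambda^{-1} \norm{ \LT - \LW }
    \le \lambda^{-1} \kappa^{2} \int |\theta - \htheta| \, \dd \rhoSX \le \lambda^{-1} \kappa^{2} \norm{ \theta - \htheta }_{\rhoSX},
\]
using \( \norm{ \Kx \otimes \Kx } \le \kappa^{2} \) and Cauchy--Schwarz with \( \rhoSX \) a probability measure. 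Plugging in \eqref{eq: theta bound} and \( \lambda = \nL^{-s} \), the assumed sample size condition gives exactly \( \kappa^{2} \nL^{s} \norm{ \theta - \htheta }_{\rhoSX} \le 1/2 \). The Neumann series then bounds the squared norm by \( 2 \). Here \( \LWlam \) is invertible because \( \htheta \ge 0 \) makes \( \LW \) positive.

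The main obstacle is the perturbation step. A naive bound of \( \LT - \LW \) in operator norm costs a full factor \( \lambda^{-1} \), which is why the hypothesis carries \( \nL^{s} \) rather than \( \nL^{s/2} \). This crude bound suffices given the stated condition, so no finer relative bound (e.g.\ via \( \LTlam^{-1/2} K(\cdot,x) \)) is needed. One subtlety to check is the direction of the inversion. We need \( I - \LTlam^{-1/2} (\LT - \LW) \LTlam^{-1/2} = \LTlam^{-1/2} \LWlam \LTlam^{-1/2} \), which holds since \( \LTlam - (\LT - \LW) = \LWlam \).
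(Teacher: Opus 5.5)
Your proof is correct and is essentially the paper's argument. It uses the same splitting through \( \LTlam \), the same bound \( \lambda^{-\gamma/2} \) on the first factor, and a Neumann series driven by the same crude estimate \( \kappa^{2} \lambda^{-1} \norm{ \theta - \htheta }_{\rhoSX} \le 1/2 \). The only difference is cosmetic: the paper passes to \( \norm{ \LTlam \, \LWlam^{-1} }^{1/2} \) via the Cordes inequality (\lemref{lem: cordes}) and expands \( ( I - (\LT - \LW) \, \LTlam^{-1} )^{-1} \), whereas you use the exact identity \( \norm{ T }^{2} = \norm{ T T^{\ast} } \) with the symmetrized perturbation \( \LTlam^{-1/2} (\LT - \LW) \LTlam^{-1/2} \), as in the proof of \proref{pro: P2}.
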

\begin{proof}
    We decompose \( Q_{2} \) by splitting the norm into
    \[
        Q_{2} = \norm{ \LT^{\frac{1 - \gamma}{2}} \, \LWlam^{-1/2} } \le \norm{ \LT^{\frac{1 - \gamma}{2}} \, \LTlam^{-1/2} } \cdot \norm{ \LTlam^{1/2} \, \LWlam^{-1/2} },
    \]
    where \( \LTlam = \LT + \lambda I \). The first factor is bounded using \lemref{lem: sup fraction}:
    \begin{equation}
        \label{eq: (Q2) component 1}
        \norm{ \LT^{\frac{1 - \gamma}{2}} \, \LTlam^{-1/2} } \le \sup_{t \ge 0} \LE( \frac{t^{1 - \gamma}}{t + \lambda} \RI)^{1/2} \le \lambda^{-\gamma/2}.
    \end{equation}
    For the second factor, we observe that \( \LWlam^{-1} \) admits the representation
    \begin{align*}
        \LWlam^{-1}
         & = (\LW + \lambda I)^{-1}
        = (\LW - \LT + \LT + \lambda I)^{-1}                             \\
         & = \LE( \LTlam - (\LT - \LW) \RI)^{-1}
        = \LE( \LE( I - (\LT - \LW) \, \LTlam^{-1} \RI) \LTlam \RI)^{-1} \\
         & = \LTlam^{-1} \LE( I - (\LT - \LW) \, \LTlam^{-1} \RI)^{-1}.
    \end{align*}
    Applying \lemref{lem: cordes} yields
    \begin{equation}
        \label{eq: (Q2) component 2}
        \norm{ \LTlam^{1/2} \, \LWlam^{-1/2} } \le \norm{ \LTlam \, \LWlam^{-1} }^{1/2} = \norm{ \LE( I - (\LT - \LW) \, \LTlam^{-1} \RI)^{-1} }^{1/2}.
    \end{equation}
    Now, we bound the operator norm \( \norm{ (\LT - \LW) \, \LTlam^{-1} } \):
    \[
        \norm{ (\LT - \LW) \, \LTlam^{-1} } \le \int_{\calX} |\theta(x) - \htheta(x)| \cdot \norm{ \Kx \otimes \Kx \, \LTlam^{-1} } \, \dd \rhoSX(x),
    \]
    where the operator \( \Kx \otimes \Kx \) is defined in \eqref{eq: Kx otimes Kx}. For any \( x \in \calX \), the integrand satisfies
    \[
        \norm{ \Kx \otimes \Kx \, \LTlam^{-1} }
        \le \norm{ \Kx \otimes \Kx } \cdot \norm{ \LTlam^{-1} }
        \le \kappa^{2} \lambda^{-1}.
    \]
    Applying the H\"{o}lder's inequality and the assumed bound \eqref{eq: theta bound}, we obtain
    \begin{align*}
        \norm{ (\LT - \LW) \, \LTlam^{-1} }
         & \le \kappa^{2} \lambda^{-1} \cdot \norm{ \theta - \htheta }_{\rhoSX}                                                                                                 \\
         & \le \kappa^{2} \LE( (1 - \alpha)^{-3/2} \varDelta_{\phi} + \varXi_{m}^{1/2} \RI) \nU^{-\LE( \frac{\iota}{2 \iota + 1} - 2 \nu \RI)} \nL^{s} \log \frac{18}{\delta'},
    \end{align*}
    where \( \nu = 2 \iota / (m (2 \iota + 1)) \). Provided the sample sizes \( \nU \) and \( \nL \) satisfy the condition
    \[
        \nU^{\frac{\iota}{2 \iota + 1} - 2 \nu} \ge 2 \kappa^{2} \LE( (1 - \alpha)^{-3/2} \varDelta_{\phi} + \varXi_{m}^{1/2} \RI) \log \frac{18}{\delta'} \cdot \nL^{s},
    \]
    it follows that
    \begin{equation}
        \label{eq: sample bound 1}
        \norm{ (\LT - \LW) \, \LTlam^{-1} } \le \frac{1}{2}.
    \end{equation}
    Consequently, the Neumann series gives
    \begin{equation}
        \label{eq: (Q2) component 2'}
        \norm{ \LE( I - (\LT - \LW) \, \LTlam^{-1} \RI)^{-1} }
        \le \sum_{\ell=0}^{\infty} \norm{ (\LT - \LW) \, \LTlam^{-1} }^{\ell} \le 2.
    \end{equation}

    Combining the bounds from \eqref{eq: (Q2) component 1}, \eqref{eq: (Q2) component 2}, and \eqref{eq: (Q2) component 2'}, we conclude
    \[
        Q_{2} = \norm{ \LT^{\frac{1 - \gamma}{2}} \, \LWlam^{-1/2} } \le \norm{ \LT^{\frac{1 - \gamma}{2}} \, \LTlam^{-1/2} } \cdot \norm{ \LE( I - (\LT - \LW) \, \LTlam^{-1} \RI)^{-1} }^{1/2} \le \sqrt{2} \lambda^{-\gamma/2}.
    \]
    This establishes the proposition.
\end{proof}

The following proposition serves to bound \( Q_{3} \) in \eqref{eq: decompose Q}.
\begin{proposition}
    \label{pro: Q3}
    Assume that the error bound \eqref{eq: theta bound} holds for the density ratio estimator \( \htheta \). For any \( \delta \in (0, 1) \), if the regularization parameter \( \lambda = \nL^{-s} \) satisfies \( 0 < s < 1 / (2 r + 1) \), and the sample sizes \( \nU \) and \( \nL \) satisfy
    \[
        \LE\{
        \begin{aligned}
             & \nU^{\frac{\iota}{2 \iota + 1} - 2 \nu} \ge \frac{2 \kappa^{2}}{\norm{ \LT }} \LE( (1 - \alpha)^{-3/2} \varDelta_{\phi} + \varXi_{m}^{1/2} \RI) \log \frac{18}{\delta'}, \\
             & \nL^{1 - s (2 r + 1)} \ge 14 \kappa^{2} \LE( \log \frac{4 \kappa^{2} (\norm{ \LT } + 2)}{\norm{ \LT } \delta} + 1 + \frac{1}{2 r} \RI) \nU^{2 \nu},
        \end{aligned}
        \RI.
    \]
    where \( \nu = 2 \iota / (m (2 \iota + 1)) \), then with probability at least \( 1 - \delta \), we have
    \[
        Q_{3} = \norm{ \LWlam^{1/2} \, \hLWlam^{-1/2} } \le \sqrt{2}.
    \]
\end{proposition}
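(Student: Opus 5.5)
We follow the template of \proref{pro: P2}: first show that \( \norm{ \LWlam^{-1/2} \, (\LW - \hLW) \, \LWlam^{-1/2} } \le 1/2 \) with probability at least \( 1 - \delta \), and then conclude by the Neumann series
\[
    Q_{3}^{2} = \norm{ \LE( I - \LWlam^{-1/2} \, (\LW - \hLW) \, \LWlam^{-1/2} \RI)^{-1} } \le 2.
\]
Here the labeled inputs \( x_{i} \sim \rhoSX \) are independent of the unlabeled samples, so we condition on \( \htheta = \hthetamuD \). It is then a deterministic function bounded in \( [0, D] \) with \( D = \nU^{\nu} \). We define the i.i.d.\ self-adjoint random operators
\[
    A(x) = \LWlam^{-1/2} \, (\LW - \htheta(x) \, \Kx \otimes \Kx) \, \LWlam^{-1/2},
\]
which have mean zero and satisfy \( \frac{1}{\nL} \sum_{i} A(x_{i}) = \LWlam^{-1/2} \, (\LW - \hLW) \, \LWlam^{-1/2} \). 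We then apply \lemref{lem: concentration operator}.

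The ingredients of \lemref{lem: concentration operator} are as follows.
\begin{enumerate}
    \item Uniform bound: \( \norm{ A(x) } \le 2 \kappa^{2} D \lambda^{-1} \), using \( \htheta \le D \) and \( \norm{ \LWlam^{-1} } \le \lambda^{-1} \).
    \item Variance: \( \EE{A^{2}} \preceq \kappa^{2} D \lambda^{-1} \, \LWlam^{-1/2} \, \LW \, \LWlam^{-1/2} \), whose norm is at most \( \kappa^{2} D \lambda^{-1} \).
    \item Trace: \( \Tr{ \LWlam^{-1/2} \, \LW \, \LWlam^{-1/2} } \le \lambda^{-1} \Tr{ \LW } \le \lambda^{-1} \kappa^{2} \int \htheta \, \dd \rhoSX \). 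The integral is controlled by \( \int \theta \, \dd \rhoSX + \norm{ \theta - \htheta }_{\rhoSX} \le 1 + \norm{ \theta - \htheta }_{\rhoSX} \), and \eqref{eq: theta bound} makes this \( O(1) \).
    \item Lower bound on the norm: this is needed inside the logarithm, namely a lower bound on \( \norm{ \LWlam^{-1/2} \, \LW \, \LWlam^{-1/2} } = \norm{ \LW } / (\norm{ \LW } + \lambda) \). The first sample-size condition gives \( \norm{ \LT - \LW } \le \kappa^{2} \norm{ \theta - \htheta }_{\rhoSX} \le \norm{ \LT } / 2 \). Hence \( \norm{ \LW } \ge \norm{ \LT } / 2 \), and the ratio is at least \( \norm{ \LT } / (\norm{ \LT } + 2) \), using \( \lambda \le 1 \).
\end{enumerate}

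With these, \lemref{lem: concentration operator} yields, with probability at least \( 1 - \delta \),
\[
    \norm{ \LWlam^{-1/2} \, (\LW - \hLW) \, \LWlam^{-1/2} } \le \frac{4 \kappa^{2} D \lambda^{-1}}{3 \nL} h + \sqrt{\frac{2 \kappa^{2} D \lambda^{-1}}{\nL} h},
\]
with \( h = \log \LE( 4 \kappa^{2} \lambda^{-1} (\norm{ \LT } + 2) / (\norm{ \LT } \delta) \RI) \), up to the \( O(1) \) trace factor. We then substitute \( \lambda = \nL^{-s} \) and \( D = \nU^{\nu} \), and bound the \( \log \nL \) term by a power, e.g. \( \log \nL^{s} \le \frac{1}{2r} \nL^{2rs} \), or absorb it using the spare exponent \( \nL^{-2 r s} \). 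The second sample-size condition, \( \nL^{1 - s(2r+1)} \ge 14 \kappa^{2} ( \cdots ) \nU^{2\nu} \), then makes \( D \lambda^{-1} h / \nL \) small enough that both terms together are at most \( 1/2 \), as in the \( 16/1000 \) and \( 109/1000 \) arithmetic of \proref{pro: P2}. Note that \( \nU^{2\nu} \ge D \) leaves slack.

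The main obstacle is the logarithmic factor together with the trace term. We must absorb \( \log \lambda^{-1} \) and the \( \theta \)-dependent trace constant into the stated condition, whose constant involves only \( 14 \kappa^{2} \), \( 1 + 1/(2r) \), and \( \nU^{2\nu} \) rather than \( \nU^{\nu} \). The first fix to try is the power bound \( \log \nL^{s} \le \frac{1}{2r} \nL^{2rs} \), which matches the \( \frac{1}{2r} \) term and the exponent \( 1 - s(2r+1) \). The extra \( \nU^{\nu} \) factor would then absorb the \( O(1) \) trace bound on \( \int \htheta \).
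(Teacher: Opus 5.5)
Your proposal is correct and follows the paper's proof: the same $A(x)$, the same application of \lemref{lem: concentration operator} with uniform bound $2\kappa^{2}\lambda^{-1}\nU^{\nu}$, the same variance proxy, the same lower bound $\norm{\LT}/(\norm{\LT}+2\lambda)$ obtained from $\norm{\LW-\LT}\le\norm{\LT}/2$, the same step $\log\nL^{s}\le\frac{1}{2r}\nL^{2rs}$, and the same Neumann series. The only difference is the trace bound. The paper uses $\LW\preceq\nU^{\nu}\LS$, so $\Tr{\LW}\le\kappa^{2}\nU^{\nu}$, and its step $\log\nU^{\nu}\le\nU^{\nu}$ is exactly where the second $\nU^{\nu}$ in $\nU^{2\nu}$ comes from. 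Your bound $\Tr{\LW}\le\kappa^{2}\int\htheta\,\dd\rhoSX\le\frac{3}{2}\kappa^{2}$ is sharper: the first condition forces $\norm{\theta-\htheta}_{\rhoSX}\le\norm{\LT}/(2\kappa^{2})\le\frac12$, and $\log\frac32$ fits inside the $+1$. So $\nU^{\nu}$ alone would suffice, and the stated condition holds with the slack you noticed.
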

\begin{proof}
    We begin by bounding the operator norm \( \norm{ \LWlam^{-1/2} \, (\LW - \hLW) \, \LWlam^{-1/2} } \). Define the random variable \( A(x) = \LWlam^{-1/2} \, (\LW - \htheta(x) \, \Kx \otimes \Kx) \, \LWlam^{-1/2} \) for \( x \sim \rhoSX \), with \( A_{i} = A(x_{i}) \). Then,
    \[
        \norm{ \LWlam^{-1/2} \, (\LW - \hLW) \, \LWlam^{-1/2} } = \norm{ \frac{1}{\nL} \sum_{i=1}^{\nL} A_{i} },
    \]
    and we apply \lemref{lem: concentration operator} to control this empirical average. The analysis proceeds in three steps:
    \begin{enumerate}
        \item The truncation in the density ratio estimation ensures \( \sup_{x \in \calX} |\htheta(x)| \le D = \nU^{\nu} \), where \( \nu = 2 \iota / (m (2 \iota + 1)) \). For any \( x \in \calX \), we have
              \begin{align*}
                   & \eqspace \norm{ \LWlam^{-1/2} \, (\htheta(x) \, \Kx \otimes \Kx) \, \LWlam^{-1/2} }                    \\
                   & \le |\htheta(x)| \cdot \norm{ \LWlam^{-1/2} \, \Kx \otimes \Kx \, \LWlam^{-1/2} }
                  \le |\htheta(x)| \cdot \norm{ \LWlam^{-1/2} } \cdot \norm{ \Kx \otimes \Kx } \cdot \norm{ \LWlam^{-1/2} } \\
                   & \le \kappa^{2} \lambda^{-1} \nU^{\nu}.
              \end{align*}
              Consequently,
              \begin{equation}
                  \label{eq: (Q3) A}
                  \norm{ A(x) } \le 2 \kappa^{2} \lambda^{-1} \nU^{\nu},
                  \quad \forall x \in \calX.
              \end{equation}

        \item We establish the bound
              \begin{equation}
                  \label{eq: (Q3) E A^2}
                  \begin{aligned}
                      \EE[x \sim \rhoSX]{ A^{2}(x) }
                       & \preceq \EE[x \sim \rhoSX]{ \LE( \LWlam^{-1/2} \, (\htheta(x) \, \Kx \otimes \Kx) \, \LWlam^{-1/2} \RI)^{2} }                           \\
                       & \preceq \kappa^{2} \lambda^{-1} \nU^{\nu} \cdot \EE[x \sim \rhoSX]{ \LWlam^{-1/2} \, (\htheta(x) \, \Kx \otimes \Kx) \, \LWlam^{-1/2} } \\
                       & = \kappa^{2} \lambda^{-1} \nU^{\nu} \cdot \LWlam^{-1/2} \, \LW \, \LWlam^{-1/2}.
                  \end{aligned}
              \end{equation}
              The following upper bound is immediate:
              \begin{equation}
                  \label{eq: (Q3) LWlam^-1/2 LW LWlam^-1/2 upper}
                  \norm{ \LWlam^{-1/2} \, \LW \, \LWlam^{-1/2} } = \frac{\norm{ \LW }}{\norm{ \LW } + \lambda} \le 1.
              \end{equation}
              For the lower bound, note that
              \begin{equation}
                  \label{eq: LW - LT}
                  \begin{aligned}
                      \norm{ \LW - \LT }
                       & \le \int_{\calX} |\htheta(x) - \theta(x)| \norm{ \Kx \otimes \Kx } \, \dd \rhoSX
                      \le \kappa^{2} \norm{ \htheta - \theta }_{\rhoSX}                                                                                                               \\
                       & \le \kappa^{2} \LE( (1 - \alpha)^{-3/2} \varDelta_{\phi} + \varXi_{m}^{1/2} \RI) \nU^{-\LE( \frac{\iota}{2 \iota + 1} - 2 \nu \RI)} \log \frac{18}{\delta'}.
                  \end{aligned}
              \end{equation}
              Under the sample size condition
              \[
                  \nU^{\frac{\iota}{2 \iota + 1} - 2 \nu} \ge \frac{2 \kappa^{2}}{\norm{ \LT }} \LE( (1 - \alpha)^{-3/2} \varDelta_{\phi} + \varXi_{m}^{1/2} \RI) \log \frac{18}{\delta'},
              \]
              we obtain the bound
              \begin{equation}
                  \label{eq: sample bound 2}
                  \norm{ \LW - \LT } \le \frac{\norm{ \LT }}{2},
              \end{equation}
              which implies \( \norm{ \LW } \ge \norm{ \LT } / 2 \) and consequently yields the lower estimate
              \begin{equation}
                  \label{eq: (Q3) LWlam^-1/2 LW LWlam^-1/2 lower}
                  \norm{ \LWlam^{-1/2} \, \LW \, \LWlam^{-1/2} } = \frac{\norm{ \LW }}{\norm{ \LW } + \lambda} \ge \frac{\norm{ \LT }}{\norm{ \LT } + 2 \lambda}.
              \end{equation}

        \item By the uniform bound for \( \htheta(x) \), we obtain
              \[
                  \LW \preceq \sup_{x \in \calX} |\htheta(x)| \cdot \LS \preceq \nU^{\nu} \cdot \LS,
              \]
              which leads to
              \begin{equation}
                  \label{eq: (Q3) LWlam^-1/2 LW LWlam^-1/2 trace}
                  \begin{aligned}
                      \Tr{ \LWlam^{-1/2} \, \LW \, \LWlam^{-1/2} }
                       & = \Tr{ \LWlam^{-1} \, \LW }
                      \le \lambda^{-1} \Tr{ \LW }
                      \le \lambda^{-1} \nU^{\nu} \Tr{ \LS }     \\
                       & \le \kappa^{2} \lambda^{-1} \nU^{\nu},
                  \end{aligned}
              \end{equation}
              where the last inequality follows from \( \Tr{ \LS } \le \kappa^{2} \).
    \end{enumerate}
    Applying \lemref{lem: concentration operator} with the bounds \eqref{eq: (Q3) A}, \eqref{eq: (Q3) E A^2}, \eqref{eq: (Q3) LWlam^-1/2 LW LWlam^-1/2 upper}, \eqref{eq: (Q3) LWlam^-1/2 LW LWlam^-1/2 lower}, and \eqref{eq: (Q3) LWlam^-1/2 LW LWlam^-1/2 trace}, we obtain with probability at least \( 1 - \delta \) that
    \begin{align*}
        \norm{ \LWlam^{-1/2} \, (\LW - \hLW) \, \LWlam^{-1/2} }
         & \le \frac{4}{3} \kappa^{2} \lambda^{-1} \nU^{\nu} \nL^{-1} h + \sqrt{2 \kappa^{2} \lambda^{-1} \nU^{\nu} \nL^{-1} h} \\
         & = \frac{4}{3} \kappa^{2} \nU^{\nu} \nL^{s - 1} h + \sqrt{2 \kappa^{2} \nU^{\nu} \nL^{s - 1} h},
    \end{align*}
    where \( h \) is given by (note that \( \lambda = \nL^{-s} \))
    \begin{align*}
        h & = \log \LE( 4 \kappa^{2} \lambda^{-1} \nU^{\nu} \MID/ \LE( \frac{\norm{ \LT }}{\norm{ \LT } + 2 \lambda} \delta \RI) \RI)
        = \log \LE( 4 \kappa^{2} \nU^{\nu} \nL^{s} \MID/ \LE( \frac{\norm{ \LT }}{\norm{ \LT } + 2 \lambda} \delta \RI) \RI)          \\
          & \le \log \frac{4 \kappa^{2} (\norm{ \LT } + 2)}{\norm{ \LT } \delta} + \nU^{\nu} + \frac{1}{2 r} \nL^{2 r s}
        \le \LE( \log \frac{4 \kappa^{2} (\norm{ \LT } + 2)}{\norm{ \LT } \delta} + 1 + \frac{1}{2 r} \RI) \nU^{\nu} \nL^{2 r s}.
    \end{align*}
    If \( 0 < s < 1 / (2 r + 1) \), and the sample sizes \( \nU \) and \( \nL \) satisfy
    \[
        \nL^{1 - s (2 r + 1)} \ge 14 \kappa^{2} \LE( \log \frac{4 \kappa^{2} (\norm{ \LT } + 2)}{\norm{ \LT } \delta} + 1 + \frac{1}{2 r} \RI) \nU^{2 \nu},
    \]
    we ensure
    \begin{equation}
        \label{eq: sample bound 3}
        \norm{ \LWlam^{-1/2} \, (\LW - \hLW) \, \LWlam^{-1/2} } \le \frac{1}{10} + \frac{2}{5} = \frac{1}{2}.
    \end{equation}
    Finally, we bound \( Q_{3} \) as follows:
    \begin{align*}
        Q_{3}^{2} & = \norm{ \LWlam^{1/2} \, \hLWlam^{-1/2} }^{2}
        = \norm{ \LWlam^{1/2} \, \hLWlam^{-1} \, \LWlam^{1/2} }
        = \norm{ \LE( \LWlam^{-1/2} \, \hLWlam \, \LWlam^{-1/2} \RI)^{-1} }                                   \\
                  & \le \sum_{\ell=0}^{\infty} \norm{ \LWlam^{-1/2} \, (\LW - \hLW) \, \LWlam^{-1/2} }^{\ell}
        \le 2,
    \end{align*}
    which implies \( Q_{3} \le \sqrt{2} \) with probability at least \( 1 - \delta \).
\end{proof}

We now establish a bound for \( Q_{4} \) in \eqref{eq: decompose Q} via the following proposition.
\begin{proposition}
    \label{pro: Q4}
    Assume that the error bound \eqref{eq: theta bound} holds for the density ratio estimator \( \htheta \). Suppose that \aspref{asp: source frho} holds with \( r \in [1/2, \tau] \), \aspref{asp: moment noise} holds with \( G_{0}, \sigma_{0} > 0 \), and condition on the event that the bound for \( Q_{3} \) given in \proref{pro: Q3} holds. For any \( \delta \in (0, 1) \), if, in addition to the conditions in \proref{pro: Q3}, the regularization parameter \( \lambda = \nL^{-s} \) satisfies \( 0 < s < 1 / (2 r + 1) \), and the sample sizes \( \nU \) and \( \nL \) satisfy
    \[
        \LE\{
        \begin{aligned}
             & \nU^{\nu} \le \nL^{\frac{1}{2} - s \LE( r + \frac{1}{2} \RI)}, \\
             & \nL^{s} \le \nU^{\frac{\iota}{2 \iota + 1} - 2 \nu},
        \end{aligned}
        \RI.
    \]
    where \( \nu = 2 \iota / (m (2 \iota + 1)) \), then
    \begin{align*}
        Q_{4} & = \norm{ \hLWlam^{1/2} \, \glam(\hLW) \, (\hLW \, \flam - \hSWa \, \bsy) }_{\calH}                                                                                                                                           \\
              & \le 2 \sqrt{2} E \Bigg( 4 \kappa \LE( \kappa F \norm{ \urho }_{\rhoTX} + 2 G_{0} + \sqrt{2} \sigma_{0} \RI) \log \frac{2}{\delta}                                                                                            \\
              & \hphantom{\eqspace 2 \sqrt{2} E \Bigg( \,} + F \norm{ \urho }_{\rhoTX} \LE( \LE( \kappa^{2} \LE( (1 - \alpha)^{-3/2} \varDelta_{\phi} + \varXi_{m}^{1/2} \RI) \log \frac{18}{\delta'} \RI)^{1/2} + 1 \RI) \Bigg) \lambda^{r}
    \end{align*}
    with probability at least \( 1 - \delta \).
\end{proposition}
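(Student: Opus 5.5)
Mirroring the treatment of \( P_{3} \) in \proref{pro: P3}, I first factor
\[
    Q_{4} \le \norm{ \hLWlam^{1/2} \, \glam(\hLW) \, \hLWlam^{1/2} } \cdot \norm{ \hLWlam^{-1/2} \, \LWlam^{1/2} } \cdot \norm{ \LWlam^{-1/2} \, (\hLW \, \flam - \hSWa \, \bsy) }_{\calH}.
\]
By \eqref{eq: glam}, the first factor is at most \( 2E \). The second factor is \( Q_{3} \le \sqrt{2} \) on the conditioning event of \proref{pro: Q3}. This leaves the third factor. I split it as
\[
    \norm{ \LWlam^{-1/2} \, [ (\hLW \, \flam - \hSWa \, \bsy) - (\LW \, \flam - \EE{ \htheta(x) y \, \Kx }) ] }_{\calH} + \norm{ \LWlam^{-1/2} \, (\LW \, \flam - \EE{ \htheta(x) y \, \Kx }) }_{\calH}.
\]
The expectation is over \( (x, y) \sim \rhoS_{\calX \times \calY} \), with \( \htheta \) frozen because it depends only on the unlabeled data. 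Since \( \EE{ y \mid x } = \frho(x) \), the mean term equals \( \LWlam^{-1/2} \, \LW \, (\flam - \frho) \).

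\textbf{Bias term.} I write \( \LW = \LT + (\LW - \LT) \). The part \( \LWlam^{-1/2} \, \LW \, (\flam - \frho) \) is bounded by \( \norm{ \LWlam^{-1/2} \, \LW^{1/2} } \le 1 \) times \( \norm{ \LW^{1/2} \, (\flam - \frho) }_{\calH} \). I then compare \( \LW^{1/2} \) with \( \LT^{1/2} \) via \( \norm{ \LW^{1/2} f }_{\calH}^{2} = \inner{ \LT f, f }_{\calH} + \inner{ (\LW - \LT) f, f }_{\calH} \). This gives
\[
    \norm{ \LW^{1/2} \, (\flam - \frho) }_{\calH} \le \norm{ \flam - \frho }_{\rhoTX} + \norm{ \LW - \LT }^{1/2} \, \norm{ \flam - \frho }_{\calH}.
\]
By \proref{pro: Q1} with \( \gamma = 0 \) and \( \gamma = 1 \), these two norms are at most \( F \norm{ \urho }_{\rhoTX} \lambda^{r} \) and \( F \norm{ \urho }_{\rhoTX} \lambda^{r - 1/2} \). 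By \eqref{eq: LW - LT} together with \eqref{eq: theta bound},
\[
    \norm{ \LW - \LT } \le \kappa^{2} \LE( (1 - \alpha)^{-3/2} \varDelta_{\phi} + \varXi_{m}^{1/2} \RI) \nU^{-(\frac{\iota}{2\iota+1} - 2\nu)} \log \frac{18}{\delta'}.
\]
The assumption \( \nL^{s} \le \nU^{\iota/(2\iota+1) - 2\nu} \) turns the factor \( \nU^{-(\cdots)} \) into at most \( \lambda \). Hence \( \norm{ \LW - \LT }^{1/2} \lambda^{r - 1/2} \) is at most \( (\kappa^{2}(\cdots)\log\frac{18}{\delta'})^{1/2} \lambda^{r} \). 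This reproduces exactly the term \( F \norm{ \urho }_{\rhoTX} ( (\cdots)^{1/2} + 1 ) \lambda^{r} \) in the statement.

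\textbf{Variance term.} Set \( \xi(x, y) = \htheta(x) (\flam(x) - y) \, \LWlam^{-1/2} K(\cdot, x) \). I apply a Bernstein-type Hilbert-space inequality (the moment version of \lemref{lem: concentration vector}) to its centered average. Here \( \norm{ \flam }_{\infty} \le \kappa \norm{ \flam }_{\calH} \le \kappa^{2r} E \norm{ \urho }_{\rhoTX} \), and \( |\flam - \frho| \le \kappa F \norm{ \urho } \lambda^{r-1/2} \)-type bounds hold, while \aspref{asp: moment noise} controls the \( y - \frho \) part. For the \( \ell \)-th moments I use \( |\htheta| \le D = \nU^{\nu} \) for \( \ell - 2 \) of the factors and keep one factor \( \htheta \) inside the expectation. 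That leaves
\[
    \EE{ \htheta(x) \norm{ \LWlam^{-1/2} K(\cdot, x) }_{\calH}^{2} } = \Tr{ \LWlam^{-1} \LW } \le \kappa^{2} \nU^{\nu} \lambda^{-1}.
\]
Combined with \( \norm{ \LWlam^{-1/2} K(\cdot, x) } \le \kappa \lambda^{-1/2} \), this yields a bound of the form
\[
    C \kappa (\kappa F \norm{ \urho } + G_{0} + \sigma_{0}) \LE( \frac{\nU^{\nu} \lambda^{-1/2}}{\nL} + \sqrt{\frac{\nU^{\nu} \lambda^{-1}}{\nL}} \RI) \log \frac{2}{\delta}.
\]
The condition \( \nU^{\nu} \le \nL^{1/2 - s(r+1/2)} \) gives \( \nU^{\nu} \lambda^{-1/2} \nL^{-1} \le \lambda^{r} \nL^{-1/2} \le \lambda^{r} \), and likewise controls the square-root term by \( \lambda^{r} \). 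Collecting all three pieces with the prefactor \( 2\sqrt{2}E \) gives the claimed bound.

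\textbf{Main obstacle.} The delicate step is the variance term. Because \( \htheta \) may be as large as \( \nU^{\nu} \), the second moment cannot use a bounded weight naively. The goal is to lose at most one power of \( \nU^{\nu} \) under the square root, which the trace identity with \( \LW \) above achieves. Tracking the constants so that they land on \( 4\kappa(\kappa F \norm{ \urho } + 2G_{0} + \sqrt{2}\sigma_{0}) \) will require care in the Bernstein moment verification.
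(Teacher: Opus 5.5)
Your architecture matches the paper's. You use the same three-factor split of $Q_{4}$, giving $2E$ and $Q_{3} \le \sqrt{2}$. You decompose into a centered empirical average plus the mean term $\LWlam^{-1/2} \LW (\flam - \frho)$, apply the same concentration tool (\lemref{lem: concentration vector}), and use the two sample-size hypotheses for exactly the paper's purposes. In the bias step you compare $\LW^{1/2}$ with $\LT^{1/2}$ through the quadratic form $\inner{ \LW f, f }_{\calH} = \inner{ \LT f, f }_{\calH} + \inner{ (\LW - \LT) f, f }_{\calH}$ and $\sqrt{a + b} \le \sqrt{a} + \sqrt{b}$, instead of invoking \lemref{lem: A^c - B^c}. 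That is correct, a little more elementary, and yields the identical estimate.

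The variance step, however, does not deliver the refinement you claim. Your conclusion survives, but not for the reason you give.

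The weight enters $\norm{ \xi }_{\calH}^{\ell}$ as $\htheta(x)^{\ell}$. After you bound $\ell - 2$ factors by $\nU^{\nu}$ and place one inside $\EE{ \htheta(x) \norm{ \LWlam^{-1/2} K(\cdot, x) }_{\calH}^{2} }$, one factor is still left over and must also be bounded by $\nU^{\nu}$. In addition, your trace estimate $\Tr{ \LWlam^{-1} \LW } \le \kappa^{2} \nU^{\nu} \lambda^{-1}$ carries yet another $\nU^{\nu}$, coming from $\Tr{ \LW } \le \nU^{\nu} \Tr{ \LS }$. So your computation actually yields $\sigma^{2} \lesssim \nU^{2\nu} \lambda^{-1}$, which is the paper's crude proxy, not $\nU^{\nu} \lambda^{-1}$. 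The term $\sqrt{ \nU^{\nu} \lambda^{-1} / \nL }$ in your display is therefore off by a factor $\nU^{\nu/2}$. A genuine saving would need $\Tr{ \LW } \le \kappa^{2} \int_{\calX} \htheta \, \dd \rhoSX \le \kappa^{2} (1 + \norm{ \theta - \htheta }_{\rhoSX})$, which brings $\delta'$-dependent factors into the constant.

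No saving is needed, so your ``main obstacle'' is not one. The hypothesis $\nU^{\nu} \le \nL^{1/2 - s(r + 1/2)}$ is calibrated precisely so that $\nU^{\nu} \lambda^{-1/2} \nL^{-1/2} \le \lambda^{r}$. Bound every factor of $\htheta$ by $\nU^{\nu}$ and take $G$ and $\sigma$ proportional to $\kappa \lambda^{-1/2} \nU^{\nu}$, as the paper does. Then use $\lambda^{r - 1/2} \le 1$ and $\nL^{-1} \le \nL^{-1/2}$; this closes the argument. It is also how the displayed constant $4 \kappa ( \kappa F \norm{ \urho }_{\rhoTX} + 2 G_{0} + \sqrt{2} \, \sigma_{0} )$ arises. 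The paper's own tally of that constant is loose by an absolute factor, so nothing structural depends on matching it exactly.
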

\begin{proof}
    Using the filter function property \eqref{eq: glam} and the operator norm bounds from \proref{pro: Q3}, we decompose \( Q_{4} \) as
    \begin{align*}
        Q_{4} & = \norm{ \hLWlam^{1/2} \, \glam(\hLW) \, (\hLW \, \flam - \hSWa \, \bsy) }_{\calH}                                                                                               \\
              & \le \norm{ \hLWlam^{1/2} \, \glam(\hLW) \, \hLWlam^{1/2} } \cdot \norm{ \hLWlam^{-1/2} \, \LWlam^{1/2} } \cdot \norm{ \LWlam^{-1/2} \, (\hLW \, \flam - \hSWa \, \bsy) }_{\calH} \\
              & \le 2 \sqrt{2} E \norm{ \LWlam^{-1/2} \, (\hLW \, \flam - \hSWa \, \bsy) }_{\calH}.
    \end{align*}
    Moreover, we have the decomposition
    \begin{equation}
        \label{eq: (Q4) decompose}
        \begin{aligned}
             & \eqspace \norm{ \LWlam^{-1/2} \, (\hLW \, \flam - \hSWa \, \bsy) }_{\calH}                                                                                                      \\
             & \le \norm{ \LWlam^{-1/2} \LE( (\hLW \, \flam - \hSWa \, \bsy) - (\LW \, \flam - \LW \, \frho) \RI) }_{\calH} + \norm{ \LWlam^{-1/2} \, (\LW \, \flam - \LW \, \frho) }_{\calH}.
        \end{aligned}
    \end{equation}

    For the first term in \eqref{eq: (Q4) decompose}, define the random variable
    \[
        \xi(x, y) = \LWlam^{-1/2} \LE( \htheta(x) (\Kx \, y - \Kx \otimes \Kx \, \flam) \RI) = \htheta(x) (y - \flam(x)) \cdot \LWlam^{-1/2} \, K(\cdot, x)
    \]
    for \( (x, y) \sim \rhoS_{\calX \times \calY} \), with \( \xi_{i} = \xi(x_{i}, y_{i}) \). Then,
    \[
        \norm{ \LWlam^{-1/2} \LE( (\hLW \, \flam - \hSWa \, \bsy) - (\LW \, \flam - \LW \, \frho) \RI) }_{\calH} = \norm{ \frac{1}{\nL} \sum_{i=1}^{\nL} \xi_{i} - \EE[(x, y) \sim \rhoS_{\calX \times \calY}]{ \xi(x, y) } }_{\calH}.
    \]
    We apply \lemref{lem: concentration vector} to control this empirical average. To satisfy the lemma's conditions, we bound the moments \( \EE{ \norm{ \xi }_{\calH}^{\ell} } \) for \( \ell = 2, 3, \dots \) as follows:
    \begin{align*}
        \EE[(x, y) \sim \rhoS_{\calX \times \calY}]{ \norm{ \xi(x, y) }_{\calH}^{\ell} }
         & \le \EE[(x, y) \sim \rhoS_{\calX \times \calY}]{ \LE( |y - \flam(x)| \cdot \sup_{x \in \calX} \LE( |\htheta(x)| \cdot \norm{ \LWlam^{-1/2} \, K(\cdot, x) }_{\calH} \RI) \RI)^{\ell} } \\
         & \le \LE( \kappa \lambda^{-1/2} \nU^{\nu} \RI)^{\ell} \cdot \EE[(x, y) \sim \rhoS_{\calX \times \calY}]{ |y - \flam(x)|^{\ell} }.
    \end{align*}
    In the last inequality, we use the bounds
    \[
        \sup_{x \in \calX} \norm{ \LWlam^{-1/2} \, K(\cdot, x) }_{\calH} \le \kappa \lambda^{-1/2},
        \quad \sup_{x \in \calX} |\htheta(x)| \le \nU^{\nu},
    \]
    where \( \nu = 2 \iota / (m (2 \iota + 1)) \). We further derive
    \begin{align*}
        \EE[(x, y) \sim \rhoS_{\calX \times \calY}]{ |y - \flam(x)|^{\ell} }
         & \overset{\text{(a)}}{\le} 2^{\ell - 1} \EE[x \sim \rhoSX]{ |\frho(x) - \flam(x)|^{\ell} + \EE[y \sim \rho_{\calY \mid \calX}]{ |y - \frho(x)|^{\ell} \MID| x } }                \\
         & \overset{\text{(b)}}{\le} 2^{\ell - 1} \LE( \LE( \kappa F \norm{ \urho }_{\rhoTX} \lambda^{r - \frac{1}{2}} \RI)^{\ell} + \frac{1}{2} \ell! G_{0}^{\ell-2} \sigma_{0}^{2} \RI).
    \end{align*}
    Here, step (a) applies the convexity inequality \( |a + b|^{\ell} \le 2^{\ell - 1} (|a|^{\ell} + |b|^{\ell}) \); in step (b), we invoke \aspref{asp: moment noise} and the error bound from \proref{pro: Q1} (with \( \gamma = 1 \)), which gives
    \[
        |\flam(x) - \frho(x)| = \LE| \inner{ \flam - \frho, K(\cdot, x) }_{\calH} \RI| \le \kappa \norm{ \flam - \frho }_{\calH} \le \kappa F \norm{ \urho }_{\rhoTX} \lambda^{r - \frac{1}{2}}.
    \]
    Together, we obtain
    \[
        \EE[(x, y) \sim \rhoS_{\calX \times \calY}]{ \norm{ \xi(x, y) }_{\calH}^{\ell} } \le \frac{1}{2} \ell! G^{\ell - 2} \sigma^{2}
    \]
    with
    \[
        G = \kappa \LE( \kappa F \norm{ \urho }_{\rhoTX} \lambda^{r - \frac{1}{2}} + 2 G_{0} \RI) \lambda^{-1/2} \nU^{\nu},
        \quad \sigma = \kappa \LE( \kappa F \norm{ \urho }_{\rhoTX} \lambda^{r - \frac{1}{2}} + \sqrt{2} \sigma_{0} \RI) \lambda^{-1/2} \nU^{\nu}.
    \]
    Thus, by \lemref{lem: concentration vector}, with probability at least \( 1 - \delta \), we obtain
    \begin{equation}
        \label{eq: (Q4) component 1}
        \begin{aligned}
             & \eqspace \norm{ \LWlam^{-1/2} \LE( (\hLW \, \flam - \hSWa \, \bsy) - (\LW \, \flam - \LW \, \frho) \RI) }_{\calH}                                                                                                                           \\
             & \le 4 \LE( \frac{G}{\nL} + \frac{\sigma}{\sqrt{\nL}} \RI) \log \frac{2}{\delta}           \le 4 \kappa \LE( \kappa F \norm{ \urho }_{\rhoTX} + 2 G_{0} + \sqrt{2} \sigma_{0} \RI) \lambda^{-1/2} \nU^{\nu} \nL^{-1/2} \log \frac{2}{\delta} \\
             & = 4 \kappa \LE( \kappa F \norm{ \urho }_{\rhoTX} + 2 G_{0} + \sqrt{2} \sigma_{0} \RI) \lambda^{r} \nU^{\nu} \nL^{- \frac{1}{2} + s \LE( r + \frac{1}{2} \RI)} \log \frac{2}{\delta}.
        \end{aligned}
    \end{equation}

    For the second term in \eqref{eq: (Q4) decompose}, we decompose it as
    \begin{align*}
         & \eqspace \norm{ \LWlam^{-1/2} \, (\LW \, \flam - \LW \, \frho) }_{\calH}
        \le \norm{ \LWlam^{-1/2} \, \LW^{1/2} } \cdot \norm{ \LW^{1/2} \, (\flam - \frho) }_{\calH}                              \\
         & \le \norm{ \LW^{1/2} \, (\flam - \frho) }_{\calH}
        \le \norm{ \LW^{1/2} - \LT^{1/2} } \cdot \norm{ \flam - \frho }_{\calH} + \norm{ \LT^{1/2} \, (\flam - \frho) }_{\calH}  \\
         & \le F \norm{ \urho }_{\rhoTX} \LE( \norm{ \LW^{1/2} - \LT^{1/2} } \cdot \lambda^{r - \frac{1}{2}} + \lambda^{r} \RI),
    \end{align*}
    where the last inequality uses \proref{pro: Q1} with \( \gamma = 1 \) and 0. By \lemref{lem: A^c - B^c} and the bound on \( \norm{ \LW - \LT } \) from \eqref{eq: LW - LT}, we have
    \begin{align*}
         & \eqspace \norm{ \LW^{1/2} - \LT^{1/2} } \le \norm{ \LW - \LT }^{1/2}                                                                                                         \\
         & \le \LE( \kappa^{2} \LE( (1 - \alpha)^{-3/2} \varDelta_{\phi} + \varXi_{m}^{1/2} \RI) \nU^{-\LE( \frac{\iota}{2 \iota + 1} - 2 \nu \RI)} \log \frac{18}{\delta'} \RI)^{1/2}.
    \end{align*}
    Substituting this yields
    \begin{equation}
        \label{eq: (Q4) component 2}
        \begin{aligned}
             & \eqspace \norm{ \LWlam^{-1/2} \, (\LW \, \flam - \LW \, \frho) }_{\calH}                                                                                                                                                                                                  \\
             & \le F \norm{ \urho }_{\rhoTX} \cdot \LE( \LE( \kappa^{2} \LE( (1 - \alpha)^{-3/2} \varDelta_{\phi} + \varXi_{m}^{1/2} \RI) \log \frac{18}{\delta'} \RI)^{1/2} \lambda^{r} \nU^{- \frac{1}{2} \LE( \frac{\iota}{2 \iota + 1} - 2 \nu \RI)} \nL^{s / 2} + \lambda^{r} \RI).
        \end{aligned}
    \end{equation}

    Under the condition \( \lambda = \nL^{-s} \) with \( 0 < s < 1 / (2 r + 1) \), if the sample sizes \( \nU \) and \( \nL \) satisfy
    \[
        \LE\{
        \begin{aligned}
             & \nU^{\nu} \le \nL^{\frac{1}{2} - s \LE( r + \frac{1}{2} \RI)}, \\
             & \nL^{s} \le \nU^{\frac{\iota}{2 \iota + 1} - 2 \nu},
        \end{aligned}
        \RI.
    \]
    we ensure the relations
    \begin{equation}
        \label{eq: sample bound 4}
        \lambda^{r} \nU^{\nu} \nL^{- \frac{1}{2} + s \LE( r + \frac{1}{2} \RI)} \le \lambda^{r},
        \quad \lambda^{r} \nU^{- \frac{1}{2} \LE( \frac{\iota}{2 \iota + 1} - 2 \nu \RI)} \nL^{s / 2} \le \lambda^{r}.
    \end{equation}
    Combining \eqref{eq: (Q4) component 1} and \eqref{eq: (Q4) component 2} via \eqref{eq: (Q4) decompose}, we conclude
    \begin{align*}
        Q_{4} & \le 2 \sqrt{2} E \Bigg( 4 \kappa \LE( \kappa F \norm{ \urho }_{\rhoTX} + 2 G_{0} + \sqrt{2} \sigma_{0} \RI) \log \frac{2}{\delta}                                                                                             \\
              & \hphantom{\eqspace 2 \sqrt{2} E \Bigg( \,} + F \norm{ \urho }_{\rhoTX} \LE( \LE( \kappa^{2} \LE( (1 - \alpha)^{-3/2} \varDelta_{\phi} + \varXi_{m}^{1/2} \RI) \log \frac{18}{\delta'} \RI)^{1/2} + 1 \RI) \Bigg) \lambda^{r}.
    \end{align*}
    with probability at least \( 1 - \delta \).
\end{proof}

Finally, we bound \( Q_{5} \) in \eqref{eq: decompose Q} using \proref{pro: Q5}.
\begin{proposition}
    \label{pro: Q5}
    Assume that the error bound \eqref{eq: theta bound} holds for the density ratio estimator \( \htheta \). Suppose that \aspref{asp: source frho} holds with \( r \in [1/2, \tau] \), and condition on the event that the bound for \( Q_{3} \) given in \proref{pro: Q3} holds. For any \( \delta \in (0, 1) \), if, in addition to the conditions in \proref{pro: Q3}, the regularization parameter \( \lambda = \nL^{-s} \) satisfy \( 0 < s < \min \family{ 1 / 2, 1 / (2 r - 1) } \), and the sample sizes \( \nU \) and \( \nL \) satisfy
    \[
        \LE\{
        \begin{aligned}
             & \nL^{s} \le \nU^{\frac{\iota}{2 \iota + 1} \cdot \min \family{ 1, \frac{2}{2 r - 1} } - \nu \LE( \min \family{ 1, \frac{2}{2 r - 1} } + 1 \RI)}, \\
             & \nU^{2 \nu} \le \nL^{\frac{1}{2} \min \family{ 1, \frac{2}{2 r - 1} } - s},
        \end{aligned}
        \RI.
    \]
    where \( \nu = 2 \iota / (m (2 \iota + 1)) \), then
    \begin{align*}
        Q_{5} & = \norm{ \hLWlam^{1/2} \, (I - \glam(\hLW) \, \hLW) \, \flam }_{\calH}                                                                                                                                            \\
              & \le 2 E F \norm{ \urho }_{\rhoTX} \LE( 2 \kappa^{2 r - 1} \LE( (1 - \alpha)^{-3/2} \varDelta_{\phi} + \varXi_{m}^{1/2} + 10 \RI) r \LE( \log \frac{18}{\delta'} + \log \frac{2}{\delta} \RI) + 1 \RI) \lambda^{r}
    \end{align*}
    with probability at least \( 1 - \delta \).
\end{proposition}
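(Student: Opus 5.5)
We follow the template of the proof of \proref{pro: P4}. We write \( \flam = \glam(\LT) \, \LT \, \frho = \glam(\LT) \, \LT^{r+1} \, \urho \) and factor
\[
    Q_{5} \le \norm{ \hLWlam^{1/2} \, (I - \glam(\hLW) \, \hLW) \, \LT^{r - \frac{1}{2}} } \cdot \norm{ \glam(\LT) \, \LT } \cdot \norm{ \LT^{1/2} \, \urho }_{\calH}.
\]
The last two factors are at most \( E \) and \( \norm{ \urho }_{\rhoTX} \), respectively. For the first factor, we insert \( \hLW^{r - \frac12} \) and split it as
\[
    \norm{ \hLWlam^{1/2} (I - \glam(\hLW)\hLW) } \cdot \norm{ \LT^{r - \frac12} - \hLW^{r - \frac12} } + \norm{ \hLWlam^{1/2} (I - \glam(\hLW)\hLW) \hLW^{r - \frac12} }.
\]
By \eqref{eq: 1 - t glam} (using \( r \le \tau \)) these are bounded by \( 2F \lambda^{1/2} \) and \( 2F\lambda^{r} \). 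One caveat is that the spectrum of \( \hLW \) may exceed \( \kappa^{2} \). On the event of \proref{pro: Q3} together with \eqref{eq: sample bound 2}, however, \( \norm{\hLW} \) is bounded by a constant multiple of \( \kappa^2 \), so the filter estimates apply up to harmless constants. We will assume this, as the paper does when declaring \( \glam(\hLW) \) well defined.

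The key step is to bound \( \norm{ \LT^{r - \frac12} - \hLW^{r - \frac12} } \). We go through \( \LW \):
\[
    \norm{ \LT - \hLW } \le \norm{ \LT - \LW } + \norm{ \LW - \hLW }.
\]
The first term is handled by \eqref{eq: LW - LT} and gives the density-ratio error \( \kappa^{2} ((1-\alpha)^{-3/2}\varDelta_\phi + \varXi_m^{1/2}) \nU^{-(\frac{\iota}{2\iota+1} - 2\nu)} \log\frac{18}{\delta'} \). For the second term we apply \lemref{lem: concentration vector} in the Hilbert--Schmidt space to \( \xi(x) = \htheta(x) \, \Kx \otimes \Kx \). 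Since \( \norm{\xi(x)}_{\mathscr H} \le \kappa^2 \nU^{\nu} \) by the truncation, with probability \( 1-\delta \) we get \( \norm{\LW - \hLW} \le 10 \kappa^2 \nU^{\nu} \nL^{-1/2} \log \frac{2}{\delta} \). We then invoke \lemref{lem: A^c - B^c} with exponent \( c = r - \frac12 \). This gives \( \norm{\LT - \hLW}^{r - 1/2} \) when \( r \le 3/2 \), and \( (r - \frac12)\kappa^{2r-3}\norm{\LT - \hLW} \) when \( r > 3/2 \). In both cases the result has the form \( \kappa^{2r-1} r \cdot (\cdots)^{\eta} \) with \( \eta = \min\{1, \frac{2r-1}{2}\} \), after absorbing the constants into the \( \log \) factors, using \( \log \ge 1 \) and \( x^{\eta} \le x + 1 \).

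Finally, we need
\[
    \lambda^{1/2} \norm{ \LT - \hLW }^{\eta} \lesssim \lambda^{r},
\]
which is equivalent to \( \norm{ \LT - \hLW } \lesssim \lambda^{(r - 1/2)/\eta} \). Here \( (r-\frac12)/\eta = \max\{ r - \frac12, 1 \} \). This is the step where the hypotheses enter. We need \( \nU^{-(\frac{\iota}{2\iota+1}-2\nu)} \le \nL^{-s \max\{r - 1/2,1\}} \) for the bias part, and \( \nU^{\nu}\nL^{-1/2} \le \nL^{-s\max\{\cdot\}} \) for the variance part. Rewritten with \( \min\{1, \frac{2}{2r-1}\} = 1/\max\{1, r-\frac12\} \), these should become exactly the two displayed sample-size conditions. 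The condition on \( s \) makes the second one feasible.

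The main obstacle is getting the exponents to match the stated conditions, in particular the \( \nu(\min\{\cdot\}+1) \) and \( \nU^{2\nu} \) terms. If the direct approach does not produce them, we would instead use the Bernstein form of \lemref{lem: concentration vector}: its variance involves \( \int \htheta^2 \, d\rhoSX \le \nU^{\nu}\int \htheta \, d\rhoSX \), which contributes an \( \nU^{\nu/2} \) factor. Powering by \( \min\{\cdot\} \) would then produce the stated exponents. Assembling the pieces gives \( Q_5 \le 2EF\norm{\urho}_{\rhoTX}\big(2\kappa^{2r-1}(\cdots + 10) r (\log\frac{18}{\delta'} + \log\frac2\delta) + 1\big)\lambda^r \).
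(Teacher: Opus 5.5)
Your argument matches the paper's: the same factorization of \(Q_{5}\), the same insertion of \(\hLW^{r-1/2}\), the same split \(\|\LT - \hLW\| \le \|\LT - \LW\| + \|\LW - \hLW\|\) with Hilbert--Schmidt concentration, and the same use of \lemref{lem: A^c - B^c}. The only difference is that for \(r > 3/2\) you bound \(\|\hLW\|\) by a constant multiple of \(\kappa^{2}\), whereas the paper uses the cruder \(\|\hLW\| \le \kappa^{2}\nU^{\nu}\) and also relaxes \(\nU^{\nu(r-1/2)} \le \nU^{2\nu(r-1/2)}\). Your bound is valid on the event \eqref{eq: sample bound 3} behind \proref{pro: Q3}, which gives \(\hLWlam \preceq \tfrac{3}{2}\LWlam\); note that \(Q_{3} \le \sqrt{2}\) alone only controls \(\hLWlam\) from below. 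The paper's cruder bound and relaxation are exactly where the \(\nu(\min\{\cdot\}+1)\) and \(\nU^{2\nu}\) terms in the hypotheses come from. The conditions you actually need are \(\nL^{s} \le \nU^{\min\{1,\frac{2}{2r-1}\}(\frac{\iota}{2\iota+1}-2\nu)}\) and \(\nU^{\nu\min\{1,\frac{2}{2r-1}\}} \le \nL^{\frac{1}{2}\min\{1,\frac{2}{2r-1}\}-s}\). These are implied by the stated ones because \(\min\{1,\frac{2}{2r-1}\} \le 1\), so they do not reproduce the stated conditions exactly, but no Bernstein-type fallback is needed.
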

\begin{proof}
    We begin with the expansion
    \begin{align*}
        Q_{5}
         & = \norm{ \hLWlam^{1/2} \, (I - \glam(\hLW) \, \hLW) \, \flam }_{\calH}
        = \norm{ \hLWlam^{1/2} \, (I - \glam(\hLW) \, \hLW) \, \glam(\LT) \, \LT \, \frho }_{\calH}                                                                     \\
         & = \norm{ \hLWlam^{1/2} \, (I - \glam(\hLW) \, \hLW) \, \glam(\LT) \, \LT^{r+1} \, \urho }_{\calH}                                                            \\
         & \le \norm{ \hLWlam^{1/2} \, (I - \glam(\hLW) \, \hLW) \, \LT^{r - \frac{1}{2}} } \cdot \norm{ \glam(\LT) \, \LT } \cdot \norm{ \LT^{1/2} \, \urho }_{\calH}.
    \end{align*}
    We bound the second and third factors as follows:
    \[
        \norm{ \glam(\LT) \, \LT } \le \sup_{t \in [0, \kappa^{2}]} t \gmu(t) \le E,
        \quad \norm{ \LT^{1/2} \, \urho }_{\calH} = \norm{ \urho }_{\rhoTX}.
    \]
    We further decompose the first term:
    \begin{align*}
         & \eqspace \norm{ \hLWlam^{1/2} \, (I - \glam(\hLW) \, \hLW) \, \LT^{r - \frac{1}{2}} }                                                                                                                 \\
         & \le \norm{ \hLWlam^{1/2} \, (I - \glam(\hLW) \, \hLW) } \cdot \norm{ \LT^{r - \frac{1}{2}} - \hLW^{r - \frac{1}{2}} } + \norm{ \hLWlam^{1/2} \, (I - \glam(\hLW) \, \hLW) \, \hLW^{r - \frac{1}{2}} } \\
         & \le 2 F \LE( \lambda^{1/2} \cdot \norm{ \LT^{r - \frac{1}{2}} - \hLW^{r - \frac{1}{2}} } + \lambda^{r} \RI).
    \end{align*}
    Applying \lemref{lem: A^c - B^c}, we obtain
    \[
        \norm{ \LT^{r - \frac{1}{2}} - \hLW^{r - \frac{1}{2}} }
        \le \begin{cases}
            \norm{ \LT - \hLW }^{r - \frac{1}{2}},                              & r \in [1/2, 3/2]; \\
            (r - 1/2) \kappa^{2 r - 3} \nU^{\nu (r - 3/2)} \norm{ \LT - \hLW }, & r > 3/2,
        \end{cases}
    \]
    where we use the bound \( \norm{ \hLW } \le \kappa^{2} \nU^{\nu} \) with \( \nu = 2 \iota / (m (2 \iota + 1)) \). Next, we bound the operator norm \( \norm{ \LT - \hLW } \):
    \begin{align*}
        \norm{ \LT - \hLW }
         & \le \norm{ \LT - \LW } + \norm{ \LW - \hLW }                                                                                                                                       \\
         & \le \kappa^{2} \LE( (1 - \alpha)^{-3/2} \varDelta_{\phi} + \varXi_{m}^{1/2} \RI) \nU^{-\LE( \frac{\iota}{2 \iota + 1} - 2 \nu \RI)} \log \frac{18}{\delta'} + \norm{ \LW - \hLW },
    \end{align*}
    where the bound for \( \norm{ \LT - \LW } \) comes from \eqref{eq: LW - LT}. For \( \norm{ \LW - \hLW } \), define the random variable \( \xi(x) = \htheta(x) \, \Kx \otimes \Kx \) for \( x \sim \rhoSX \), with \( \xi_{i} = \xi(x_{i}) \). Its Hilbert-Schmidt norm is bounded by \( \kappa^{2} \nU^{\nu} \). Treating \( \xi \) as an element in the Hilbert space \( \mathscr{H} \) of Hilbert-Schmidt operators and applying \lemref{lem: concentration vector}, we have with probability at least \( 1 - \delta \),
    \[
        \norm{ \LW - \hLW } \le \norm{ \LW - \hLW }_{\mathscr{H}} \le 10 \kappa^{2} \nU^{\nu} \nL^{-1/2} \log \frac{4}{\delta}.
    \]
    Substituting these bounds gives
    \begin{align*}
         & \eqspace \norm{ \LT - \hLW }                                                                                                                                                                                                         \\
         & \le \kappa^{2} \LE( (1 - \alpha)^{-3/2} \varDelta_{\phi} + \varXi_{m}^{1/2} + 10 \RI) \LE( \nU^{-\LE( \frac{\iota}{2 \iota + 1} - 2 \nu \RI)} + \nU^{\nu} \nL^{-1/2} \RI) \LE( \log \frac{18}{\delta'} + \log \frac{4}{\delta} \RI).
    \end{align*}
    Consequently,
    \begin{align*}
         & \eqspace \norm{ \LT^{r - \frac{1}{2}} - \hLW^{r - \frac{1}{2}} }                                                                                                                                                                                                                                                       \\
         & \le \kappa^{2 r - 1} \LE( (1 - \alpha)^{-3/2} \varDelta_{\phi} + \varXi_{m}^{1/2} + 10 \RI) r                                                                                                                                                                                                                          \\
         & \eqspace \cdot \nU^{\nu \cdot \max \family{ r - \frac{3}{2}, 0 }} \LE( \nU^{-\LE( \frac{\iota}{2 \iota + 1} - 2 \nu \RI)} + \nU^{\nu} \nL^{-1/2} \RI)^{\min \family{ r - \frac{1}{2}, 1 }} \LE( \log \frac{18}{\delta'} + \log \frac{4}{\delta} \RI)                                                                   \\
         & \le \kappa^{2 r - 1} \LE( (1 - \alpha)^{-3/2} \varDelta_{\phi} + \varXi_{m}^{1/2} + 10 \RI) r                                                                                                                                                                                                                          \\
         & \eqspace \cdot \LE( \nU^{\nu \cdot \max \family{ r - \frac{3}{2}, 0 } - \min \family{ r - \frac{1}{2}, 1 } \cdot \LE( \frac{\iota}{2 \iota + 1} - 2 \nu \RI)} + \nU^{\nu \LE( \max \family{ r - \frac{3}{2}, 0 } + \min \family{ r - \frac{1}{2}, 1 } \RI)} \nL^{-\frac{1}{2} \min \family{ r - \frac{1}{2}, 1 }} \RI) \\
         & \eqspace \cdot \LE( \log \frac{18}{\delta'} + \log \frac{4}{\delta} \RI)                                                                                                                                                                                                                                               \\
         & = \kappa^{2 r - 1} \LE( (1 - \alpha)^{-3/2} \varDelta_{\phi} + \varXi_{m}^{1/2} + 10 \RI) r \lambda^{r - \frac{1}{2}} (\nU^{C_{1}} \nL^{C_{2}} + \nU^{C_{3}} \nL^{C_{4}}) \LE( \log \frac{18}{\delta'} + \log \frac{4}{\delta} \RI).
    \end{align*}
    The second inequality follows from the bound \( (a + b)^{c} \le a^{c} + b^{c} \) for any \( a, b > 0 \) and \( 0 \le c \le 1 \). The exponent terms \( C_{1}, C_{2}, C_{3}, C_{4} \) governing the sample complexity in the final expression are given by
    \begin{align*}
        \nU^{C_{1}} \nL^{C_{2}}
         & = \nU^{\nu \cdot \max \family{ r - \frac{3}{2}, 0 } - \min \family{ r - \frac{1}{2}, 1 } \cdot \LE( \frac{\iota}{2 \iota + 1} - 2 \nu \RI)} \nL^{s \LE( r - \frac{1}{2} \RI)}       \\
         & = \nU^{-\frac{\iota}{2 \iota + 1} \cdot \min \family{ r - \frac{1}{2}, 1 } + \nu \LE( \min \family{ r - \frac{1}{2}, 1 } + r - \frac{1}{2} \RI)} \nL^{s \LE( r - \frac{1}{2} \RI)},
    \end{align*}
    and
    \begin{align*}
        \nU^{C_{3}} \nL^{C_{4}}
         & = \nU^{\nu \LE( \max \family{ r - \frac{3}{2}, 0 } + \min \family{ r - \frac{1}{2}, 1 } \RI)} \nL^{s \LE( r - \frac{1}{2} \RI) - \frac{1}{2} \min \family{ r - \frac{1}{2}, 1 }} \\
         & = \nU^{\nu \LE( r - \frac{1}{2} \RI)} \nL^{\LE( r - \frac{1}{2} \RI) \LE( - \frac{1}{2} \min \family{ 1, \frac{2}{2 r - 1} } + s \RI)}                                           \\
         & \le \nU^{2 \nu \LE( r - \frac{1}{2} \RI)} \nL^{\LE( r - \frac{1}{2} \RI) \LE( - \frac{1}{2} \min \family{ 1, \frac{2}{2 r - 1} } + s \RI)},
    \end{align*}
    respectively. Under the condition \( \lambda = \nL^{-s} \) with \( 0 < s < \min \family{ 1 / 2, 1 / (2 r - 1) } \), if the sample sizes \( \nU \) and \( \nL \) satisfy
    \[
        \LE\{
        \begin{aligned}
             & \nL^{s} \le \nU^{\frac{\iota}{2 \iota + 1} \cdot \min \family{ 1, \frac{2}{2 r - 1} } - \nu \LE( \min \family{ 1, \frac{2}{2 r - 1} } + 1 \RI)}, \\
             & \nU^{2 \nu} \le \nL^{\frac{1}{2} \min \family{ 1, \frac{2}{2 r - 1} } - s},
        \end{aligned}
        \RI.
    \]
    we obtain the bound
    \begin{equation}
        \label{eq: sample bound 5}
        \lambda^{r - \frac{1}{2}} (\nU^{C_{1}} \nL^{C_{2}} + \nU^{C_{3}} \nL^{C_{4}}) \le 2 \lambda^{r - \frac{1}{2}}.
    \end{equation}
    Using the above estimates, we find
    \begin{align*}
        Q_{5}
         & \le 2 E F \norm{ \urho }_{\rhoTX} \LE( \lambda^{1/2} \cdot \norm{ \LT^{r - \frac{1}{2}} - \hLW^{r - \frac{1}{2}} } + \lambda^{r} \RI)                                                                             \\
         & \le 2 E F \norm{ \urho }_{\rhoTX} \LE( 2 \kappa^{2 r - 1} \LE( (1 - \alpha)^{-3/2} \varDelta_{\phi} + \varXi_{m}^{1/2} + 10 \RI) r \LE( \log \frac{18}{\delta'} + \log \frac{2}{\delta} \RI) + 1 \RI) \lambda^{r}
    \end{align*}
    with probability at least \( 1 - \delta \).
\end{proof}

We now proceed to prove \thmref{thm: importance weighting}.
\begin{proofof}{\thmref{thm: importance weighting}}
    First, set the regularization parameter as \( \lambda = \nL^{-s} \) with
    \[
        0 < s < \frac{1}{2 r + 1},
    \]
    which ensures that all constraints on the exponent \( s \) derived in the preceding propositions are satisfied. Setting \( \delta' = \delta / 2 \), the density ratio error bound in \eqref{eq: theta bound} holds with probability at least \( 1 - \delta / 2 \):
    \[
        \norm{ \theta - \htheta }_{\rhoSX} \le \LE( (1 - \alpha)^{-3/2} \varDelta_{\phi} + \varXi_{m}^{1/2} \RI) \nU^{-\LE( \frac{\iota}{2 \iota + 1} - 2 \nu \RI)} \log \frac{36}{\delta},
    \]
    where \( \nu = 2 \iota / (m (2 \iota + 1)) \).

    Next, assign a probability of \( \delta / 6 \) to each of \proref{pro: Q3}, \proref{pro: Q4}, and \proref{pro: Q5}. The sample size conditions derived from these propositions, along with the condition from \proref{pro: Q2}, can be presented as follows:
    \begin{itemize}
        \item \proref{pro: Q2}: \(
              \displaystyle
              \nU^{\frac{\iota}{2 \iota + 1} - 2 \nu} \ge 2 \kappa^{2} \LE( (1 - \alpha)^{-3/2} \varDelta_{\phi} + \varXi_{m}^{1/2} \RI) \log \frac{36}{\delta} \cdot \nL^{s};
              \)

        \item \proref{pro: Q3}: \(
              \displaystyle
              \LE\{
              \begin{aligned}
                   & \nU^{\frac{\iota}{2 \iota + 1} - 2 \nu} \ge \frac{2 \kappa^{2}}{\norm{ \LT }} \LE( (1 - \alpha)^{-3/2} \varDelta_{\phi} + \varXi_{m}^{1/2} \RI) \log \frac{36}{\delta}, \\
                   & \nL^{1 - s (2 r + 1)} \ge 14 \kappa^{2} \LE( \log \frac{24 \kappa^{2} (\norm{ \LT } + 2)}{\norm{ \LT } \delta} + 1 + \frac{1}{2 r} \RI) \nU^{2 \nu};
              \end{aligned}
              \RI.
              \)

        \item \proref{pro: Q4}: \(
              \displaystyle
              \LE\{
              \begin{aligned}
                   & \nU^{\nu} \le \nL^{\frac{1}{2} - s \LE( r + \frac{1}{2} \RI)}, \\
                   & \nL^{s} \le \nU^{\frac{\iota}{2 \iota + 1} - 2 \nu};
              \end{aligned}
              \RI.
              \)

        \item \proref{pro: Q5}: \(
              \displaystyle
              \LE\{
              \begin{aligned}
                   & \nL^{s} \le \nU^{\frac{\iota}{2 \iota + 1} \cdot \min \family{ 1, \frac{2}{2 r - 1} } - \nu \LE( \min \family{ 1, \frac{2}{2 r - 1} } + 1 \RI)}, \\
                   & \nU^{2 \nu} \le \nL^{\frac{1}{2} \min \family{ 1, \frac{2}{2 r - 1} } - s}.
              \end{aligned}
              \RI.
              \)
    \end{itemize}
    These conditions can all be implied by
    \begin{align}
        \label{eq: E1}
        \nU^{\frac{\iota}{2 \iota + 1} - 2 \nu} & \ge \frac{2 \kappa^{2}}{\min \family{ \norm{ \LT }, 1 }} \LE( (1 - \alpha)^{-3/2} \varDelta_{\phi} + \varXi_{m}^{1/2} \RI) \log \frac{36}{\delta} \cdot \nL^{s},
        \tag{\textbf{E1}}                                                                                                                                                                                          \\
        \label{eq: E2}
        \nL^{1 - s (2 r + 1)}                   & \ge 14 \kappa^{2} \LE( \log \frac{24 \kappa^{2} (\norm{ \LT } + 2)}{\norm{ \LT } \delta} + 1 + \frac{1}{2 r} \RI) \nU^{2 \nu},
        \tag{\textbf{E2}}
    \end{align}
    and
    \begin{alignat}{2}
        \label{eq: E3}
        \nL^{s}     & \le \nU^{\frac{\iota}{2 \iota + 1} \cdot \frac{2}{2 r - 1} - 2 \nu} \quad & \text{(when \( r > 3 / 2 \))},
        \tag{\textbf{E3}}                                                                                                        \\
        \label{eq: E4}
        \nU^{2 \nu} & \le \nL^{\frac{1}{2 r - 1} - s} \quad                                     & \text{(when \( r > 3 / 2 \))}.
        \tag{\textbf{E4}}
    \end{alignat}
    Furthermore, by relaxing these conditions, we can obtain a simpler form:
    \begin{alignat*}{2}
        \text{\eqref{eq: E1} \& \eqref{eq: E3}:}
        \quad &  & \nU^{\frac{\iota}{2 \iota + 1} \cdot \min \family{ 1, \frac{2}{2 r - 1} } - 2 \nu} & \ge \frac{2 \kappa^{2}}{\min \family{ \norm{ \LT }, 1 }} \LE( (1 - \alpha)^{-3/2} \varDelta_{\phi} + \varXi_{m}^{1/2} \RI) \log \frac{36}{\delta} \cdot \nL^{s}, \\
        \text{\eqref{eq: E2} \& \eqref{eq: E4}:}
        \quad &  & \nL^{\frac{1}{2 r + 1} - s}                                                        & \ge 14 \kappa^{2} \LE( \log \frac{24 \kappa^{2} (\norm{ \LT } + 2)}{\norm{ \LT } \delta} + 1 + \frac{1}{2 r} \RI) \nU^{2 \nu}.
    \end{alignat*}

    Under these conditions, the bounds stated in \proref{pro: Q1}, \proref{pro: Q2}, \proref{pro: Q3}, \proref{pro: Q4}, and \proref{pro: Q5} hold simultaneously with probability at least \( 1 - \delta \). That is,
    \[
        Q_{1} \le F \norm{ \urho }_{\rhoTX} \lambda^{r - \frac{\gamma}{2}},
        \quad Q_{2} \le \sqrt{2} \lambda^{-\gamma/2},
        \quad Q_{3} \le \sqrt{2},
    \]
    and
    \begin{align*}
        Q_{4} & \le 2 \sqrt{2} E \Bigg( 4 \kappa \LE( \kappa F \norm{ \urho }_{\rhoTX} + 2 G_{0} + \sqrt{2} \sigma_{0} \RI) \log \frac{12}{\delta}                                                                                           \\
              & \hphantom{\eqspace 2 \sqrt{2} E \Bigg( \,} + F \norm{ \urho }_{\rhoTX} \LE( \LE( \kappa^{2} \LE( (1 - \alpha)^{-3/2} \varDelta_{\phi} + \varXi_{m}^{1/2} \RI) \log \frac{36}{\delta} \RI)^{1/2} + 1 \RI) \Bigg) \lambda^{r}, \\
        Q_{5} & \le 2 E F \norm{ \urho }_{\rhoTX} \LE( 2 \kappa^{2 r - 1} \LE( (1 - \alpha)^{-3/2} \varDelta_{\phi} + \varXi_{m}^{1/2} + 10 \RI) r \LE( \log \frac{36}{\delta} + \log \frac{12}{\delta} \RI) + 1 \RI) \lambda^{r},
    \end{align*}
    leading to
    \begin{align*}
        \norm{ \flam - \hflam }_{\ranH{\gamma}}
         & \le Q_{1} + Q_{2} Q_{3} (Q_{4} + Q_{5})                                                                                              \\
         & \le \varDelta_{f} \LE( (1 - \alpha)^{-3/2} + \varXi_{m}^{1/2} + 1 \RI) \lambda^{r - \frac{\gamma}{2}} \log \frac{36}{\delta}         \\
         & = \varDelta_{f} \LE( (1 - \alpha)^{-3/2} + \varXi_{m}^{1/2} + 1 \RI) \nL^{-\LE( r - \frac{\gamma}{2} \RI) s} \log \frac{36}{\delta},
    \end{align*}
    where
    \begin{equation}
        \label{eq: Delta_f}
        \begin{aligned}
            \varDelta_{f}
             & = F \norm{ \urho }_{\rhoTX}                                                                                                    \\
             & \eqspace + 4 \sqrt{2} E \LE( (5 \kappa^{2} + 1) F \norm{ \urho }_{\rhoTX} + 8 \kappa G_{0} + 4 \sqrt{2} \kappa \sigma_{0} \RI) \\
             & \eqspace + 4 E F \norm{ \urho }_{\rhoTX} (40 \kappa^{2 r - 1} r + 1)
        \end{aligned}
    \end{equation}
    is a constant independent of \( \alpha \), \( m \), \( \nU \), \( \nL \), or \( \delta \). Setting \( \gamma = 0 \) and \( \gamma = 1 \) respectively yields
    \[
        \norm{ \frho - \hflam }_{\rhoTX} \le \varDelta_{f} \LE( (1 - \alpha)^{-3/2} + \varXi_{m}^{1/2} + 1 \RI) \nL^{-r s} \log \frac{36}{\delta}
    \]
    and
    \[
        \norm{ \frho - \hflam }_{\calH} \le \varDelta_{f} \LE( (1 - \alpha)^{-3/2} + \varXi_{m}^{1/2} + 1 \RI) \nL^{-\LE( r - \frac{1}{2} \RI) s} \log \frac{36}{\delta}.
    \]
    This completes the proof of the theorem.
\end{proofof}

Finally, we provide a short proof of \colref{col: polynomial}.
\begin{proofof}{\colref{col: polynomial}}
    Recall that the sample sizes satisfy the relation \( \nU = \nL^{\beta} \) for some \( \beta \ge 1 \). For any given \( \varepsilon > 0 \), we choose the moment parameter \( m \) sufficiently large such that
    \[
        2 \beta \nu = 2 \beta \LE( \frac{1}{m} \cdot \frac{2 \iota}{2 \iota + 1} \RI) < \varepsilon.
    \]
    Under this choice, the sample size condition \eqref{eq: sample size condition 2} required by \thmref{thm: importance weighting} simplifies to
    \begin{align*}
        \nU^{\frac{\iota}{2 \iota + 1} \cdot \min \family{ 1, \frac{2}{2 r - 1} } - 2 \nu}
         & \ge \nL^{\beta \frac{\iota}{2 \iota + 1} \cdot \min \family{ 1, \frac{2}{2 r - 1} } - \varepsilon}                                                              \\
         & \ge \frac{2 \kappa^{2}}{\min \family{ \norm{ \LT }, 1 }} \LE( (1 - \alpha)^{-3/2} \varDelta_{\phi} + \varXi_{m}^{1/2} \RI) \log \frac{36}{\delta} \cdot \nL^{s}
    \end{align*}
    and
    \begin{align*}
        \nL^{\frac{1}{2 r + 1} - s}
         & \ge 14 \kappa^{2} \LE( \log \frac{24 \kappa^{2} (\norm{ \LT } + 2)}{\norm{ \LT } \delta} + 1 + \frac{1}{2 r} \RI) \nL^{\varepsilon} \\
         & \ge 14 \kappa^{2} \LE( \log \frac{24 \kappa^{2} (\norm{ \LT } + 2)}{\norm{ \LT } \delta} + 1 + \frac{1}{2 r} \RI) \nU^{2 \nu}.
    \end{align*}
    These inequalities are equivalent to the following two requirements:
    \begin{enumerate}
        \item The labeled sample size \( \nL \) is sufficiently large, specifically
              \begin{align*}
                  \nL
                  \ge \max \Bigg\{ & \LE( \frac{2 \kappa^{2}}{\min \family{ \norm{ \LT }, 1 }} \LE( (1 - \alpha)^{-3/2} \varDelta_{\phi} + \varXi_{m}^{1/2} \RI) \log \frac{36}{\delta} \RI)^{\LE. 1 \MID/ \LE( \beta \frac{\iota}{2 \iota + 1} \cdot \min \family{ 1, \frac{2}{2 r - 1} } - s - \varepsilon \RI) \RI.}, \\
                                   & \LE( 14 \kappa^{2} \LE( \log \frac{24 \kappa^{2} (\norm{ \LT } + 2)}{\norm{ \LT } \delta} + 1 + \frac{1}{2 r} \RI) \RI)^{\LE. 1 \MID/ \LE( \frac{1}{2 r + 1} - s - \varepsilon \RI) \RI.} \Bigg\}.
              \end{align*}

        \item The exponent \( s \) governing the regularization parameter \( \lambda = \nL^{-s} \) satisfies
              \[
                  s \le \beta \frac{\iota}{2 \iota + 1} \cdot \min \family{ 1, \frac{2}{2 r - 1} } - \varepsilon,
                  \quad s \le \frac{2}{2 r - 1} - \varepsilon.
              \]
    \end{enumerate}

    Consequently, when \( 1 \le \beta < 1 + 1 / (2 \iota) \), we set
    \[
        s = \LE\{
        \begin{alignedat}{2}
             & \beta \frac{\iota}{2 \iota + 1} - \varepsilon,
             &                                                                                           & \quad \frac{1}{2} \le r < \frac{1}{2} + \LE( \frac{1 + 1 / (2 \iota)}{\beta} - 1 \RI);                                                        \\
             & \frac{1}{2 r + 1} - \varepsilon,
             &                                                                                           & \quad \frac{1}{2} + \LE( \frac{1 + 1 / (2 \iota)}{\beta} - 1 \RI) \le r \le \frac{1}{2} + \LE( \frac{1 + 1 / (2 \iota)}{\beta} - 1 \RI)^{-1}; \\
             & \beta \frac{\iota}{2 \iota + 1} \cdot \min \family{ 1, \frac{2}{2 r - 1} } - \varepsilon,
             &                                                                                           & \quad r > \frac{1}{2} + \LE( \frac{1 + 1 / (2 \iota)}{\beta} - 1 \RI)^{-1}.
        \end{alignedat}
        \RI.
    \]
    When \( \beta \ge 1 + 1 / (2 \iota) \), we set
    \[
        s = \frac{1}{2 r + 1} - \varepsilon,
        \quad \forall r \ge \frac{1}{2}.
    \]
    With these choices, all preceding requirements are satisfied. Applying \thmref{thm: importance weighting} then yields the following convergence rates:
    \[
        \norm{ \frho - \hflam }_{\rhoTX} = O \LE( \lambda^{r} \log \frac{36}{\delta} \RI),
        \quad \norm{ \frho - \hflam }_{\calH} = O \LE( \lambda^{r - \frac{1}{2}} \log \frac{36}{\delta} \RI).
    \]
    This completes the proof.
\end{proofof}

%% file: text/5_acknowledgment.tex
\phantomsection
\addcontentsline{toc}{section}{Acknowledgments}

\section*{Acknowledgments}

The authors would like to acknowledge the financial support from the relevant funding agencies. Ren-Rui Liu and Zheng-Chu Guo were partially supported by the National Natural Science Foundation of China (Project No.\ 12271473). Jun Fan was partially supported by the Research Grants Council of Hong Kong (Project Nos.\ HKBU12302923 and HKBU12303024) and the Guangdong and Hong Kong Universities ``1+1+1'' Joint Research Collaboration Scheme (Project No. 2025A0505000007). Lei Shi was partially supported by the National Natural Science Foundation of China (Project No.\ 12571099).

%% file: text/6_lemma.tex
\phantomsection \label{sec: lemma}
\addcontentsline{toc}{section}{Appendix}

\section*{Appendix}

This appendix presents auxiliary lemmas referenced in \secref{sec: proof}. We begin by introducing a concentration inequality for Hilbert-Schmidt operators, which serves as a fundamental tool in our analysis.
\begin{lemma}
    \label{lem: concentration operator}
    Let \( A_{1}, \dots, A_{n} \) be a sequence of i.i.d. self-adjoint Hilbert-Schmidt operators on a separable Hilbert space \( \mathscr{H} \). Assume that \( \EE{ A_{1} } = 0 \) and \( \norm{ A_{1} }_{\mathscr{H} \to \mathscr{H}} \le U \) almost surely for some \( U > 0 \), where \( \norm{ \cdot }_{\mathscr{H} \to \mathscr{H}} \) denotes the operator norm on \( \mathscr{H} \). Let \( V \) be a positive trace-class operator such that \( \EE{ A_{1}^{2} } \preceq V \). Then for any \( \delta \in (0, 1) \), with probability at least \( 1 - \delta \), we have
    \[
        \norm{ \frac{1}{n} \sum_{i=1}^{n} A_{i} }_{\mathscr{H} \to \mathscr{H}}
        \le \frac{2 U h(V)}{3 n} + \sqrt{\frac{2 \norm{ V }_{\mathscr{H} \to \mathscr{H}} h(V)}{n}},
    \]
    where
    \[
        h(V) = \log \frac{4 \Tr{ V }}{\norm{ V }_{\mathscr{H} \to \mathscr{H}} \delta}.
    \]
\end{lemma}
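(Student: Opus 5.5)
This is a Bernstein-type inequality for sums of i.i.d. self-adjoint operators with an intrinsic-dimension factor, the infinite-dimensional analogue of Tropp's matrix Bernstein inequality in Minsker's form. I would prove it by the Laplace-transform (matrix-MGF) method, with the trace exponential replaced by a trace of a convex function that vanishes at the origin, so that the dimension factor becomes \( \Tr{V} / \norm{V} \).

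\textbf{Step 1: reduce to a one-sided bound.} Set \( S = \sum_{i=1}^{n} A_{i} \). It suffices to bound \( \lambda_{\max}(S) \ge t \) with probability \( \delta/2 \) and apply the same bound to \( -A_{i} \); this accounts for the factor \( 4 \) rather than \( 2 \) inside the logarithm. For \( \theta > 0 \), use \( \psi(u) = \ee^{u} - u - 1 \), which is nonnegative, convex, increasing on \( [0, \infty) \), and satisfies \( \psi(0) = 0 \). Since \( \psi \) is monotone on the positive half-line,
\[
\ind{\lambda_{\max}(S) \ge t} \le \frac{\Tr{ \psi(\theta S) }}{\psi(\theta t)}.
\]
The trace is finite because \( \psi(\theta S) \) is trace class: \( S \) is Hilbert–Schmidt and \( \psi(u) = O(u^{2}) \) near zero. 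Taking expectations gives \( \PP{ \lambda_{\max}(S) \ge t } \le \EE{ \Tr{ \psi(\theta S) } } / \psi(\theta t) \).

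\textbf{Step 2: bound \( \EE{ \Tr{\psi(\theta S)} } \) by subadditivity of the cumulant generating function.} Use Lieb's concavity theorem, iterated over the independent summands as in Tropp. This requires a finite-dimensional truncation: project onto growing finite-rank subspaces spanned by eigenvectors of \( V \), or note that all \( A_{i} \) lie in a separable space, and pass to the limit using continuity of the trace. Combine it with the scalar bound \( \ee^{\theta a} \preceq I + \theta a + g(\theta) a^{2} \) for \( \norm{a} \le U \), where \( g(\theta) = \theta^{2} / (2(1 - \theta U/3)) \). This yields
\[
\EE{ \Tr{ \psi(\theta S) } } \le \Tr{ \psi\bigl( n g(\theta) V \bigr) }.
\]
Then apply \( \psi(u) \le u\, \ee^{u} \) together with monotonicity to get
\[
\Tr{ \psi(n g V) } \le \frac{\Tr{V}}{\norm{V}} \, \ee^{n g(\theta) \norm{V}}.
\]
The intrinsic dimension \( \Tr{V}/\norm{V} \) appears exactly at this point.

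\textbf{Step 3: optimize over \( \theta \).} Use \( \psi(\theta t) \ge \ee^{\theta t}/2 \) when \( \theta t \ge 2 \) (or handle the small-\( t \) regime directly, since then the bound is trivial). With the standard choice \( \theta = t / (n \norm{V} + U t / 3) \), the tail becomes
\[
\PP{ \norm{S} \ge t } \le \frac{4 \Tr{V}}{\norm{V}} \exp\LE( -\frac{t^{2}/2}{n \norm{V} + U t/3} \RI).
\]
Set this equal to \( \delta \) and solve the quadratic in \( t \). Using \( \sqrt{a+b} \le \sqrt{a} + \sqrt{b} \), the solution is at most \( \frac{2}{3} U h(V) + \sqrt{2 n \norm{V} h(V)} \). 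Dividing by \( n \) gives the claimed bound.

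\textbf{Main obstacle.} The delicate step is the infinite-dimensional transfer of Lieb's theorem and the trace manipulation with \( \psi \) in place of the exponential. I would handle it first by proving the statement for finite-rank compressions \( P_{k} A_{i} P_{k} \), with \( P_{k} \uparrow I \) the spectral projections of \( V \). These compressions satisfy the same hypotheses with \( P_{k} V P_{k} \), whose trace and norm are dominated by those of \( V \). Then let \( k \to \infty \): \( \norm{ P_{k} S P_{k} } \to \norm{S} \) almost surely because \( S \) is compact. Alternatively, I would simply cite Minsker's inequality (Tropp, \emph{An Introduction to Matrix Concentration Inequalities}, Sec.~7; Minsker 2017) in the form used by Rudi and coauthors, since the lemma is stated there essentially verbatim.
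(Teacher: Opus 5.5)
Your Step~3 is exactly the paper's proof. The paper takes the two-sided tail \( 4d \exp\{-t^{2}/(2(v + Ut/3))\} \), with \( v = n\norm{V} \) and \( d = \Tr{V}/\norm{V} \), as a black box from Guo et al.\ (2019, Lemma~5.2, which is Tropp's bound extended by Minsker). It then solves for \( t \) and bounds the root by \( \frac{2}{3}U h(V) + \sqrt{2 v h(V)} \). Your fallback of citing the Rudi--Rosasco form is the same route.

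The part you add, re-deriving that tail, fails as written: Step~2 contains two wrong steps.

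\emph{First error.} The inequality \( \EE{\Tr{\psi(\theta S)}} \le \Tr{\psi(n g(\theta) V)} \) is false. Take \( \mathscr{H} = \bbR \), \( n = 1 \), \( A_{1} = \pm 1 \) with equal probability, \( U = V = 1 \), \( \theta = 1 \). The left side is \( \cosh 1 - 1 \approx 0.543 \), while \( \psi(3/4) \approx 0.367 \); as \( \theta \to 0 \) the two sides are of order \( \theta^{2} \) and \( \theta^{4} \). After your finite-rank compression, where \( \EE{\Tr{S}} = 0 \), Lieb's theorem and the Bernstein MGF bound actually give \( \EE{\Tr{\psi(\theta S)}} = \EE{\Tr{\ee^{\theta S}}} - \Tr{I} \le \Tr{\ee^{n g(\theta) V} - I} \).

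\emph{Second error.} The inequality \( \psi(u) \le u \ee^{u} \) does not produce the intrinsic dimension. Summed over the eigenvalues of \( V \), it gives \( n g(\theta) \Tr{V} \ee^{n g(\theta) \norm{V}} \). That is your claimed bound multiplied by \( n g(\theta) \norm{V} = \theta t/2 \) (for \( \theta = t/(v+Ut/3) \)). This factor is unbounded and, in the relevant range, of the size of \( h(V) \), so the stated \( h(V) \) would not come out.

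\emph{The missing ingredient} is the intrinsic-dimension lemma. The function \( \varphi(a) = \ee^{n g(\theta) a} - 1 \) is convex with \( \varphi(0) = 0 \), so \( \varphi(a) \le (a/\norm{V})\,\varphi(\norm{V}) \) on \( [0, \norm{V}] \). Hence \( \Tr{\varphi(V)} \le (\Tr{V}/\norm{V})(\ee^{n g(\theta)\norm{V}} - 1) \). Because \( a \mapsto \varphi(a)/a \) is increasing, the same bound holds for every compression \( P_{k} V P_{k} \).

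\emph{A smaller point on the compression.} Your spectral projections of \( V \) increase to \( I \) only if \( \ker V = \{0\} \). Otherwise you first need that \( \EE{\norm{A_{1} f}^{2}} \le \inner{V f, f} = 0 \) for \( f \in \ker V \), which forces \( A_{1} \) to vanish on \( \ker V \) almost surely.

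With these repairs your argument is a correct, self-contained alternative to the paper's citation, and it is slightly cleaner at the end. Since \( n g(\theta)\norm{V} = \theta t/2 \), your estimate \( \psi(\theta t) \ge \ee^{\theta t}/2 \) for \( \theta t \ge 2 \) gives \( 2d\,\ee^{-\theta t/2} \) per side. For \( \theta t < 2 \) the bound \( 4d\,\ee^{-\theta t/2} \) exceeds \( 4/\ee > 1 \) and is trivial. So the two-sided tail with constant \( 4d \) holds for all \( t > 0 \). The side condition \( t \ge \sqrt{v} + U/3 \), which the paper must verify via \( h(V) \ge 1 \), never arises.
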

\begin{proof}
    This result is a restatement of \citet[Lemma~5.2]{Guo2019OptimalRC}, which itself generalizes \citet[Theorem~7.7.1]{Tropp2015IntroductionMC} using the arguments from \citet{Minsker2017SomeEB}. Setting \( \xi_{i} = A_{i} \), \( L = U \), \( v = n \norm{ V }_{\mathscr{H} \to \mathscr{H}} \), and \( d = \Tr{ V } / \norm{ V }_{\mathscr{H} \to \mathscr{H}} \) in that lemma yields
    \[
        \norm{ \frac{1}{n} \sum_{i=1}^{n} A_{i} }_{\mathscr{H} \to \mathscr{H}}
        \le \frac{t}{n}
    \]
    for any \( t \ge v^{1/2} + L / 3 \), with probability at least
    \[
        1 - 4d \exp \family{ -\frac{t^{2}}{2(v + L t / 3)} }.
    \]
    Setting this probability equal to \( 1 - \delta \) and solving for \( t \) gives
    \[
        t = \frac{U h(V)}{3} + \sqrt{2 n \norm{ V }_{\mathscr{H} \to \mathscr{H}} h(V) + \frac{U^{2} h^{2}(V)}{9} },
        \quad h(V) = \log \frac{4 \Tr{ V }}{\norm{ V }_{\mathscr{H} \to \mathscr{H}} \delta}.
    \]
    Since \( h(V) \ge 1 \) follows from the inequality \( \Tr{ V } \ge \norm{ V }_{\mathscr{H} \to \mathscr{H}} \), the condition \( t \ge v^{1/2} + L / 3 \) is always satisfied. Substituting this expression for \( t \) into the operator norm bound yields
    \[
        \norm{ \frac{1}{n} \sum_{i=1}^{n} A_{i} }_{\mathscr{H} \to \mathscr{H}}
        \le \frac{t}{n} \le \frac{2 U h(V)}{3 n} + \sqrt{\frac{2 \norm{ V }_{\mathscr{H} \to \mathscr{H}} h(V)}{n}}
    \]
    with probability at least \( 1 - \delta \), which completes the proof.
\end{proof}

We also need the following concentration inequality of random elements in Hilbert spaces, which is an adopted version of \citet[Proposition~2]{Caponnetto2007OptimalRR}.
\begin{lemma}[\citealp{Fischer2020SobolevNL}, Theorem~26]
    \label{lem: concentration vector}
    Let \( \xi \) be a random variable taking values in a separable Hilbert space \( \mathscr{H} \). Suppose that there exist positive constants \( G \) and \( \sigma \) such that
    \[
        \EE{ \norm{ \xi }_{\mathscr{H}}^{\ell} } \le \frac{1}{2} \ell! G^{\ell-2} \sigma^{2},
        \quad \ell = 2, 3, \dots
    \]
    Then, for any i.i.d. sample \( \family{ \xi_{i} }_{i=1}^{n} \) and any \( \delta \in (0, 1) \), with probability at least \( 1 - \delta \),
    \[
        \norm{ \frac{1}{n} \sum_{i=1}^{n} \xi_{i} - \EE{ \xi } }_{\mathscr{H}}
        \le 4 \LE( \frac{G}{n} + \frac{\sigma}{\sqrt{n}} \RI) \log \frac{2}{\delta}.
    \]
\end{lemma}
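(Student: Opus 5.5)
The plan is to derive the bound from the Bernstein-type inequality for sums of independent, centered random vectors in a separable Hilbert space (Pinelis--Sakhanenko, equivalently Yurinsky's inequality). That inequality is the form underlying \citet[Proposition~2]{Caponnetto2007OptimalRR} and \citet[Theorem~26]{Fischer2020SobolevNL}. The proof has three steps: transfer the moment condition from \( \xi \) to the centered variables \( \zeta_{i} = \xi_{i} - \EE{ \xi } \), apply the tail bound, and invert it at confidence level \( \delta \).

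\textbf{Step 1: moments of the centered variables.} For \( \ell \ge 2 \), convexity gives \( \norm{ a + b }^{\ell} \le 2^{\ell - 1} (\norm{ a }^{\ell} + \norm{ b }^{\ell}) \). Jensen's inequality gives \( \norm{ \EE{ \xi } }_{\mathscr{H}}^{\ell} \le \EE{ \norm{ \xi }_{\mathscr{H}}^{\ell} } \). Combining these with the hypothesis,
\[
    \EE{ \norm{ \zeta_{1} }_{\mathscr{H}}^{\ell} } \le 2^{\ell} \EE{ \norm{ \xi }_{\mathscr{H}}^{\ell} } \le \frac{1}{2} \ell! \, (2G)^{\ell - 2} (2\sigma)^{2}.
\]
So the \( \zeta_{i} \) are i.i.d., mean zero, and satisfy the Bernstein moment condition with constants \( G' = 2G \) and \( \sigma' = 2\sigma \).

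\textbf{Step 2: Hilbert-space Bernstein inequality.} Next I would invoke the Pinelis inequality: for every \( t > 0 \),
\[
    \PP{ \norm{ \sum_{i=1}^{n} \zeta_{i} }_{\mathscr{H}} \ge t } \le 2 \exp \LE( -\frac{t^{2}}{2 (n \sigma'^{2} + G' t)} \RI).
\]
If this has to be proved from scratch, it is the main obstacle. The route I would take is to consider the martingale \( S_{k} = \sum_{i \le k} \zeta_{i} \) and bound the moment generating function of the smooth functional \( \cosh(\lambda \norm{ S_{k} }) \). The key fact is that a Hilbert space is \( 2 \)-smooth, which gives a one-step inequality
\[
    \EE{ \cosh(\lambda \norm{ S_{k} }) \mid S_{k-1} } \le \cosh(\lambda \norm{ S_{k-1} }) \, \EE{ \ee^{\lambda \norm{ \zeta_{k} }} - \lambda \norm{ \zeta_{k} } }.
\]
Expanding \( \ee^{u} - u \) in series and using the moment bounds yields \( 1 + \lambda^{2} \sigma'^{2} / (2 (1 - \lambda G')) \) for \( \lambda < 1/G' \). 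I would then iterate over \( k \), apply Markov's inequality, and optimize over \( \lambda \). Since the paper cites this lemma rather than proving it, I would take this step as known.

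\textbf{Step 3: inversion.} Set \( L = \log(2/\delta) \) and solve \( t^{2} = 2L (n \sigma'^{2} + G' t) \). The positive root is
\[
    t = G' L + \sqrt{G'^{2} L^{2} + 2 n \sigma'^{2} L} \le 2 G' L + \sigma' \sqrt{2 n L}.
\]
Dividing by \( n \) and substituting \( G' = 2G \), \( \sigma' = 2\sigma \) gives, with probability at least \( 1 - \delta \),
\[
    \norm{ \frac{1}{n} \sum_{i=1}^{n} \xi_{i} - \EE{ \xi } }_{\mathscr{H}} \le \frac{4 G L}{n} + \frac{2\sqrt{2} \sigma \sqrt{L}}{\sqrt{n}}.
\]
Finally, \( \delta < 1 \) implies \( L > \log 2 > 1/2 \), so \( \sqrt{L} \ge 1/\sqrt{2} \). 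Hence \( 2\sqrt{2} \sqrt{L} \le 4 L \), which gives the stated bound \( 4 (G/n + \sigma/\sqrt{n}) \log(2/\delta) \).
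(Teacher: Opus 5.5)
Your proposal is correct and follows essentially the same route as the paper, which gives no proof of its own but cites \citet[Theorem~26]{Fischer2020SobolevNL} (an adaptation of \citet[Proposition~2]{Caponnetto2007OptimalRR}); that result rests on the same ingredients you use: centering, which replaces \( (G, \sigma) \) by \( (2G, 2\sigma) \), a Pinelis--Sakhanenko-type Bernstein inequality for sums of independent Hilbert-space-valued random variables, and inversion at level \( \delta \). Your final step uses \( \log(2/\delta) > \log 2 > 1/2 \) to absorb \( 2\sqrt{2}\,\sigma\sqrt{\log(2/\delta)/n} \) into \( 4\sigma \log(2/\delta)/\sqrt{n} \), and this correctly recovers the stated constant \( 4 \).
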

\begin{remark}
    If \( \norm{ \xi }_{\mathscr{H}} \le U \) almost surely for some \( U > 0 \), then \lemref{lem: concentration vector} applies with parameters \( G = U \) and \( \sigma = \sqrt{2} U \), leading to
    \[
        \norm{ \frac{1}{n} \sum_{i=1}^{n} \xi_{i} - \EE{ \xi } }_{\mathscr{H}}
        \le 4 \LE( \frac{U}{n} + \frac{\sqrt{2} U}{\sqrt{n}} \RI) \log \frac{2}{\delta}
        \le \frac{10 U}{\sqrt{n}} \log \frac{2}{\delta}
    \]
    with probability at least \( 1 - \delta \).
\end{remark}

Next, \lemref{lem: cordes} (also known as the Cordes inequality) and \lemref{lem: A^c - B^c} provide bounds for powers of linear operators.
\begin{lemma}[\citealp{Cordes1987SpectralTL}, Lemma~5.1]
    \label{lem: cordes}
    Let \( A \) and \( B \) be positive bounded linear operators on a separable Hilbert space \( \mathscr{H} \). For any \( c \in [0, 1] \), the following inequality holds:
    \[
        \norm{ A^{c} B^{c} }_{\mathscr{H} \to \mathscr{H}} \le \norm{ A B }_{\mathscr{H} \to \mathscr{H}}^{c}.
    \]
\end{lemma}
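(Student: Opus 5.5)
The plan is to prove the inequality by showing that \( c \mapsto \log \norm{ A^{c} B^{c} }_{\mathscr{H} \to \mathscr{H}} \) is midpoint convex on \( [0, 1] \), and then to pass from dyadic points to all \( c \in [0, 1] \) by continuity. Throughout I write \( \norm{ \cdot } \) for the operator norm on \( \mathscr{H} \) and \( r(\cdot) \) for the spectral radius.

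\textbf{Reduction to invertible operators.} I first replace \( A, B \) by \( A_{\varepsilon} = A + \varepsilon I \) and \( B_{\varepsilon} = B + \varepsilon I \) with \( \varepsilon > 0 \). These are positive and invertible, so every product \( A_{\varepsilon}^{c} B_{\varepsilon}^{c} \) is nonzero and its logarithm is finite. The map \( t \mapsto t^{c} \) is uniformly continuous on \( [0, \kappa] \) for \( c > 0 \), where \( \kappa \) bounds both operator norms. Hence the continuous functional calculus gives \( A_{\varepsilon}^{c} \to A^{c} \) and \( B_{\varepsilon}^{c} \to B^{c} \) in norm as \( \varepsilon \to 0 \). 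Consequently \( \norm{ A_{\varepsilon}^{c} B_{\varepsilon}^{c} } \to \norm{ A^{c} B^{c} } \) and \( \norm{ A_{\varepsilon} B_{\varepsilon} } \to \norm{ A B } \), so the claim for \( A, B \) follows from the claim for \( A_{\varepsilon}, B_{\varepsilon} \). The case \( c = 0 \) is trivial with the convention \( A^{0} = B^{0} = I \), since both sides equal \( 1 \).

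\textbf{Key step: midpoint convexity.} Assume \( A, B \) are invertible and set \( f(c) = \log \norm{ A^{c} B^{c} } \). For \( s, t \in [0, 1] \) let \( u = (s + t)/2 \). By the \( C^{*} \)-identity, and because \( B^{u} A^{2u} B^{u} \) is positive,
\[
    \norm{ A^{u} B^{u} }^{2} = \norm{ B^{u} A^{2u} B^{u} } = r(B^{u} A^{2u} B^{u}).
\]
The spectral radius satisfies \( r(XY) = r(YX) \), so
\[
    r(B^{u} A^{2u} B^{u}) = r(A^{2u} B^{2u}) = r(A^{s} A^{t} B^{t} B^{s}) = r(A^{t} B^{t} B^{s} A^{s}).
\]
Using \( r(\cdot) \le \norm{ \cdot } \), submultiplicativity, and \( \norm{ B^{s} A^{s} } = \norm{ (A^{s} B^{s})^{*} } = \norm{ A^{s} B^{s} } \), this is at most \( \norm{ A^{t} B^{t} } \, \norm{ A^{s} B^{s} } \). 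Therefore \( f(u) \le \frac{1}{2} (f(s) + f(t)) \).

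\textbf{Extension to all \( c \).} Since \( f(0) = 0 \) and \( f(1) = \log \norm{ AB } \), induction on dyadic levels gives \( f(c) \le c f(1) \) for all dyadic rationals \( c \in [0, 1] \). In the invertible case the maps \( c \mapsto A^{c} \) and \( c \mapsto B^{c} \) are norm-continuous on \( [0, 1] \), because \( t^{c} = e^{c \log t} \) is jointly continuous on the spectrum, which is bounded away from zero. Hence \( f \) is continuous, and the inequality extends from the dyadic rationals to all \( c \in [0, 1] \). Exponentiating gives \( \norm{ A^{c} B^{c} } \le \norm{ A B }^{c} \).

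\textbf{Expected obstacle.} The one nontrivial point is the identity \( r(XY) = r(YX) \), applied to the non-self-adjoint product \( A^{2u} B^{2u} \), together with the regrouping that yields a product of \( A^{t} B^{t} \) and \( B^{s} A^{s} \). The remaining delicate step is the degenerate case, where \( AB \) may be zero or \( A^{0} \) is ill-behaved at the spectral point \( 0 \). The \( \varepsilon \)-perturbation handles this cleanly, which is why I perform it first rather than trying to treat the logarithm of a zero norm directly.
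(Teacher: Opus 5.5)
Your proof is correct. The paper does not prove this lemma; it imports it from Cordes's monograph (Lemma~5.1) by citation, so your argument serves as a self-contained replacement for that citation rather than as an alternative to a proof in the paper.

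What you give is the standard proof of the Cordes inequality:
\begin{itemize}
\item The $C^{*}$-identity together with positivity turns $\norm{A^{u}B^{u}}^{2}$ into the spectral radius $r(A^{s+t}B^{s+t})$.
\item The identity $r(XY)=r(YX)$ regroups this as $r(A^{t}B^{t}\,B^{s}A^{s})\le\norm{A^{t}B^{t}}\,\norm{A^{s}B^{s}}$.
\item Density of the dyadic rationals and norm continuity of $c\mapsto A^{c}$ then give the result for all $c\in[0,1]$.
\end{itemize}
I checked each step. This includes the regularization, since for $c\in(0,1]$ one has $\norm{(A+\varepsilon I)^{c}-A^{c}}\le\sup_{t\in[0,\norm{A}]}|(t+\varepsilon)^{c}-t^{c}|\to 0$. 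It also includes the degenerate case $AB=0$, which you correctly recover in the limit. The citation buys brevity. Your route buys an explicit check that the inequality holds for bounded positive operators on an infinite-dimensional $\mathscr{H}$, which is the setting where the paper uses it.

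Two simplifications are available.
\begin{itemize}
\item The $\varepsilon$-perturbation is optional. You can run your midpoint step directly on the set $\{c\in[0,1]:\norm{A^{c}B^{c}}\le\norm{AB}^{c}\}$, which needs no logarithms. Moreover, $c\mapsto A^{c}$ is already norm continuous on $(0,1]$ even for singular $A$, because $(c,t)\mapsto t^{c}$ is uniformly continuous on $[c_{0}/2,1]\times[0,\norm{A}]$. So this set is closed in $(0,1]$.
\item In the paper's only use of the lemma (\proref{pro: Q2}, with $A=\LTlam$, $B=\LWlam^{-1}$ and $c=1/2$), both operators are invertible, since $\LW\succeq 0$ because $\htheta\ge 0$. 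Your invertible case therefore already covers everything the paper needs.
\end{itemize}
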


\begin{lemma}[\citealp{Blanchard2010OptimalLR}, Lemma~E.3]
    \label{lem: A^c - B^c}
    Let \( A \) and \( B \) be positive self-adjoint operators on a separable Hilbert space \( \mathscr{H} \) such that \( \max \family{ \norm{ A }_{\mathscr{H} \to \mathscr{H}}, \norm{ B }_{\mathscr{H} \to \mathscr{H}} } \le U \). Then, for any \( c \ge 0 \),
    \[
        \norm{ A^{c} - B^{c} }_{\mathscr{H} \to \mathscr{H}} \le
        \begin{cases}
            \norm{ A - B }_{\mathscr{H} \to \mathscr{H}}^{c},       & c \le 1;
            \smallskip                                                         \\
            c U^{c-1} \norm{ A - B }_{\mathscr{H} \to \mathscr{H}}, & c > 1.
        \end{cases}
    \]
\end{lemma}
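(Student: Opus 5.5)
The statement has two regimes, and I would treat them separately. Throughout, put \( \varepsilon = \norm{ A - B }_{\mathscr{H} \to \mathscr{H}} \). Both \( A \) and \( B \) have spectrum in \( [0, U] \), so all functional-calculus statements below only involve \( t \in [0, U] \).

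\emph{Case \( c \le 1 \).} The plan is a Löwner order sandwich.
\begin{enumerate}
\item Since \( A - B \) is self-adjoint with norm \( \varepsilon \), we have \( A \preceq B + \varepsilon I \).
\item For \( c \in [0, 1] \) the map \( t \mapsto t^{c} \) is operator monotone (Löwner--Heinz). Hence \( A^{c} \preceq (B + \varepsilon I)^{c} \).
\item The operators \( B \) and \( \varepsilon I \) commute. So the scalar subadditivity \( (b + \varepsilon)^{c} \le b^{c} + \varepsilon^{c} \) for \( b \ge 0 \) transfers through the functional calculus of \( B \). This gives \( (B + \varepsilon I)^{c} \preceq B^{c} + \varepsilon^{c} I \).
\item Exchanging the roles of \( A \) and \( B \) gives \( -\varepsilon^{c} I \preceq A^{c} - B^{c} \preceq \varepsilon^{c} I \).
\end{enumerate}
For a self-adjoint operator this two-sided bound is exactly \( \norm{ A^{c} - B^{c} } \le \varepsilon^{c} \). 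This case is routine.

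\emph{Case \( c > 1 \).} The target is a Lipschitz bound with constant \( c U^{c-1} \), the sup of the scalar derivative on \( [0, U] \).

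First I handle integer exponents \( c = n \) by telescoping:
\[
A^{n} - B^{n} = \sum_{j=0}^{n-1} A^{j} (A - B) B^{n-1-j}.
\]
Each summand has norm at most \( U^{n-1} \varepsilon \), so the total is at most \( n U^{n-1} \varepsilon \).

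For general \( c \), write \( c = n + d \) with \( n \ge 1 \) an integer and \( d \in (0, 1) \). I would reduce to exponents in \( (1, 2) \) via
\[
A^{c} - B^{c} = A^{n-1} (A^{1+d} - B^{1+d}) + (A^{n-1} - B^{n-1}) B^{1+d}.
\]
This gives \( \norm{ A^{c} - B^{c} } \le U^{n-1} L_{1+d}\, \varepsilon + (n-1) U^{n-2} U^{1+d} \varepsilon \). Here \( L_{1+d} \) denotes the Lipschitz constant for the exponent \( 1 + d \). If \( L_{1+d} \le (1 + d) U^{d} \), the total is \( (n + d) U^{c-1} \varepsilon = c U^{c-1} \varepsilon \), as required.

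\emph{Main obstacle: the exponent \( 1 + d \) with \( d \in (0, 1) \).} I would use the Stieltjes representation
\[
t^{1+d} = \frac{\sin \pi d}{\pi} \int_{0}^{\infty} s^{d-1} \frac{t^{2}}{t + s} \, \dd s ,
\]
together with the identity \( \frac{t^{2}}{t + s} = t - s + \frac{s^{2}}{t + s} \). With \( R_{A}(s) = (A + s)^{-1} \) and \( R_{B}(s) = (B + s)^{-1} \), and using \( R_{A} - R_{B} = -R_{A}(A - B)R_{B} \), the integrand difference becomes
\[
(A - B) - s^{2} R_{A}(s) (A - B) R_{B}(s) .
\]
The first step is to bound the norm of this expression by \( \varepsilon \) times a scalar kernel. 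The candidate kernel is the one obtained when \( A \) and \( B \) commute, namely \( 1 - s^{2}/(a+s)(b+s) \) in the eigenvalues. Integrating that kernel against \( s^{d-1} \) reproduces the scalar divided difference, which is at most \( (1+d)U^{d} \).

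The difficulty is that \( A \) and \( B \) do not commute. It is not clear that the norm of the integrand is dominated by the commuting kernel, so naive triangle-inequality estimates may lose the sharp constant. If that happens, I would settle for a constant \( C_{c}\, U^{c-1} \) depending only on \( c \), which is all the applications in \proref{pro: P4} and \proref{pro: Q5} actually need. For the exact constant \( c U^{c-1} \) I would defer to \citet[Lemma~E.3]{Blanchard2010OptimalLR}.
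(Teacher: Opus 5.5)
Your argument for $c \le 1$ (L\"owner--Heinz plus $(b+\varepsilon)^{c} \le b^{c}+\varepsilon^{c}$) is correct. So are the telescoping bound for integer $c$ and the reduction from $c = n+d$ to the exponent $1+d$. The gap is exactly the step you flag, and no argument can close it. Deferring to \citet{Blanchard2010OptimalLR} does not help either; the paper gives only that citation. The reason is that the constant $cU^{c-1}$ is false for non-integer $c$ near $1$.

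Here is a counterexample. Take $U=1$, $A=\operatorname{diag}(0,1)$ and $B = A+\varepsilon H$ with
\[
    H = \begin{pmatrix} \cos\vartheta & \sin\vartheta \\ \sin\vartheta & -\cos\vartheta \end{pmatrix}, \quad \cos\vartheta > 0 .
\]
For $0<\varepsilon<\cos\vartheta$ one has $0 \preceq B$, $\norm{B}<1$ and $\norm{A-B}=\varepsilon$. The function $f(t)=t_{+}^{c}$ is $C^{1}$ on $\mathbb{R}$, with $f'(0)=0$, divided difference $1$ at $(0,1)$, and $f'(1)=c$. The Daleckii--Krein formula therefore gives
\[
    B^{c}-A^{c} = \varepsilon\bigl(\begin{smallmatrix} 0 & \sin\vartheta \\ \sin\vartheta & -c\cos\vartheta \end{smallmatrix}\bigr) + o(\varepsilon).
\]
Its norm is $\frac{\varepsilon}{2}\bigl(c\cos\vartheta+\sqrt{c^{2}\cos^{2}\vartheta+4\sin^{2}\vartheta}\bigr)+o(\varepsilon)$. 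At $\vartheta=0$ the main term equals $c\varepsilon$. Its derivative with respect to $\cos\vartheta$ at $\cos\vartheta=1$ is $\varepsilon(c^{2}-2)/c$, which is negative for $c<\sqrt{2}$. Hence for every $c\in(1,\sqrt{2})$ one gets $\norm{A^{c}-B^{c}}>c\norm{A-B}$ once $\varepsilon$ is small.

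Concretely, $c=1.1$ with $\cos^{2}\vartheta\approx 0.434$ gives a limiting ratio of about $1.197$. A direct computation at $\varepsilon=10^{-3}$ with the same $\vartheta$ already gives about $1.119>1.1$. Rescaling $A,B\mapsto UA,UB$ transfers the counterexample to every $U$. So your suspicion is right: for non-commuting $A,B$, the integrand is not dominated by the commuting kernel, and the lemma as stated fails.

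Your fallback is the right repair, and your own Stieltjes representation delivers it in one line. Write $X=A-B$ and use $sR_{A}(s)=I-AR_{A}(s)$. Then
\[
    X - s^{2}R_{A}(s)\,X\,R_{B}(s) = AR_{A}(s)\,X + sR_{A}(s)\,X\,BR_{B}(s),
\]
whose norm is at most $2U\varepsilon/(U+s)$. Integrating against $\frac{\sin\pi d}{\pi}s^{d-1}$ gives $\norm{A^{1+d}-B^{1+d}} \le 2U^{d}\norm{A-B}$. Feeding this into your reduction yields, for all $c>1$,
\[
    \norm{A^{c}-B^{c}} \le (\lfloor c\rfloor+1)U^{c-1}\norm{A-B} \le 2cU^{c-1}\norm{A-B}.
\]
The sharp constant $cU^{c-1}$ survives only for integer $c$. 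This corrected lemma is all that \proref{pro: P4} and \proref{pro: Q5} actually use; only the numerical constants in $\varDelta_{\phi}$ and $\varDelta_{f}$ change.
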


The following lemma is also crucial. Its proof relies on elementary calculations and is detailed in \citet[Lemma~A.6]{Liu2025SpectralAM}.
\begin{lemma}
    \label{lem: sup fraction}
    For any \( \lambda > 0 \) and \( c \in [0, 1] \),
    \[
        \sup_{t \ge 0} \frac{t^{c}}{t + \lambda} \le \lambda^{c-1}.
    \]
\end{lemma}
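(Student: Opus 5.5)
The statement to prove is the last lemma, \lemref{lem: sup fraction}: for \( \lambda > 0 \) and \( c \in [0, 1] \), \( \sup_{t \ge 0} t^{c}/(t + \lambda) \le \lambda^{c-1} \). I would treat it as an elementary one-variable inequality and avoid calculus by using a weighted AM--GM bound on the denominator.

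\textbf{Endpoint cases.} First dispose of \( c = 0 \) and \( c = 1 \) directly:
\[
    \frac{1}{t + \lambda} \le \lambda^{-1}, \qquad \frac{t}{t + \lambda} \le 1 = \lambda^{0}.
\]
Also at \( t = 0 \) with \( c > 0 \) the ratio is \( 0 \), so nothing needs checking there.

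\textbf{Main case \( c \in (0, 1) \), \( t > 0 \).} I would apply weighted AM--GM with weights \( c \) and \( 1 - c \):
\[
    t + \lambda = c \cdot \frac{t}{c} + (1 - c) \cdot \frac{\lambda}{1 - c} \ge \Big(\frac{t}{c}\Big)^{c} \Big(\frac{\lambda}{1 - c}\Big)^{1 - c}.
\]
This gives
\[
    \frac{t^{c}}{t + \lambda} \le c^{c} (1 - c)^{1 - c} \lambda^{c - 1}.
\]
It then remains to note that \( c^{c} (1 - c)^{1 - c} \le 1 \), since both factors lie in \( (0, 1] \). That yields the claim.

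\textbf{Alternative route.} One could instead maximize \( t^{c}/(t + \lambda) \) in \( t \): the derivative vanishes at \( t^{*} = c\lambda/(1 - c) \), and substituting gives exactly the same constant \( c^{c}(1 - c)^{1 - c}\lambda^{c-1} \). I would mention this only as a check. There is no real obstacle here; the only point needing care is handling the endpoints separately, so that AM--GM is never applied with a zero weight.
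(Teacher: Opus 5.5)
Your proof is correct. The weighted AM--GM step is valid for \( c \in (0,1) \), the endpoint cases \( c = 0 \) and \( c = 1 \) are handled properly, and \( c^{c}(1-c)^{1-c} \le 1 \) closes the argument.

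The paper gives no argument in the text. It only defers to \citet[Lemma~A.6]{Liu2025SpectralAM} as an elementary calculation, so there is no in-paper route to compare against in detail. Your route has one bonus: it identifies the exact supremum \( c^{c}(1-c)^{1-c}\lambda^{c-1} \), with equality in AM--GM at \( t = c\lambda/(1-c) \). This is sharper than anything the paper needs, since the lemma is only used, in the bound on \( Q_{2} \), with the constant \( 1 \).

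If you only want the stated bound, a shorter factorization avoids both AM--GM and the endpoint case split:
\[
    \frac{t^{c}}{t+\lambda} = \left( \frac{t}{t+\lambda} \right)^{c} (t+\lambda)^{c-1} \le \lambda^{c-1}.
\]
It uses \( t/(t+\lambda) \le 1 \) together with \( c \ge 0 \), and \( t+\lambda \ge \lambda \) together with \( c-1 \le 0 \). This holds uniformly for all \( c \in [0,1] \), with the convention \( 0^{0} = 1 \) already implicit in \( t^{c} \) at \( c = 0 \). What it gives up is the sharp constant; what it gains is a one-line proof with no special cases.
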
